\newlist{inlinelist}{itemize}{1}
\setlist[inlinelist]{label=•, itemjoin={{; }}, itemjoin*={{; and }}, before=\unskip{}, after=\unskip{}, nosep, leftmargin=*, align=parleft}
\theoremstyle{plain}
\newtheorem{theorem}{Theorem}[section]
\newtheorem{proposition}[theorem]{Proposition}
\newtheorem{lemma}[theorem]{Lemma}
\newtheorem{corollary}[theorem]{Corollary}
\theoremstyle{definition}
\newtheorem{definition}[theorem]{Definition}
\theoremstyle{remark}
\newtheorem{remark}[theorem]{Remark}
\newcommand{\R}[0]{\mathbb{R}}
\newcommand{\Prob}[0]{\mathbb{P}}
\newcommand{\eqdef}{\vcentcolon=}
\newcommand\mech[1]{{#1}}
\newcommand\vect[1]{\mathbf{#1}}
\newcommand\p[1]{\left( {#1}\right)}
\newcommand\set[1]{\mathcal{#1}}
\newcommand\proj[0]{\operatorname{Proj}}
\title{Learning with Differentially Private (Sliced) Wasserstein Gradients}
\author{%
  David Rodríguez-Vítores  \\
  Universidad de Valladolid and IMUVA \\
  \texttt{david.rodriguez.vitores@uva.es}\\
  \And
  Clément Lalanne \\
  Institut de Mathématiques de Toulouse\\
  \texttt{clement.lalanne@math.univ-toulouse.fr}\\
  \And
  Jean-Michel Loubes \\
  Université de Toulouse\\
  ANITI \& Regalia INRIA \\
  \texttt{jean-michel.a.loubes@inria.fr}\\
  }
\begin{document}

\maketitle

\begin{abstract}

In this work, we introduce a novel framework for privately optimizing objectives that depend on sliced Wasserstein distances between data-dependent empirical measures. Our main theoretical contribution is a non-trivial analysis of the sensitivity of the Wasserstein gradients to individual data points, derived from an explicit formulation of the gradient in a fully discrete setting. This enables strong privacy guarantees with minimal utility loss.
We demonstrate that standard privacy accounting methods naturally extend to Wasserstein-based objectives, allowing for large-scale private training. This supports a wide range of private machine learning applications involving distribution matching under privacy constraints on the source, the target, or both. These include: (i) an in-processing method for fairness mitigation using a private Wasserstein penalty, and (ii) what we believe is the first approach for training private sliced Wasserstein autoencoders. We validate our framework through experiments showing its ability to effectively balance privacy and utility, offering a theoretically grounded approach to privacy-preserving machine learning with sliced Wasserstein losses.
\end{abstract}

\section{Introduction}
\label{sec:Introduction}
Privacy has emerged as a critical consideration in machine learning, driven by widespread concerns about data misuse and regulatory frameworks such as GDPR and the AI European Act for instance. In fact, traditional machine learning approaches often require access to sensitive information and participate to data leakage, thus raising serious privacy risks.  In fact, analyzing real user data presents new challenges, particularly in terms of privacy. It is well-established that releasing statistics based on such data, without appropriate safeguards can result in serious privacy breaches \cite{narayanan2006break,backstrom2007wherefore,fredrikson2015model,dinur2003revealing,homer2008resolving,loukides2010disclosure,narayanan2008robust,sweeney2000simple,wagner2018technical,sweeney2002k}.

Differential privacy \cite{dwork2006calibrating}, has become the gold standard for addressing these concerns, providing rigorous guarantees on individual data protection during model training. Differential privacy incorporates randomness into the computation process, ensuring that the estimator relies not only on the dataset, but also on an additional source of randomness. This mechanism obscures the influence of individual data points, safeguarding user privacy. The formal definition of differential privacy, as well as the basic results, and tools surrounding it, was moved to \Cref{sec:DifferentialPrivacy}. To the non-expert reader, it is only useful to know that privacy is tuned by two scalar numbers $\varepsilon, \delta \geq 0$, and that the smaller they are, the stronger the privacy guarantees are. Privacy is usually obtained via noise addition mechanisms, of which the amount of noise usually scales with the \emph{sensitivity} of a given query. Prominent organizations like the US Census Bureau~\citep{abowd2018us}, Google~\citep{erlingsson2014rappor}, Apple~\citep{thakurta2017learning}, and Microsoft~\citep{ding2017collecting} have adopted this approach. Notably, an extended body of literature studies the interplay between privacy and learning / statistics \cite{wasserman2010statistical,barber2014privacy,diakonikolas2015differentially,karwa2017finite,bun2019privatehypothesis,bun2021privatehypothesis,kamath2019highdimensional,biswas2020coinpress,kamath2020heavytailed,acharya2021differentially,lalanne:thesis,adenali2021unbounded,cai2021cost,brown2021covariance,cai2021cost,kamath2022improved,lalanne2023private,lalanne2022private,lalanne2024privatedensity,singhal2023polynomial,kamath2023biasvarianceprivacy,kamath2023new}.
 However, achieving  privacy guarantees without significant hampering the efficiency of the models remains challenging, particularly when dealing with complex distributional objectives such as those based on Wasserstein distances. \\
 
Besides, optimal transport, and in particular the Wasserstein distance, has become increasingly important in machine learning due to its ability to capture meaningful geometric relationships between probability distributions, enabling a wide range of applications from generative modeling to domain adaptation and fairness. Given two measures of probability $P$ and $Q$  in $\mathbb R^d$ equipped with its Borel $\sigma$-algebra, the Wasserstein distance $W_p(P,Q)$ is defined as the $p^{th}$ root of the cost associated to the optimal transport plan, i.e., $W_p(P,Q)= \bigl( \inf_{\pi\in \Pi(P,Q)}\  \int_{\mathbb R^d} \|x-y\|_2^p d\pi(x,y) \bigr)^{1/p}$,
where $\Pi(P,Q)$ represents the set of measures of probability in the product space with marginals $P$ and $Q$. Except mentioned otherwise, this article will look at the specific case of $W_2$. Its straightforward geometric interpretation makes it an effective tool for comparing distributions even when the supports do not align, offering a significant advantage over other widely used metrics and divergences.  Thus, it has been successfully applied in a number of areas, including generative models \cite{arjovski2017WGAN}, representation learning \cite{toltstikhin2018WAE}, domain adaptation \cite{courty2017OTDomainAdapt} and fairness \cite{gordaliza2019obtainingFairness,risser2022tacklingAlgorithm,de2024transport,jiang2020wasserstein,chzhen2020fair,gaucher2023fair}. 

To tackle the curse of dimensionality in its computations, two main alternatives have been explored, namely, approximating the OT cost by an entropic regularization, as proposed in~\cite{cuturi2013sinkhorn}, or leveraging the use the Wasserstein distance between one-dimensional projections.

Indeed, denoting by, for any measure of probability $P$ on $\R$, $F_P$ its cumulative distribution function (CDF) 
and $F_P^{-1}$ its quantile function, which is defined as the generalized inverse of $F_P$,
then the $W_2$ Wasserstein distance satisfies $W_2(P, Q) = \| F_P^{-1} - F_Q^{-1}\|_{L^2}
$
for any measures of probability $P, Q$ on $\R$.
This perspective has inspired a variety of distance surrogates that reduce to one-dimensional projections. In this work, we center our attention on the sliced Wasserstein distance \cite{rabin2011wassersteinBarycenter,bonnel2015slicedRadon}, defined as $SW_2(P,Q)= \bigl(  \int_{\mathbb S^{d-1}} W_2^2(\operatorname{Pr}_\vartheta \# P, \operatorname{Pr}_\vartheta  \# Q) d\mu(\vartheta) \bigr)^{1/2}$,
where $\#$ is the \emph{push forward} operation of a measure by a measurable mapping, $\operatorname{Pr}_\vartheta$ the projection along the direction of $\vartheta$, and $\mu$ denotes the uniform measure on the unit sphere $\mathbb S^{d-1}$. Note that the integral on the sphere may be approximated by Monte Carlo methods.
A substantial body of research has demonstrated the effectiveness of the sliced Wasserstein distance as a discrepancy measure for generative modeling \cite{deshpande2018generativelModelling,wu2019slicedGenerativeModels}, representation learning \cite{Kolouri2018SWAE}, domain adaptation \cite{Chen2019slicedWasserteinDiscrepancy} and fairness \cite{risser2022tacklingAlgorithm}. 

In this work, we address the challenge of  %
optimizing objectives which depend on sliced Wasserstein distances between empirical measures.  Assume that we have samples   $\vect{X} = (x_1,\ldots,x_n)\in \mathcal X^n$, $\vect{Z}=(z_1,\dots,z_m)\in \mathcal Z^m$, and denote by $P_{\vect X}$ and $P_{\vect Z}$ the empirical distributions associated with $\vect X$ and $\vect Z$. Assuming that we have a vector of trainable weights $\theta$ that parametrizes two functions $g_\theta : \mathcal{X} \rightarrow \R^d$ and $h_\theta: \mathcal{Z} \rightarrow \R^d$, we want to optimize for  the quantities
\begin{equation}
    \mathscr L(\theta) \eqdef W_2^2 ( g_\theta \# P_{\vect X}, h_\theta \# P_{\vect Z}), \text{ if } d=1, \quad {\rm or} \quad \mathscr L(\theta) \eqdef SW_2^2 ( g_\theta \# P_{\vect X}, h_\theta \# P_{\vect Z})
\end{equation}  for general functions $g_\theta$ and $h_\theta$,
under differential privacy guarantees. Our main theoretical advance lies in carefully analyzing the \emph{sensitivity} of Wasserstein gradients to individual data points, enabled by deriving explicit gradient formulations within a fully discrete setting.

\subsection{Contributions}

The main contribution of this work is to present a novel sensitivity analysis for privately estimating the gradients of $\mathscr L(\cdot)$, which naturally extends to a general framework for private optimization of problems involving $\mathscr L(\cdot)$. 
This general contribution can be split as follows.

    \textbf{1) A non-trivial and tight sensitivity analysis leading to privacy at low cost.}
    Despite $W_2^2$ not enjoying the typical finite sum structure (e.g. $\text{loss} = \frac{1}{n} \sum_{i = 1}^n \ell( g_\theta(x_i))$), we prove that its gradient has a decomposition that is favorable for privacy analysis and that is compatible with standard autodifferentiation frameworks \cite{tensorflow2015-whitepaper,NEURIPS2019_9015,jax2018github}.
    We prove in \Cref{sec:SensitivityPrivacy} that the sensitivity of this gradient (i.e. how much the gradients are allowed to change when changing on individual's data) roughly vanishes as $\frac{1}{n}$ where $n$ is the sample size. The implications of this observation are that it is possible to leverage classical tools in differential privacy to obtain privacy at a vanishing cost in tasks utilizing those gradients.

    \textbf{2) A deep learning framework.}
    As is often the case with differential privacy, the privacy analysis typically assumes that a prescribed set of data-dependent quantities are bounded. In practice, this is often not the case, and one has to resort to the use of clipping \cite{abadi2016deep}, which leads to biases \cite{kamath2023biasvarianceprivacy} in the estimation procedure. Despite the problem not enjoying a finite-sum structure, we show in \Cref{sec:DeepLearningFramework} that similar tricks are applicable to Wasserstein gradients. In addition to that, \Cref{sec:DeepLearningFramework} also shows that privacy accounting \cite{abadi2016deep,dong2019gaussian} is still applicable to Wasserstein gradients, allowing deep learning and scalable applications.

    \textbf{3) Practical applications to distributional learning under privacy.}  
 Our method is, to the best of our knowledge, the first one that enables learning with a distributional loss while providing differential privacy guarantees for both source and target samples, as detailed in Remark \ref{remark:comparison}. Consequently, our framework supports a wide range  of novel privacy-preserving learning tasks involving distributional matching. We illustrate its relevance through two key applications:
\begin{inlinelist}
    \item \textbf{Private sliced Wasserstein autoencoders. (see Section \ref{section:swae})}  
    We propose the first differentially private procedure for training  sliced Wasserstein autoencoders, enabling scalable and privacy-preserving representation learning, with natural applications to generative modeling.
    \item \textbf{Fairness via private in-processing. (see Section \ref{section:fairness})}  
    Our framework enables the private optimization of sliced Wasserstein distances, including between conditional distributions based on sensitive attributes. This offers a novel private in-processing approach to mitigate bias of Machine Learning algorithms. This privacy-preserving fairness regularization strategy constitutes, to our knowledge, a new contribution to the literature.

\end{inlinelist}

\subsection{Related Work}

Our work directly compares to the work of \cite{rakotomamonjy2021DPslicedWasserstein}, which extends the ideas from \cite{harder2021dp}—originally applied to the Maximum Mean Discrepancy (MMD)—to the sliced Wasserstein loss. Their work establishes privacy guarantees for the value of the sliced Wasserstein distance, based on the use of the post-processing lemma. However, their privacy guarantees are insufficient for training models privately, except in simple scenarios such as the generative model proposed in \cite{harder2021dp}. Indeed, their technique requires private data to be static (as opposed to trainable), which prevents the use of this technique when the Wasserstein loss appears downstream in the learning process. In contrast, our work is significantly broader in scope, and adapts to a wider range of problems, as discussed in \cref{remark:comparison}. A more detailed comparison to \cite{rakotomamonjy2021DPslicedWasserstein} is provided later in the article, both in terms of applicability of the methods and numerical results when both are applicable. \cite{Liu_Yu} follows the same line of work as \cite{rakotomamonjy2021DPslicedWasserstein}, extending their methodology to an alternative definition of the sliced Wasserstein distance.\\

In a different vein, other existing works develop task-specific private methodologies leveraging optimal transport. The sliced Wasserstein distance has been applied in data generation by \cite{segag2023gradientFlow} from a different approach based on gradient flows. \cite{tien2019DPOTdomainAdapt} tackled differentially private domain adaptation with optimal transport by perturbing the optimal coupling between noisy data. Recently, \cite{xian2024DPfairRegression} proposed a post-processing method based on the Wasserstein barycenter of private histogram estimators of conditional densities to obtain a fair and private regressor. The private estimation of optimal transport maps was recently studied in \cite{lalanne:hal-04923578}. Beyond these approaches, optimal transport has also been explored in novel privacy paradigms unrelated to our work \cite{pierquin2024Pufferfish,Kawamoto2019localObfuscation,yang2024wassersteinDP}.

Due to space constraints, the extended related work was moved to \Cref{sec:ExtendedRelatedWork}.

\section{Differential Privacy}
\label{sec:DifferentialPrivacy}

Differential privacy \cite{dwork2006calibrating} starts with fixing a dataset space $\set{D}$, the space in which we expect the dataset to live, and a neighboring relation $\sim$ on $\set{D}$. For $\vect{D}, \vect{\Tilde D} \in \set{D}$, we write $\vect{D} \sim \vect{\Tilde D}$ when $\vect{D}$ and $\vect{\Tilde D}$ are \emph{neighbors} (see below). Differential privacy then imposes that a \emph{randomized} mechanism (i.e. a conditional kernel of probabilities) $\mech{M} : \set{D} \rightarrow \set{O}$ makes 
$\mech{M}(\vect{D})$ hard to discriminate (in a statistical sense) from $\mech{M}(\vect{\Tilde D})$ for any pair of neighbors $\vect{D} \sim \vect{\Tilde D}$.

The neighboring relation $\sim$ is \emph{application specific} and is usually either the \emph{addition / deletion} relation ($\vect{D}$ and $\vect{\Tilde D}$ are neighbors iff one can be obtained from the other by adding or removing the data of one individual from the dataset) or the \emph{replacement} relation ($\vect{D}$ and $\vect{\Tilde D}$ are neighbors iff one can be obtained from the other by changing the data of one individual from either dataset).  In general, it is useful to the reader to understand
$\vect{D} \sim \vect{\Tilde D}$ as : ``The difference between $\vect D$ and $\vect{\Tilde D}$ only comes
from one individual’s data”.

In our paper, due to the splitting of the data into separate categories in the Wasserstein distance, and because of potential asymmetry that may arise in their  treatments, we will occasionally employ modified definitions of neighboring relations, which can be encompassed within the following family, indexed by the number of classes $k\geq 1$. Note that the case $k=1$ reduces to the usual \textit{replacement} relation. 

\begin{definition}{(k-end neighboring relation)} \label{definition:neighboring} Let $\mathcal D  = \mathcal D_1^{n_1} \times \ldots \times \mathcal D_k^{n_k}$ be the set of partitioned datasets with sizes $n_1,\ldots,n_k \geq 1$. Given two datasets $\vect D = (\vect{D^1},\ldots,\vect{D^k})$, $\vect{\Tilde D} =  (\vect{\Tilde D^1},\ldots,\vect{\Tilde D^k}) \in \mathcal D$, we say that $\vect D \sim_k \vect{\Tilde D}$
if there exist and index $j\in[k] \eqdef \{1,\ldots,k\}$ such that $\vect{D^i}= (d^i_1,\ldots,d_{n_i}^i)$ and $\vect{\Tilde D^i}= (\Tilde d^i_1,\ldots,\Tilde d_{n_i}^i)$ coincide up to a
permutation of the elements if $i\neq j$, and up to a permutation and a replacement of one of
the $d_l^i$'s by any element in $\mathcal D_i$ if $i=j$.
\end{definition}

The historic definition of differential privacy \cite{dwork2006calibrating,dwork2006differentialPrivacy,dwork2006our} with $(\varepsilon,\delta)$ reads:

\begin{definition}[{$(\varepsilon, \delta)$-DP \cite{dwork2006our}}]
\label{definitionConcentratedDifferentialPrivacy}
    A randomized mechanism $\mech{M} : \set{D} \rightarrow \set{O}$ is $(\varepsilon, \delta)$-differentially private ($(\varepsilon, \delta)$-DP) if $\forall \ \vect{D}\sim \vect{\Tilde D}$, and $\forall$ measurable $S \subset \set{O}$,
\begin{equation*}
 \Prob \p{M(\vect{D}) \in S} \leq e^{\varepsilon} \Prob \p{M(\vect{\Tilde D}) \in S} + \delta \;,
\end{equation*}
where the randomness is taken on $M$ only.
\end{definition}
A ubiquitous building block for building private mechanisms is the so-called \emph{Gaussian mechanism} which consists in adding independent Gaussian noise to the output of a deterministic mapping. Quantifying the privacy of this (now randomized) mechanism then boils down to controlling the variations of the deterministic mapping on neighboring datasets, as formalized in the following lemma.
\begin{lemma}[Privacy of the Gaussian mechanism (Corollary of Theorem 2.7, Corollary 3.3 and Corollary 2.13 in \cite{dong2019gaussian})]
\label{factProvacyGaussianMechanism}
    Given a deterministic function $h$ mapping a dataset to a quantity in $\R^{d}$, one can define the $l_2$-sensitivity of $h$ as
    \begin{equation*}
\begin{aligned}
    \Delta_2h \eqdef \sup_{\vect{D}\sim \vect{\Tilde D}} \big\| h (\vect{D})  - h (\vect{\Tilde D})\big\|_2 \;.
\end{aligned}
\end{equation*}
When this quantity is finite, for any $\sigma > 0$, the Gaussian mechanism defined as 
$
        \vect{D} \mapsto h(\vect{D}) + \sigma \mathcal{N}(0, I_{d}) \;,
    $
is $(\varepsilon, \delta(\varepsilon))$-DP for any $\varepsilon \geq 0$ where, by noting $\mu = \frac{\Delta_2 h}{\sigma}$, $    \delta(\varepsilon) = \Phi \bigl( - \frac{\varepsilon}{\mu} + \frac{\mu}{2}\bigr) - e^{\varepsilon} \Phi \bigl( - \frac{\varepsilon}{\mu} - \frac{\mu}{2}\bigr)$, where $\Phi$ denotes the standard normal CDF.
\end{lemma}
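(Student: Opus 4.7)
The plan is to combine the three cited results from \cite{dong2019gaussian} in sequence, reducing the multivariate problem to a one-dimensional Gaussian hypothesis test and then converting the resulting $\mu$-Gaussian Differential Privacy statement into the $(\epsilon, \delta)$ family.

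First, I would reduce the multivariate setting to a scalar one by exploiting spherical symmetry. Fix any neighboring pair $\vect D \sim \vect{\Tilde D}$. The output distributions are $\mathcal{N}(h(\vect D), \sigma^2 I_{d'})$ and $\mathcal{N}(h(\vect{\Tilde D}), \sigma^2 I_{d'})$, which share the same isotropic covariance. Choosing an orthonormal basis whose first vector is aligned with $h(\vect{\Tilde D}) - h(\vect D)$ and post-processing by projection onto this direction, the optimal testing problem between these two distributions is equivalent to distinguishing $\mathcal{N}(0, 1)$ from $\mathcal{N}(\mu_{\vect D, \vect{\Tilde D}}, 1)$, where $\mu_{\vect D, \vect{\Tilde D}} = \|h(\vect{\Tilde D}) - h(\vect D)\|_2 / \sigma$. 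Taking the worst case over neighbors gives $\mu_{\vect D, \vect{\Tilde D}} \leq \mu$, with equality approachable by definition of $\Delta_2 h$.

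Second, I would invoke the $f$-DP characterization (Theorem 2.7) together with Corollary 3.3 of \cite{dong2019gaussian}. The tradeoff function between two unit-variance Gaussians with mean gap $\mu$ is the Gaussian tradeoff function $G_\mu(\alpha) = \Phi(\Phi^{-1}(1 - \alpha) - \mu)$, which is monotone in the mean gap. Hence the mechanism dominates $G_\mu$ in the partial order of tradeoff functions, i.e., it is $\mu$-GDP.

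Finally, I would apply Corollary 2.13 of \cite{dong2019gaussian}, which provides the tight conversion from $\mu$-GDP to the family of $(\epsilon, \delta)$-DP guarantees. This corollary returns precisely the claimed expression for $\delta(\epsilon)$ by identifying the minimal $\delta$ such that the line $\alpha \mapsto 1 - \delta - e^\epsilon \alpha$ lies below the tradeoff curve $G_\mu$; the optimum occurs at the $\alpha$ at which the Gaussian likelihood ratio equals $e^\epsilon$. The main obstacle is the scalar reduction in the first step, as subsequent steps are direct instantiations of the cited results; once spherical symmetry is properly exploited, everything else is a matter of plugging the sensitivity bound $\mu = \Delta_2 h / \sigma$ into the off-the-shelf GDP machinery.
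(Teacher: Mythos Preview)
Your sketch is correct and is exactly the intended chain of reasoning: reduce to a one-dimensional Gaussian test via isotropy (so the mechanism is $G_\mu$-DP with $\mu = \Delta_2 h / \sigma$), then read off the tight $(\epsilon,\delta)$ conversion from the Gaussian tradeoff curve. The paper does not supply its own proof of this lemma; it is stated purely as a citation of the three results from \cite{dong2019gaussian}, so there is nothing further to compare against.
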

In particular, \Cref{sec:SensitivityPrivacy} proves that the Wasserstein gradients enjoy such property, motivating the methods presented in this article.

    In addition, private mechanisms enjoy several important properties. They are stable under composition, meaning the privacy loss from multiple sequential accesses to a dataset can be quantitatively bounded \cite{dwork2014algorithmic,dong2019gaussian}. Privacy is also amplified by subsampling, where applying a private mechanism to a random subset of the data leads to stronger privacy guarantees \cite{DBLP:journals/corr/abs-2210-00597}. Finally, private mechanisms are stable under post-processing, ensuring that any data-independent transformation of their output does not degrade privacy \cite{dwork2014algorithmic,dong2019gaussian}.

\section{Wasserstein Gradients}
\label{sec:WassersteinGradients}

Obtaining appropriate privacy guarantees involves deriving a concise and tractable closed-form expression for the squared Wasserstein distance between one-dimensional empirical distributions.

In the following, given sample of observations $x_i \in \mathbb{R}$ for $i\in[n]$, %
we denote its order statistics by 
$ x_{(1)} \leq x_{(2)} \leq \cdots \leq x_{(n)}.$
Given two discrete probabilities on the real line $P_\vect{U} = \frac{1}{n}\sum_{i=1}^n \delta_{U_i}$ and $P_\vect{V}=\frac{1}{m}\sum_{j=1}^m \delta_{V_j}$, using the characterization of $W_2^2(P_{\vect U},P_{\vect V})$ in terms of quantile functions, it follows that if we define the weights 
$
    R_{i,j}= \lambda \bigl(\bigl(\frac{i-1}{n},\frac{i}{n}\bigr]\cap\bigl(\frac{j-1}{m},\frac{j}{m}\bigr]\bigr), \ i\in [n],j\in[m],
$
where $\lambda$ denotes the Lebesgue measure, and consider the rank permutations $\sigma,\tau$ such that $U_i = U_{(\sigma(i))}$ for each $i \in [n]$ and $V_j = V_{(\tau(j))}$ for each $j\in[m]$, then 
\begin{proposition}\label{lemma:expression_W2} With the above notation,
   $
   W_2^2(P_{\vect U},P_{\vect V})  = \sum_{i=1}^{n} \sum_{j=1}^{m} R_{\sigma(i),\tau(j)}(U_{i}-V_{j} )^2 \;.
$
\end{proposition}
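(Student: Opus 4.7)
The plan is to start from the one-dimensional quantile representation of $W_2^2$ recalled in the introduction, namely
\begin{equation*}
W_2^2(P_{\vect U},P_{\vect V}) = \int_0^1 \bigl(F_{P_{\vect U}}^{-1}(t) - F_{P_{\vect V}}^{-1}(t)\bigr)^2 \, dt \;,
\end{equation*}
and to compute this integral explicitly by exploiting the piecewise-constant structure of the quantile functions of empirical measures. This reduces the proof to a bookkeeping exercise on indices.

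First, I would note that for the empirical measure $P_{\vect U} = \frac{1}{n}\sum_{i=1}^n \delta_{U_i}$, the quantile function is piecewise constant on the partition of $(0,1]$ into the intervals $I_k^n \eqdef \bigl((k-1)/n, k/n\bigr]$, taking the value $U_{(k)}$ on $I_k^n$. The analogous statement holds for $P_{\vect V}$ with intervals $I_l^m \eqdef \bigl((l-1)/m, l/m\bigr]$ and values $V_{(l)}$. Splitting the integral on $(0,1]$ along the common refinement of these two partitions then gives
\begin{equation*}
W_2^2(P_{\vect U},P_{\vect V}) = \sum_{k=1}^{n}\sum_{l=1}^{m} \lambda(I_k^n \cap I_l^m)\, \bigl(U_{(k)} - V_{(l)}\bigr)^2 = \sum_{k=1}^{n}\sum_{l=1}^{m} R_{k,l}\, \bigl(U_{(k)} - V_{(l)}\bigr)^2 \;,
\end{equation*}
by definition of the weights $R_{k,l}$.

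The second step is to re-express the sum in terms of the unsorted observations. Since $\sigma$ and $\tau$ are the rank permutations satisfying $U_i = U_{(\sigma(i))}$ and $V_j = V_{(\tau(j))}$, we have $U_{(k)} = U_{\sigma^{-1}(k)}$ and $V_{(l)} = V_{\tau^{-1}(l)}$. Substituting $i = \sigma^{-1}(k)$ and $j = \tau^{-1}(l)$ (both are bijections of $[n]$ and $[m]$ respectively), the double sum becomes
\begin{equation*}
W_2^2(P_{\vect U},P_{\vect V}) = \sum_{i=1}^{n}\sum_{j=1}^{m} R_{\sigma(i),\tau(j)}\, (U_i - V_j)^2 \;,
\end{equation*}
which is the stated identity.

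The only mildly tricky point is the index convention for the permutations: one has to be careful that $\sigma$ sends original indices to ranks, so passing between $U_{(k)}$ and $U_i$ requires inverting $\sigma$ (and similarly for $\tau$), before changing variables in the sum. Beyond this, everything is a direct computation from the well-known quantile formula for $W_2$ in dimension one, so I do not anticipate any substantial obstacle.
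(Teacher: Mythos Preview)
Your proposal is correct and follows essentially the same approach as the paper: both start from the quantile representation of $W_2^2$, exploit the piecewise-constant structure of the empirical quantile functions to reduce the integral to the double sum $\sum_{k,l} R_{k,l}(U_{(k)}-V_{(l)})^2$, and then reindex via the rank permutations $\sigma,\tau$. Your handling of the permutation substitution is arguably cleaner than the paper's, but the argument is the same.
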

Proposition~\ref{lemma:expression_W2}  expresses the Wasserstein distance  as a weighted sum of squared differences, with weights $R_{\sigma(i),\tau(j)}$, that depend only on the rank permutations. Thus, the partial derivative with respect to $U_i$,  is well defined, as long as its rank $\sigma(i)$ remains unchanged in a neighborhood of $U_i$. 
\begin{proposition}
    \label{corollary:FormulaGradientsGeneral}
    With all the previous definitions, 
    $W_2^2(P_{\vect U},P_{\vect V})$ is differentiable as a function of $\vect U=(U_1,\ldots,U_n)$ in the set of points verifying $U_{(1)}<\ldots<U_{(n)}$, and its gradient is given by
    $\label{eq:gradient_U}
        \nabla_{U}W_2^2(P_{\vect U},P_{\vect V}) = \bigl( 2\sum_{j=1}^m R_{\sigma(i),\tau(j)} (U_i-V_j)\bigr)_{i\in[n]} \;.
    $
    Similarly, as a function of $\vect V=(V_1,\ldots,V_m)$, $W_2^2(P_{\vect U},P_{\vect V})$  is differentiable in the set of points verifying $V_{(1)}<\ldots<V_{(m)}$, and 
    $\label{eq:gradient_V}
        \nabla_{V}W_2^2(P_{\vect U},P_{\vect V}) = \bigl( 2\sum_{i=1}^n R_{\sigma(i),\tau(j)} (V_j-U_i)\bigr)_{j\in[m]} \;.
    $
\end{proposition}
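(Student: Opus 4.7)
The plan is to derive everything from the explicit expression in Proposition~\ref{lemma:expression_W2}, by exploiting the fact that the rank permutations $\sigma$ and $\tau$ are locally constant at points where the order statistics are strict.

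More precisely, the first step is to fix a point $\vect U^0 = (U_1^0, \dots, U_n^0)$ satisfying $U_{(1)}^0 < \dots < U_{(n)}^0$ and argue that there exists $\eta > 0$ such that, on the open ball of radius $\eta$ around $\vect U^0$ (in any norm on $\R^n$), the rank permutation $\sigma$ of $\vect U$ coincides with $\sigma^0$ the rank permutation of $\vect U^0$. This is a routine continuity argument: any small enough perturbation of $\vect U^0$ preserves the strict inequalities $U_i^0 < U_k^0$ and $U_i^0 > U_k^0$, so the ranking of the coordinates is unchanged. Note also that $\tau$ does not depend on $\vect U$ at all, and thus is trivially locally constant, so the weights $R_{\sigma(i),\tau(j)}$ can be treated as constants in this neighborhood.

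Once this is established, on the neighborhood in question the function
\begin{equation*}
\vect U \mapsto W_2^2(P_{\vect U}, P_{\vect V}) = \sum_{i=1}^n \sum_{j=1}^m R_{\sigma^0(i),\tau(j)} (U_i - V_j)^2
\end{equation*}
is a polynomial of degree two in $\vect U$ with fixed coefficients, hence differentiable at $\vect U^0$. The partial derivative with respect to $U_i$ is obtained by keeping only the terms in which $U_i$ appears, yielding
\begin{equation*}
\frac{\partial}{\partial U_i} W_2^2(P_{\vect U}, P_{\vect V}) = 2 \sum_{j=1}^m R_{\sigma(i),\tau(j)} (U_i - V_j),
\end{equation*}
which gives formula~\eqref{eq:gradient_U}. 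The argument for $\vect V$ and the formula~\eqref{eq:gradient_V} is symmetric, swapping the roles of $\vect U$ and $\vect V$ (and of $\sigma$ and $\tau$) and noting the extra sign that comes from differentiating $(U_i - V_j)^2$ with respect to $V_j$.

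The only slightly delicate step is the local constancy of $\sigma$, but given the strict-ordering assumption this is essentially immediate and does not present a real obstacle; the rest is a direct calculation on the explicit expression of Proposition~\ref{lemma:expression_W2}.
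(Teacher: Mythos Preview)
Your proposal is correct and follows essentially the same approach as the paper. The paper does not give a formal proof in the appendix, but the sentence preceding the proposition already sketches exactly your argument: the weights $R_{\sigma(i),\tau(j)}$ depend only on the rank permutations, which are locally constant under the strict-ordering hypothesis, so one differentiates the resulting polynomial expression directly.
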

This result offers a straightforward alternative to the empirical approximation of the Wasserstein gradient between absolutely continuous measures presented in \cite{risser2022tacklingAlgorithm}, and generalizes the gradient formula used in \cite{bonnel2015slicedRadon} and \cite{tanguy2023propertiesDiscreteSliced} to distributions with different sample sizes.  From a practical perspective, the lack of differentiability when some of the points coincide is not a significant concern. In such rare cases, the rank permutation is not unique. Selecting one of these permutations, the formulas of Proposition \ref{corollary:FormulaGradientsGeneral} allow us to compute (incorrect) gradients, take a step, and continue. It should be noted that this approach has been implicitly assumed in previous papers \cite{deshpande2018generativelModelling, Kolouri2018SWAE} relying on automatic differentiation with satisfactory empirical results.
With a slight abuse of notation, we will use the term \textit{gradient} in the following sections to denote the values in Proposition \ref{corollary:FormulaGradientsGeneral}. Even outside the set of differentiability points, we will still be able to obtain privacy guarantees.

\section{A Private Surrogate for Wasserstein Gradients}\label{sec:SensitivityPrivacy}

This section presents our main theoretical contribution (see \Cref{theorem:gradient_sensitivity}), which is an upper bound on the sensitivity of Wasserstein gradients. This bound directly enables the application of the Gaussian mechanism to ensure differential privacy (see \Cref{factProvacyGaussianMechanism}).

Considering samples   $\vect{X} = (x_1,\ldots,x_n)\in \mathcal X^n$ and $\vect{Z}=(z_1,\dots,z_m)\in \mathcal Z^m$, %
and denoting by $U_i=g_\theta(x_i)$ and $V_j = h_\theta(z_j)$ the activations on which one wishes to apply Wasserstein constraints (note that it matches the notations of \Cref{sec:WassersteinGradients}), we look at the sensitivity of $\nabla_\theta W_2^2 \eqdef \nabla_\theta W_2^2 ( g_\theta \# P_{\vect X}, h_\theta \# P_{\vect Z})$. For clarity of presentation and due to its importance in various applications, we present the  analysis for the one-dimensional Wasserstein distance, assuming $ g_\theta(x), h_\theta(z) \in \mathbb{R}$. The extension to the sliced case is simple, as explained in \cref{remark:extension_sliced}. 

\Cref{corollary:FormulaGradientsGeneral} and the chain rule under suitable assumptions give the following expression,
\begin{equation*}
\begin{aligned}
        \nabla_{\theta}W_2^2  
        &= 2 \sum_{i=1}^n \sum_{j=1}^m R_{\sigma(i),\tau(j)} (g_{\theta}(x_i)-h_{\theta}(z_j)) \nabla_{\theta} g_{\theta}(x_i) \\
        &\qquad\qquad+ 2 \sum_{i=1}^n \sum_{j=1}^m R_{\sigma(i),\tau(j)} (h_{\theta}(z_j)-g_{\theta}(x_i)) \nabla_{\theta} h_{\theta}(z_j)\;.
\end{aligned}
\end{equation*}
This decomposition enables the following sensitivity analysis :

\begin{theorem}\label{theorem:gradient_sensitivity}
    With all the previous notation, assuming that there exists constants $M,L_1,L_2\geq 0$ such that for each $\theta\in\Theta$, $x\in\mathcal X$  and $z\in\mathcal Z$,
    \begin{enumerate}
        \item[(i)] \label{assumption1}$|g_\theta(x)|\leq M$, $|h_\theta(z)|\leq M \ .$
        \item[(ii)] \label{assumption2}$\|\nabla_\theta g_\theta(x)\|_2\leq L_1$, $\|\nabla_\theta h_\theta(z)\|_2\leq L_2 \ ,$
    \end{enumerate}  
    then 
    \begin{itemize}
        \item[(a)]  Under the neighboring relation $\sim_1$ in $\mathcal D = \mathcal X^n$, if we define $\Phi_\theta(\vect X)= \nabla_{\theta} W_2^2(g_\theta\# P_{\vect X}, h_\theta\# P_{\vect Z})$, then
$ \Delta_2\Phi_\theta   \leq  4M\frac{3L_1+L_2}{n} \ .$
\item[(b)] Under the neighboring relation $\sim_2$ in $\mathcal D = \mathcal X^n\times \mathcal Z^m$, if we define $\Psi_\theta(\vect X, \vect Z)= \nabla_{\theta} W_2^2(g_\theta\# P_{\vect X}, h_\theta\# P_{\vect Z})$, then
$ \Delta_2\Psi_\theta   \leq  4M\max\bigl\{ \frac{3L_1+L_2}{n}  , \frac{L_1+3L_2}{m}  \bigr\}\ .$

    \end{itemize}

\end{theorem}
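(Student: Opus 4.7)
The approach is to decompose $\Phi_\theta(\vect X) = \mathcal{A}(\vect X) + \mathcal{B}(\vect X)$, where $\mathcal{A}(\vect X) \eqdef \sum_i A_i(\vect X)\, \nabla_\theta g_\theta(x_i)$ with $A_i(\vect X) \eqdef 2\sum_j R_{\sigma(i),\tau(j)}(U_i - V_j)$, and $\mathcal{B}(\vect X)$ is defined analogously on the $\nabla h$ side, and to control the two sensitivities separately. Fix a replacement of $x_k$ by $\tilde x_k$, let $p = \sigma(k)$ and $q = \tilde\sigma(k)$, and assume WLOG $p \leq q$: then only the $q-p$ indices $i_1,\ldots,i_{q-p}$ with $\sigma(i_s) = p+s$ see their rank change, with $\tilde\sigma(i_s) = p+s-1$.

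For $\mathcal{B}$, I would compute $B_j(\vect X) - B_j(\vect{\tilde X}) = 2\sum_l R_{l,\tau(j)}(\tilde U_{(l)} - U_{(l)})$. The differences $\tilde U_{(l)} - U_{(l)}$ are non-negative on $[p,q]$, zero elsewhere, and telescope to $\tilde U_k - U_k$, which is bounded by $2M$. Summing over $j$ using $\sum_j R_{l,\tau(j)} = 1/n$ then gives $\sum_j |B_j(\vect X) - B_j(\vect{\tilde X})| \leq 4M/n$, hence $\|\mathcal{B}(\vect X) - \mathcal{B}(\vect{\tilde X})\|_2 \leq 4ML_2/n$ by the triangle inequality and the bound $\|\nabla_\theta h_\theta\|_2 \leq L_2$.

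For $\mathcal{A}$, the decisive rewrite is $A_i(\vect X) = 2U_i/n - 2W_{\sigma(i)}$, where $W_l \eqdef \sum_{j'} R_{l,j'} V_{(j')}$ depends only on $\vect V$ and on the rank $l$, and satisfies $|W_l| \leq M/n$. The contribution from the replaced index $k$ is bounded crudely by $\|A_k(\vect X)\nabla_\theta g_\theta(x_k) - A_k(\vect{\tilde X})\nabla_\theta g_\theta(\tilde x_k)\|_2 \leq 2 \cdot (4M/n) \cdot L_1 = 8ML_1/n$. For $i \neq k$, $U_i$ is unchanged and $A_i(\vect X) - A_i(\vect{\tilde X}) = 2(W_{\tilde\sigma(i)} - W_{\sigma(i)})$, so the remaining contribution is $\|2\sum_{s=1}^{q-p}(W_{p+s-1} - W_{p+s})\nabla_\theta g_\theta(x_{i_s})\|_2$. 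The main obstacle is that $q-p$ can be of order $n$: a naive term-by-term bound $|W_{p+s-1} - W_{p+s}| \leq 2M/n$ summed over $s$ gives $\Theta(ML_1)$, off by a factor of $n$.

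The trick I would use to close this gap is that $l \mapsto W_l$ is monotone non-decreasing, because $W_l = \int_{(l-1)/n}^{l/n} \beta(t)\,dt$ and $\beta$, the quantile function of $P_{\vect V}$, is non-decreasing. Therefore $\sum_{s=1}^{q-p}|W_{p+s-1} - W_{p+s}| = W_q - W_p \leq 2M/n$, and pulling $L_1$ out of the vector sum via the triangle inequality yields a non-$k$ contribution of at most $4ML_1/n$. Combining the three pieces gives $\Delta \Phi_\theta \leq 8ML_1/n + 4ML_1/n + 4ML_2/n = 4M(3L_1+L_2)/n$, which proves (a). Part (b) then follows by applying (a) when the replacement is in $\vect X$, and by the fully symmetric argument (swapping $\vect X \leftrightarrow \vect Z$, $g \leftrightarrow h$, $n \leftrightarrow m$) when the replacement is in $\vect Z$, giving $4M(L_1+3L_2)/m$; the neighboring relation $\sim_2$ imposes the maximum of the two.
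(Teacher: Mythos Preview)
Your proof is correct and follows essentially the same strategy as the paper: the paper also splits into the $\nabla_\theta g$ and $\nabla_\theta h$ parts, handles the replaced index by a crude bound, and controls the rank-shift contributions via the monotonicity of the quantile function $G^{-1}$ of $P_{\vect V}$ (which is exactly your monotonicity of $l \mapsto W_l = \int_{(l-1)/n}^{l/n} G^{-1}(t)\,dt$). Your reorganization through the identity $A_i = 2U_i/n - 2W_{\sigma(i)}$ is a bit more streamlined than the paper's three-way decomposition of the $\nabla_\theta g$ term, but the key monotonicity idea and the resulting constants are identical.
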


\begin{remark}[\textbf{Extension to arbitrary  dimension}]
\label{remark:extension_sliced}
\cref{theorem:gradient_sensitivity_sliced} in Appendix \ref{append:slice} extends Theorem~\ref{theorem:gradient_sensitivity} to the multidimensional setting $g_\theta(x),h_\theta(z)\in \mathbb R^d$, using the sliced Wasserstein distance $SW_{2}^2$ or its Monte Carlo approximation. In any case, \cref{theorem:gradient_sensitivity_sliced} follows directly from \cref{theorem:gradient_sensitivity} and the fact that the sliced gradient is an average (in the form of an integral of a sum) of the gradients $\nabla_{\theta} W_2^2( (\operatorname{Pr}_\vartheta \circ g_\theta)\# P_{\vect X}, (\operatorname{Pr}_\vartheta \circ h_\theta)\# P_{\vect Z})$.  The conclusions of \cref{theorem:gradient_sensitivity_sliced} remain identical to  \cref{theorem:gradient_sensitivity}, but now require uniform control over $\vartheta$ of the bounds in Assumptions (i) and (ii) for the composite functions $\operatorname{Pr}_\vartheta \circ g_\theta, \operatorname{Pr}_\vartheta \circ h_\theta$. To achieve this, Assumption (i) is replaced by $\|g_\theta(x)\|_2\leq M$, $\|h_\theta(x)\|_2\leq M$ and Assumption (ii) by $\|\mathcal J_\theta g_\theta(x)\|_2\leq L_1$, $\|\mathcal J_\theta h_\theta(z)\|_2\leq L_2$, where $\|\cdot \|_2$ denotes here the spectral norm of a matrix. Note that, if $d=1$,  $\mathcal J_\theta g_\theta = \nabla_\theta g_\theta$ and the spectral norm coincides with the euclidean norm.
\end{remark}

\begin{remark}[\textbf{About the assumptions}]
Assumption (i) is a uniform boundedness condition. Assumption (ii) is verified as soon as  $g_\theta$ and $h_\theta$ are Lipschitz with respect to the parameter $\theta$. In practice, for general models, such boundedness assumptions may be difficult to satisfy. In such cases, it is possible to resort to \emph{clipping} techniques which consist of artificially enforcing such constraints by projections. The sensitivity analysis then holds independently of the model. For more information, see \Cref{sec:DeepLearningFramework}.
\end{remark}

\begin{remark}[\textbf{General optimizers and privacy accounting}]
\Cref{theorem:gradient_sensitivity} together with \Cref{factProvacyGaussianMechanism} enable one to privately estimate one Wasserstein gradient at a point. When used as a substitution for the usual gradients in any first-order optimizer (e.g. SGD), it is possible to quantify the privacy of the whole optimization procedure with \emph{privacy accounting}. Such considerations as well as subsampling are discussed in \Cref{sec:DeepLearningFramework}.
\end{remark}

\begin{remark}[\textbf{Computational complexity}]
\label{remark:computations}
An important limitation of this approach is that it requires access to the Jacobian of the layer where the sliced Wasserstein constraint is applied, which can introduce additional computational overhead.
A broader discussion on this topic is provided in \Cref{sec:computational_details}.
\end{remark}

\begin{remark}[\textbf{Comparison to the baseline} \cite{rakotomamonjy2021DPslicedWasserstein}]\label{remark:comparison}
Assuming $h_\theta = I_d$, by the chain rule, and with a slight abuse of notation,
\begin{equation*} \nabla_\theta SW_2^2(g_\theta(\vect X), \vect{Z} ) = {\nabla_{g_\theta(\vect X)} SW_2^2(g_\theta(\vect X), \vect{Z} )} \times \mathcal J_\theta g_\theta(\vect X)\ .\end{equation*} The approach in \cite{rakotomamonjy2021DPslicedWasserstein} provides privacy guarantees only for the first term in the decomposition. Therefore, privacy guarantees can only be derived in cases where the trained function $g_\theta$ is not directly applied to private data, allowing the second term to be ignored. In other words, privacy guarantees can only be given with respect to fixed $\vect Z$. As a result, the training procedure of \cite{rakotomamonjy2021DPslicedWasserstein} is considerably more limited in scope and primarily suited for simple tasks such as data generation. A numerical comparison of both methods when applicable is provided  in \Cref{section:distribution_matching}. 
Additionally, \Cref{section:swae} presents a comparison in the context of data generation, leveraging private sliced Wasserstein autoencoders as a byproduct of our framework.
\end{remark}

\section{Private Sliced Wasserstein Autoencoders}\label{section:swae}

\begin{figure}[ht]
    \centering
    \begin{minipage}[t]{0.65\textwidth}
        \vspace{0pt}
        \centering
        \resizebox{\linewidth}{!}{
        \begin{tikzpicture}[
            node distance=1.5cm and 1.2cm,
            box/.style={draw, minimum width=1.5cm, minimum height=1cm, align=center, fill=blue!10},
            roundbox/.style={draw, shape=ellipse, minimum height=1cm, minimum width=2cm, align=center, fill=orange!15},
            latent/.style={draw, shape=ellipse, minimum height=1cm, minimum width=2cm, align=center, fill=green!15},
            arrow/.style={-{Latex[length=3mm]}, thick},
            ]

        \node[roundbox] (input) {$\vect{x}_i$};
        \node[box, right=of input] (encoder) {Encoder\\$\varphi_\theta$};
        \node[latent, right=of encoder] (latent) {$\varphi_\theta(x_i)$};
        \node[box, right=of latent] (decoder) {Decoder\\$\psi_\theta$};
        \node[roundbox, right=of decoder] (output) {$\hat{\vect{x}}_i$};

        \node[below=1.5cm of latent] (distmatch) {\footnotesize $\varphi_\theta \# P_{\vect X} \overset{SW_2^2}{\longleftrightarrow} P_{\vect Z}$};
        \node[roundbox, below=1.5cm of output] (reconloss) {\footnotesize $\ell(\hat{\vect{x}}_i, \vect{x}_i)$};

        \draw[arrow] (input) -- (encoder);
        \draw[arrow] (encoder) -- (latent);
        \draw[arrow] (latent) -- (decoder);
        \draw[arrow] (decoder) -- (output);

        \draw[arrow, dashed] (latent) -- (distmatch);
        \draw[arrow, dashed] (output) -- (reconloss);
        \draw[arrow, dashed] (input.south) -- ++(0,-1.2) -| (reconloss.west);

        \node[below=0.3cm of distmatch] {\footnotesize Sliced Wasserstein regularization};
        \node[below=0.3cm of reconloss] {\footnotesize Reconstruction loss};

        \node[below=3.0cm of latent, align=center] (loss) {\footnotesize Total loss: $\mathscr{L}_\alpha = (1-\alpha)\cdot\frac{1}{n}\sum_i \ell(\hat{\vect{x}}_i, \vect{x}_i) + \alpha \cdot SW_2^2(\varphi_\theta \# P_{\vect X}, P_{\vect Z})$};

        \end{tikzpicture}
        }
    \end{minipage}%
    \hfill
    \begin{minipage}[t]{0.33\textwidth}
        \vspace{0pt}
        \caption{Sliced Wasserstein autoencoder with privacy-aware training. The encoder maps input $\vect{x}_i$ to a latent code $\varphi_\theta(x_i)$, encouraged to match the prior $P_{\vect Z}$. The decoder reconstructs $\hat{\vect{x}}_i$. The loss combines reconstruction and sliced Wasserstein distance in latent space.}
        \label{fig:SWAE_diagram}
    \end{minipage}
\end{figure}

Building on our previous analysis, we present what is, to the best of our knowledge, the first differentially private training procedure for the sliced Wasserstein autoencoders introduced in \cite{Kolouri2018SWAE}. 

\subsection{General method}
Given a private dataset $\vect{X} \in \mathcal{X}^n$, the goal is to learn, in a privacy-preserving manner, an encoder $\varphi_{\theta_a}$ and a decoder $\psi_{\theta_b}$ by minimizing a reconstruction loss $\ell$, regularized with a sliced Wasserstein penalty in the latent space. This term encourages the encoded representations to match a non-private random sample $\vect{Z}$ drawn from a prescribed distribution $Q$ over the latent space $\mathbb{R}^{d_0}$. Denoting $\theta = (\theta_a, \theta_b)$, $\varphi_{\theta} = \varphi_{\theta_a}$, and $\psi_{\theta} = \psi_{\theta_b}$, the objective is to minimize
\begin{align}\label{eq:loss_autoencoder_kolouri}
    \mathscr L_\alpha(\theta) = (1-\alpha)\frac{1}{n}\sum_{i=1}^n \ell(\psi_\theta (\varphi_\theta(x_i)),x_i) +\alpha SW_2^2\bigl(\varphi_\theta \# P_{\vect X} , P_{\vect Z}\bigr)  \ .
\end{align}
where $\alpha \in[0,1]$ is the regularization parameter. To train the model with privacy guarantees, one must control the sensitivity of the gradient updates. For the first term in Equation~\ref{eq:loss_autoencoder_kolouri}, the finite-sum structure of the loss function carries over to the gradient. As long as $\|\nabla_\theta \ell(\psi_\theta (\varphi_\theta(x_i)), x_i)\|_2 \leq C$ for all $x_i \in \mathcal{X}$, the sensitivity is bounded by $2C/n$ under the substitution relation $\sim_1$. Therefore, if $\varphi_\theta$ satisfies Assumptions (i) and (ii) in Theorem \ref{theorem:gradient_sensitivity_sliced}, 
$%
    \Delta_2 \mathscr L_\alpha (\theta) \leq (1-\alpha)\frac{2C}{n} + \alpha \cdot \frac{12M  L}{n} \ .
$

This bound allows us to benefit from large sample sizes, enabling private gradient computation with minimal noise.  Building on the ideas presented in Appendix~\ref{sec:DeepLearningFramework}, the training procedure also incorporates:
\begin{inlinelist}
\item \textbf{Clipping:} If the assumptions are not satisfied, we resort to clipping the individual gradients $\nabla_\theta \ell(\psi_\theta (\varphi_\theta(x_i)), x_i)$ by a constant $C > 0$, and apply inner clipping to the sliced Wasserstein gradients using constants $M>0$ for the output of the encoder and $L > 0$ for the spectral norm of the Jacobian matrix, as detailed in \Cref{remark:extension_sliced};
\item \textbf{Subsampling:} The gradient is estimated over batches of size $n'$. The sensitivity of this estimate is computed by replacing $n$ with $n'$ on the above expression.
\end{inlinelist}

\begin{figure}[ht]
    \centering
    \begin{minipage}[t]{0.73\textwidth}
        \vspace{0pt} %
        \centering
        \includegraphics[width=0.98\linewidth]{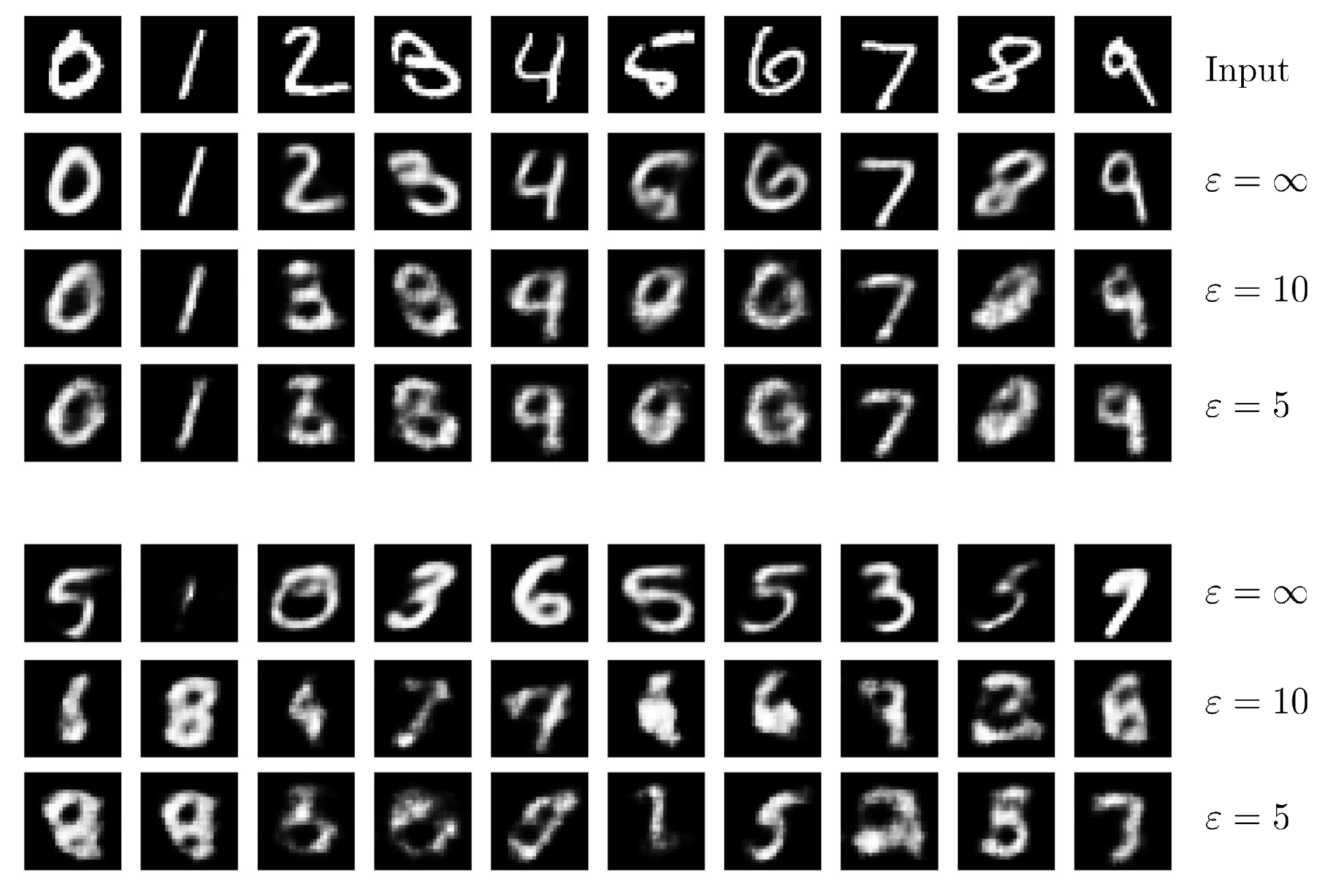}
    \end{minipage}%
    \hfill
    \begin{minipage}[t]{0.26\textwidth}
        \vspace{0.5cm} %
        \caption{Benchmarking the capabilities of a $(\varepsilon,\delta)$-DP sliced Wasserstein autoencoder on MNIST with $\delta = 10^{-5}$ and varying $\varepsilon$. 
        \textbf{Top:} Reconstructed digits from the test dataset. The first row shows the earliest sample of each digit (0–9) in the test set, followed by reconstructions. \\
        \textbf{Bottom:} Generated samples from the same model by decoding noise from the latent space.}
        \label{fig:mnist_autoencoder_results}
    \end{minipage}
\end{figure}

\subsection{Experimental details}

In our experiments, we considered $\vect{X} \in (\mathbb{R}^{28 \times 28})^{60000}$ to be the \textit{MNIST} or \textit{Fashion-MNIST} (training) dataset, and $\vect{Z}$ to be a random sample of the same length drawn from the uniform distribution over the unit ball in $\mathbb{R}^{d_0}$, with $d_0 = 6$. The encoder and decoder are defined as neural networks, with the architectures described in Table~\ref{tab:autoencoder_architecture} in Appendix \ref{appendix:SWAE}. We trained both models using  binary cross-entropy as the reconstruction loss, and regularization parameter $\alpha = 0.1$. Private gradients were computed over random batches of size $n' = 600$, and parameter updates followed the ADAM rule (\cite{bengio2015adam}). The clipping values used were $M = 1.5$ for the decoder output, $L = \sqrt{6}$ for clipping the Jacobian matrix of the encoder’s last layer (following the naive approach of Remark \ref{remark:degraded_bound}), and $C = 1$ for clipping the individual gradients of the reconstruction loss. The number of training iterations was fixed at 5000, and we added the required amount of noise to each gradient to achieve $(\varepsilon, \delta)$-DP after all iterations. Finally, the Monte Carlo approximation of $SW_2$ was computed using $100$ randomly sampled projection directions. 
The reconstructions and latent space embeddings for \textit{MNIST} are presented in \Cref{fig:mnist_autoencoder_results} and \Cref{fig:mnist_encoding}, respectively. We can see that the distributional constraints on the embeddings, including the shape of their support, are correctly enforced while still allowing meaningful reconstructions. The method presented has, to the best of our knowledge, no alternative in the context of differential privacy.

\subsection{Generative modeling as a byproduct - comparison to the baseline}

 A byproduct of sliced Wasserstein autoencoders is their ability to generate data that resemble the distribution of the training set $\vect{X}$. Indeed, since the original dataset is matched to a pre-specified distribution, decoding new samples from this distribution produces synthetic data. Since the entire training procedure respects differential privacy, the result is a privacy-preserving data generator. Examples of generated data for the MNIST dataset are featured in \Cref{fig:mnist_autoencoder_results}. Our results are competitive with existing approaches—see, for instance, Figures 2, 3, and 4 in \cite{segag2023gradientFlow}, where their private sliced Wasserstein flow approach is compared with the private data generation method of \cite{rakotomamonjy2021DPslicedWasserstein}. 
 
 Finally, similar results for the \textit{Fashion-MNIST} dataset are provided in Appendix \ref{appendix:SWAE}.

\begin{figure}[ht]
    \centering
    \begin{minipage}[t]{0.15\textwidth}
        \vspace{0.4cm}
        \captionof{figure}{Encoded latent space MNIST samples for the autoencoder, trained under $(\varepsilon,\delta)$-DP, with $\delta = 10^{-5}$ and varying values of $\varepsilon$.}
        \label{fig:mnist_encoding}
    \end{minipage}%
    \hfill
    \begin{minipage}[t]{0.82\textwidth}
        \vspace{0pt}
        \centering
        \includegraphics[width=\linewidth]{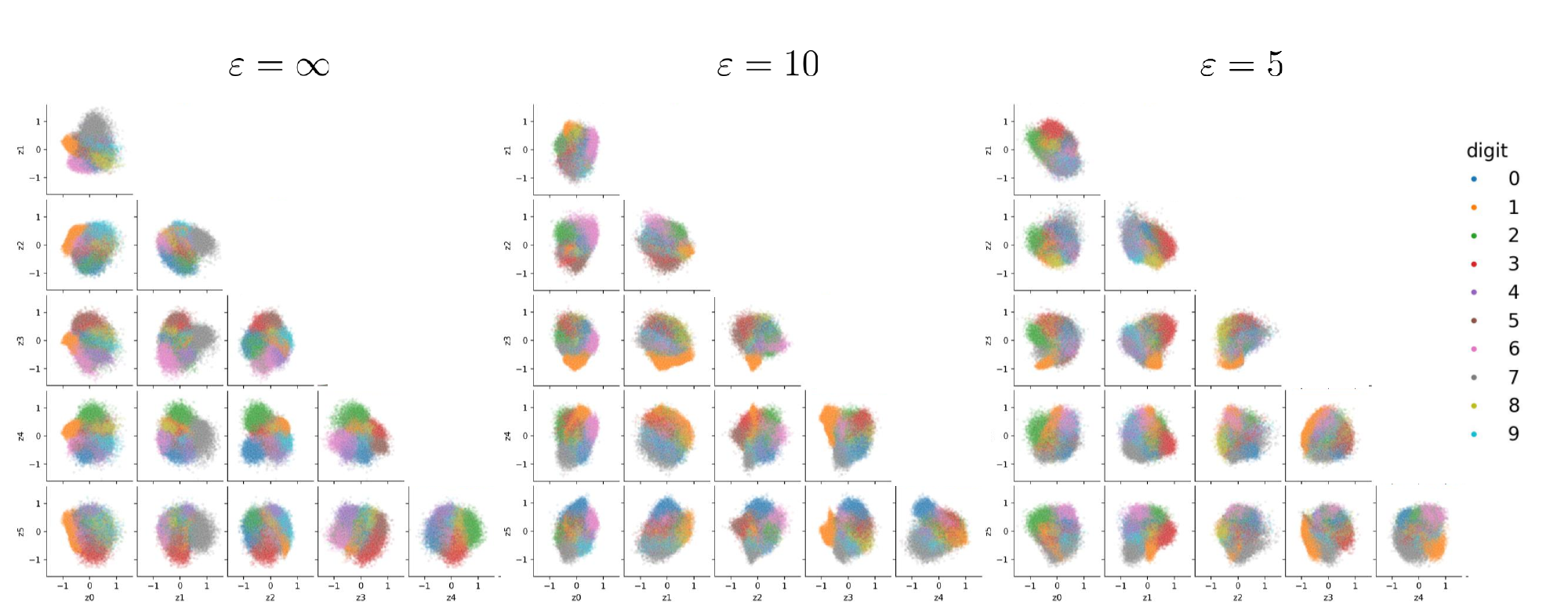}
    \end{minipage}
\end{figure}

\section{Fairness via private in-processing}\label{section:fairness}

Assume we are given a dataset $\vect D$ with $n$ samples of the form $(x_i,a_i, y_i)$ or $(x_i,a_i)$, where $x_i$ represents the non-sensitive attributes, $a_i \in \{0,1\}$ is a binary sensitive attribute, and $y_i$ is the response variable, available only in the supervised setting. Standard machine learning algorithms aim to minimize the empirical risk with respect to a given loss function $\ell$:
$ \min_{\theta} \mathscr L(\theta) \eqdef \min_{\theta} \frac{1}{n} \sum_{i=1}^n \ell(g_\theta(x_i)) \;,
$
where $\ell(g_\theta(x_i))$ is a shorthand for $\ell(g_\theta(x_i), y_i)$ in supervised tasks, and $\ell(g_\theta(x_i), x_i)$ in unsupervised tasks. %
Within our framework, different fairness notions can be promoted during training by incorporating appropriate penalty terms into the loss function $\mathscr L(\theta)$, while preserving bounded sensitivity for strong privacy guarantees.

Among the most common fairness notions, \emph{statistical parity} requires that the algorithmic decision is independent of the sensitive attribute. That is, statistical parity holds if $\mathcal L(g_\theta(X) \mid A=0) = \mathcal L(g_\theta(X) \mid A=1)$. Let $\vect X_j = \{x_i : a_i = j\}$ and $n_j = \textnormal{length}(\vect X_j)$ for $j = 0,1$.  Statistical parity can be encouraged by minimizing
\begin{equation}\label{eq:loss_SP_text}
    \mathscr L^{SP}_\alpha(\theta) = (1-\alpha)\mathscr L(\theta)  + \alpha \, {SW}_2^2\left(  \varphi_\theta \# P_{\vect X_0}, \varphi_\theta \# P_{\vect X_1} \right),
\end{equation}
where $\alpha \in [0,1]$ controls the trade-off between prediction accuracy and fairness, and  $\varphi_\theta= g_\theta$ (if the distribution loss is applied to the output)  or $g_\theta= \psi_\theta \circ \varphi_\theta$ (if the distribution loss is applied to intermediate representation, e.g., an intermediate layer in a NN). To ensure privacy guarantees, we must assume that $n_0$ and $n_1$ are fixed. Under suitable boundedness conditions, the sensitivity of the gradient $\nabla_\theta \mathscr L^{SP}_\alpha(\theta)$ (or its Monte Carlo approximation) is bounded by
$(1-\alpha) \frac{2C}{n} + \alpha \frac{16 M L }{\min \{n_0,n_1\}} ,
$
under the two-end neighboring relation $\sim_2$ (see Corollary \ref{th:fairandprivate}). This framework for private bias mitigation can be adapted to other fairness notions as detailed in Appendix~\ref{appendix:fairness}. Thanks to its generality, our methodology enables the development of private bias mitigation procedures for a wide range of learning tasks. 

To demonstrate the effectiveness and flexibility of our approach, we simulate biased data as described in Appendix~\ref{appendix:fairness}, and evaluate the performance of our private in-processing mitigation strategy across three distinct scenarios. First, we address the well-studied problem of private and fair binary classification, previously explored in several works (see extended related work, Appendix \ref{sec:ExtendedRelatedWork}),
and then, we introduce two novel tasks: multidimensional fair and private regression, extending beyond the current one-dimensional solutions (e.g., \cite{xian2024DPfairRegression}), and  fair and private representation learning. As a simple illustration of our methodology, Figure \ref{fig:fairness_try} represents the distribution of $g_\theta(X)|A=0$ versus the distribution of $g_\theta(X)|A=1$ for each of the three learning scenarios proposed, for different values of the regularization parameter $\alpha$ and the privacy budget $\varepsilon$, and fixed $\delta$. Two main conclusions can be drawn. First, the Wasserstein penalization
mitigates biases as $\alpha$ increases. Second, adding privacy does not significantly alter the results. Full experimental details, along with comprehensive statistical metrics for comparing the results, are provided in Appendix~\ref{appendix:fairness}.

\begin{figure}[ht]
    \centering
\includegraphics[width=1\linewidth]{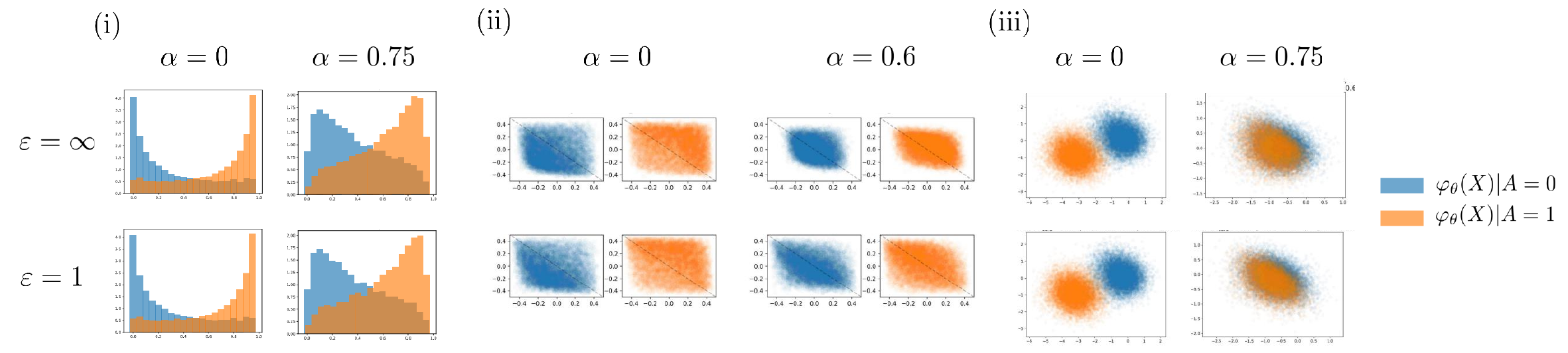}
    \caption{Distributions of $\varphi_\theta(X)$ conditioned on the sensitive attribute $A = 0$ and $A = 1$, for models trained to minimize the objective in~\eqref{eq:loss_SP_text}, across different values of the regularization parameter $\alpha$ and the privacy budget $\varepsilon$. The representation $\varphi_\theta$ corresponds to: (i) predicted class probabilities in a classification task, (ii) predicted values in a bidimensional regression task, and (iii) bidimensional latent representations from an autoencoder.}

    \label{fig:fairness_try}
\end{figure}

\section{Conclusion}
In this work, we introduced a novel sensitivity analysis for sliced Wasserstein gradients, enabling new applications in distributional matching problems under differential privacy. We proposed two innovative applications of this framework: a private learning algorithm for sliced Wasserstein autoencoders and a general in-processing method for private and fair learning. Across tested problems where comparable Wasserstein-based algorithms exist (see \Cref{section:swae} and \Cref{section:distribution_matching}), our approach consistently matches or surpasses existing methods. These results highlight the potential of our framework to advance the state of private and fair machine learning. They  open promising directions  for future research  in private distributional learning.

A limitation of our method is its potential computational overhead (see \Cref{remark:computations}), for which we do not foresee solutions beyond those already discussed in this article. However, in context, computational overheads are already common with differential privacy \cite{DBLP:journals/corr/abs-2204-13650}, and our method opens applications that were previously impossible.

This work opens many other research directions.  Our results do not directly extend to other Wasserstein losses, such as $W_p$. As shown in Appendix~\ref{counterexample},  it is not possible to bound the sensitivity of the gradient of $W_p$, for general $p \geq 1$, by a factor that decreases approximately at a rate of $1/n$. Nevertheless, we believe that the generalization of our method to $W_p^p$, $p>1$, is both feasible and worthwhile. A natural extension to the privacy of the computation of sliced Wasserstein barycenters is also left for future research. Finally, incorporating \emph{entropic regularization} may be interesting to reduce the computational complexity and favor statistical stability.

\section*{Acknowledgements}

This paper has been partially funded by the Agence Nationale de la Recherche under grant ANR-23-CE23-0029 Regul-IA. The research leading to these results received funding from MCIN/AEI/10.13039/501100011033/FEDER under Grant Agreement Number PID2021-128314NB-I00. The authors also acknowledge the support of the AI Cluster ANITI (ANR-19-PI3A-0004).

\bibliography{biblio}
\bibliographystyle{icml2025}

\appendix

\section{Extended Related Work}
\label{sec:ExtendedRelatedWork}

\paragraph{Fairness in Machine Learning.}
Fairness in machine learning has emerged as a critical area of research, driven  by the growing recognition of its societal impact and the ethical implications of algorithmic decision-making. Additionally, regulatory frameworks such as the General Data Protection Regulation (GDPR) and the recent European AI Act\footnote{\url{https://artificialintelligenceact.eu/}} mandate stringent measures to identify and mitigate bias in AI systems, emphasizing the need for fair and private methodologies in machine learning. Unfairness arises when certain variables, often referred to as sensible variable, systematically bias the behavior of an algorithm against specific groups of individuals, leading to disparate outcomes. This field of research has received a growing attention over the last few years as pointed out in the following papers and references therein \cite{chouldechova2020snapshot,dwork2012fairness,oneto2020fairness,wang2022brief,barocas2018fairness,besse2022survey}.

The Wasserstein distance offers a compelling framework for addressing fairness, as it provides a principled way to quantify discrepancies between the distributions of different subgroups. Moreover, as stated first in \cite{feldman2015certifying}, then in \cite{gouic2020projection} or \cite{chzhen2020fair}, Wasserstein distance between the conditional distributions of the algorithm  for each group, is the natural  measure to quantify the cost of ensuring fairness of the algorithm, defined as algorithms exhibiting the same behavior for each group. Hence optimal transport based methods are commonly used to  assess and mitigate distributional biases, paving the way for more equitable algorithmic decision-making. We refer, for instance, to the previously mentioned references \cite{silvia2020general,gordaliza2019obtainingFairness} and references therein.

\paragraph{Differential Privacy and Fair Learning.}

The interplay between fairness and differential privacy has received significant attention in recent years. A comprehensive review of this topic in decision and learning problems is provided in \cite{fioretto2022DPandfairness}. Within the learning framework, research has progressed in various directions. From a theoretical standpoint, despite the early work of \cite{cummings2019compatibility} demonstrating inherent incompatibilities between exact fairness and differential privacy, \cite{mangold2023boundedimpact} recently presented promising theoretical results indicating that fairness is not severely compromised by privacy in classification tasks. Another research direction has focused on studying the disparate impacts on model accuracy introduced by private training of algorithms. This phenomenon was first observed in \cite{bagdasaryan2019disparateimpact} and has been extensively studied in subsequent works \cite{farrand2020neithePrivateNorFair,tran21fairnesslens,xu2021removingDisparateImpact,esipova2023gradientMisalignment}. A third line of research aims to develop models that are both private and fair. Private and fair classification models have been proposed using in-processing and post-processing techniques across various scenarios in \cite{xu2019achievingDPandFairness,jagielski2019differentially,ding2020differentially,lowi2022SDPfairLearning,yaghini2023learning,Ghoukasian2024DPFairBinaryClassif}. A recent comparison of these works can be found in \cite{Ghoukasian2024DPFairBinaryClassif}. 
In the topic of fair and private regression, the only available work is the aforementioned post-processing method of \cite{xian2024DPfairRegression}.

\section{A Framework for Deep Learning}
\label{sec:DeepLearningFramework}

This section explains how to adapt the methods presented above into a deep-learning framework where it is typically not possible to guarantee a priori that the gradients and the activations are bounded, and where one typically needs to run many iterations in a batched setting.

\subsection{Inner-Clipping of the Gradients}

Directly applying \Cref{theorem:gradient_sensitivity} to general deep learning models is typically infeasible, as the required boundedness conditions are not satisfied. As a solution, we propose to introduce three hyperparameters $M \geq 0$, $L_1 \geq 0$, and $L_2 \geq 0$, and to use as a proxy for ${\nabla}_{\theta}W_2^2$ the following quantity:
\begin{equation}\label{eq:gradient_approx}
\begin{aligned}
       \nabla^{M,L_1,L_2}_{\theta}W_2^2 & = 2 \sum_{i=1}^n \sum_{j=1}^m R_{\sigma(i),\tau(j)} (U_i - V_j) \proj_{L_1} \p{ \nabla_{\theta}  g_{\theta}(x_i)} \\
        &+ 2 \sum_{i=1}^n \sum_{j=1}^m R_{\sigma(i),\tau(j)} (V_j - U_i) \proj_{L_2} \p{\nabla_{\theta}  h_{\theta}(z_j)}
\end{aligned}
\end{equation}
where for all i and j, we have $U_i = \proj_M \p{g_{\theta}(x_i)}$, $V_j = \proj_M \p{h_{\theta}(z_j)}$, and $\sigma, \tau$ are defined as in Section \ref{sec:WassersteinGradients}. This technique is known as \emph{``clipping"} and was historically introduced as a preprocessing of the gradients for problems with a finite-sum structure \cite{abadi2016deep}. For Wasserstein gradients, note that we also need to clip the \emph{activations}. Now, \Cref{theorem:gradient_sensitivity}  applies and one may add noise to this quantity to make it private with the Gaussian mechanism.

\subsection{Amplification by Subsampling}
In deep-learning, sub-sampling is often a necessity because of the size of the datasets. With differential privacy, it allows to leverage a property called \emph{privacy amplification by subsampling}. Since such property varies depending on the neighboring relation, we formalize it in the following lemma with the conventions of this article.

\begin{lemma}[Privacy amplification by subsampling] \label{lemma:subsampling}
    Let $n_1' \leq n_1, \dots, n_k' \leq n_k$. If a mechanism $M_{\text{batch}}$ is $(\varepsilon, \delta)$-DP on $\mathcal D_1^{n_1'} \times \ldots \times \mathcal D_k^{n_k'}$, the mechanism $M$ that 
    (i) selects $n_{i}'$ among the $n_i$ points in each category without replacement, and then
    (ii) applies $M_{\text{batch}}$ to the sampled dataset, is $(\varepsilon', \delta')$-DP on $\mathcal D_1^{n_1} \times \ldots \times \mathcal D_k^{n_k}$ where $\varepsilon' = \ln \p{1 + p\p{e^{\varepsilon} - 1}} , \quad \delta' = p \delta$ 
    and $p = \max \p{\frac{n_{1}'}{n_1}, \dots,  \frac{n_k'}{n_k}}$.
\end{lemma}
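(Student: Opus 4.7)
The plan is to couple the two subsampling procedures so that the problem reduces to a mixture between an ``identical outputs'' branch and an ``$(\epsilon,\delta)$-DP'' branch, and then to invoke the standard tight amplification bound.

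First, I would fix two neighbors $\vect{D} \sim_k \vect{\Tilde D}$ and use \Cref{definition:neighboring} to locate the unique category $j \in [k]$ where they differ. Since subsampling without replacement and the mechanism $M_{\text{batch}}$ are invariant under permutations within each category, I can assume without loss of generality that $\vect{D}$ and $\vect{\Tilde D}$ coincide in every entry except a single index $i^* \in [n_j]$ in category $j$, where $d_{i^*}^j \neq \tilde d_{i^*}^j$.

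Second, I would couple the two subsampling steps by drawing the same random subsets $S_i \subseteq [n_i]$ of size $n_i'$ (uniform without replacement) and using them on both $\vect{D}$ and $\vect{\Tilde D}$. Let $E = \{i^* \in S_j\}$, so that $\Prob\p{E} = n_j'/n_j \defeq q \leq p$. On $E^c$ the sampled batches are identical, hence the conditional laws of $M\p{\vect{D}}$ and $M\p{\vect{\Tilde D}}$ coincide; on $E$ the sampled batches are $\sim_k$-neighbors in $\mathcal D_1^{n_1'} \times \ldots \times \mathcal D_k^{n_k'}$, so the $(\epsilon,\delta)$-DP assumption on $M_{\text{batch}}$ makes the conditional laws $(\epsilon,\delta)$-indistinguishable.

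Third, I would write the marginal output distributions as mixtures $M\p{\vect{D}} = (1-q)\mu_0 + q \mu_1^{D}$ and $M\p{\vect{\Tilde D}} = (1-q)\mu_0 + q \mu_1^{\Tilde D}$, and invoke the tight subsampling amplification for such mixtures (the ``advanced joint convexity'' argument of Balle, Barthe and Gaboardi, 2018) to conclude $(\ln(1 + q(e^\epsilon - 1)), q\delta)$-indistinguishability. Monotonicity in the sampling rate together with $q \leq p$ then yields the claimed $(\epsilon',\delta')$-DP guarantee.

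The main obstacle is the third step: a naive bound using only the joint convexity of the hockey-stick divergence leaves an unwanted residual of order $q(1-q)(e^\epsilon-1)\,[\mu_1^{\Tilde D}(S) - \mu_0(S)]$ which is not in general nonpositive. Removing this residual and obtaining the tight $\ln(1 + q(e^\epsilon - 1))$ amplification genuinely requires the coupling construction of Balle et al., which I would cite rather than reprove from scratch.
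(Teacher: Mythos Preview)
Your proposal is correct and mirrors the paper's approach: the paper simply cites Theorem~29 of Steinke's subsampling survey and notes that the $\max$ over categories arises from the worst-case choice of the category containing the differing point, which is exactly the coupling-and-mixture argument you outline (with Balle--Barthe--Gaboardi in place of Steinke as the cited source for the tight amplification step). The only remark is that your monotonicity-in-$q$ step, while routine, is what the paper compresses into the phrase ``worst case analysis.''
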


\subsection{Privacy Accountanting}
In the influential article \cite{abadi2016deep}, the authors introduce the \emph{moment accountant} method, a framework for quantifying the privacy of a composition of subsampled Gaussian mechanisms. We now detail why similar methods \cite{dong2019gaussian} are applicable to Wasserstein gradients.

\begin{algorithm}[ht]
\caption{Sequential Computation of Subsampled Wasserstein Noisy Gradients}
\label{alg:sequentialgradients}
\begin{algorithmic}
\For{$t = 1$ to $T$}
    \State Select $n_i'$ among the $n_i$ points in each category without replacement.
    \State Compute $\hat{\nabla}_{\theta}W_2^2 \eqdef {\nabla}_{\theta}^{M,L_1,L_2}W_2^2 + \sigma \mathcal{N}(0, I_d)$ on the subsampled dataset.
    \State Publish $\hat{\nabla}_{\theta}W_2^2$.
    \State Wait for the optimizer to update $\theta$.
\EndFor
\end{algorithmic}
\end{algorithm}

Since our neighboring relation is based on the replacement relation and since we use subsampling with fixed batch size and without replacement, the classical moment accountant method \cite{abadi2016deep} does not apply. We thus turn to the accounting techniques of \cite{dong2019gaussian} that build on the theory of $f$-differential privacy and that are more suited to this scenario.
Using the notations of \cite{dong2019gaussian}, and denoting by $\Delta$ the sensitivity of $\nabla^{M,L_1,L_2}_{\theta}W_2^2$ (which is controlled by \Cref{theorem:gradient_sensitivity}), $\nabla^{M,L_1,L_2}_{\theta}W_2^2$ is $\frac{\Delta}{\sigma}$-GDP (for \emph{Gaussian Differential Privacy}) ignoring the subsampling step.
In order to account for subsampling, one would like to apply Theorem 4.2 in \cite{dong2019gaussian}. This is not possible since this article uses a different neighboring relation and a different form of subsampling. However, we can notice that we can substitute Lemma 4.4 in the proof of Theorem 4.2 in \cite{dong2019gaussian} by our \Cref{lemma:subsampling} and the rest of the proof follows. We thus get that the overall procedure described by Algorithm 1 is $C_p(G_{(\sigma/\Delta)^{-1}})^{\otimes T}$-DP where $p = \max (n_{1}'/{n_1}, \dots, {n_k'}/{n_k})$ with the formalism of $f$-differential privacy \cite{dong2019gaussian}.

Finally, our approach aligns with the privacy accounting framework in the asymptotic regime described in Section 5.2 of \cite{dong2019gaussian}. In our experiments, we implement this privacy accountant using the \cite{opacus} library.

Note that this accountant can be replaced by any accountant, tailored for  fixed size  batch sampling without replacement and with the replacement neighboring relation.

\section{Extension to the sliced Wasserstein distance ($d \geq 2$)} \label{append:slice}
As pointed out in Remark \ref{remark:extension_sliced}, the results of this paper can be extended to higher dimensions by considering the sliced Wasserstein distance. Assume that $g_\theta(x),h_\theta(x)\in\mathbb R^d$. Following the notation of \cref{sec:SensitivityPrivacy}, we are interested now in bounding the sensitivity of the gradient of the (squared) sliced Wasserstein distance between the distributions $g_\theta\# P_{\vect{X}}$ and $h_\theta\# P_{\vect{Z}}$ in $\mathbb R^d$,  defined as 
\begin{align*}
    {SW}_{2}^2(&g_\theta\# P_{\vect{X}},h_\theta \# P_{\vect{Z}}) = \int_{\mathbb S^{d-1} }W_2^2\Bigl(\operatorname{Pr}_{\vartheta}\# (g_\theta\# P_{\vect{X}}),\operatorname{Pr}_{\vartheta}\# (h_\theta \# P_{\vect{Z}})\Bigr) d\mu(\vartheta) \ ,
\end{align*}
where $\mu$ represents the uniform measure on $\mathbb S^{d-1}$, the unit sphere of $\mathbb R^d$. From a practical standpoint, we are mainly interested in the study of the gradient of its Monte Carlo approximation given by $k$ i.i.d. random directions $\vartheta_1,\ldots,\vartheta_k\in\mathbb S^{d-1}$,
\begin{align*}
{SW}_{2,k}^2(&g_\theta\# P_{\vect{X}},h_\theta \# P_{\vect{Z}}) = \frac{1}{k} \sum_{l=1}^k W_2^2\Bigl(\operatorname{Pr}_{\vartheta_l}\# (g_\theta\# P_{\vect{X}}),\operatorname{Pr}_{\vartheta_l}\# (h_\theta \# P_{\vect{Z}})\Bigr) \ .
\end{align*}
As in the proof of \cref{theorem:gradient_sensitivity}, it suffices to bound the sensitivity of the gradient with respect to the substitution neighboring relation $\vect X \sim_1 \vect{\Tilde X}$. If we define $\Phi(\vect X)= \nabla_\theta {SW}_{2}^2(g_\theta\# P_{\vect{X}},h_\theta \# P_{\vect{Z}})$ and $\Phi_\vartheta(\vect X)= \nabla_\theta W_2^2(\operatorname{Pr}_{\vartheta}\# (g_\theta\# P_{\vect{X}}),\operatorname{Pr}_{\vartheta}\# (h_\theta \# P_{\vect{Z}}))$, by the chain rule and the same reasoning as in the proof of Theorem 1 in \cite{bonnel2015slicedRadon}, we know that under suitable smoothness assumptions, in the set of non-repeated points $\Gamma =\{\theta: g_\theta(x_i)\neq g_\theta(x_j) , h_\theta(z_i)\neq h_\theta(z_j) \textnormal{ for }i\neq j\} $,
\begin{equation*}
\begin{aligned}
    \Phi(\vect X) &= \int_{\mathbb S^{d-1}} \Phi_{\vartheta}(\vect X) d\mu(\vartheta) \ .
\end{aligned}
\end{equation*}
As in \cref{sec:WassersteinGradients}, we can define the \textit{gradient} $\Phi(\vect X)$ by this expression, even outside the set of differentiability points $\Gamma$, and provide privacy guarantees for every point.  Similarly, if we consider the Monte Carlo approximation of the gradient $\Phi(\vect X)= \nabla_\theta {SW}_{2,k}^2(g_\theta\# P_{\vect{X}},h_\theta \# P_{\vect{Z}})$, it follows that
$\Phi(\vect X) = \frac{1}{k}\sum_{l=1}^k \Phi_{\vartheta}(\vect X)$. In any case, we can conclude that 
\begin{equation*}
\begin{aligned}
    \Delta_2\Phi = \sup_{\vect X\sim \vect{\Tilde X}} \|\Phi(\vect X)-\Phi(\vect{\Tilde X})\|_2\leq \sup_{\vartheta\in\mathbb S^{d-1}} \Delta_2\Phi_\vartheta 
\end{aligned}
\end{equation*}

The sensitivity of $\Phi_\vartheta$ can be controlled with the one-dimensional results in Section \ref{sec:SensitivityPrivacy}. Note that if we define $ g_\theta^\vartheta(x)= (\operatorname{Pr}_{\vartheta}\circ g_\theta )(x)=  \vartheta^T g_\theta(x)$ and  $ h_\theta^\vartheta(z)= (\operatorname{Pr}_{\vartheta}\circ h_\theta )(z) =\vartheta^T h_\theta(z)$, then $\Phi_\vartheta(\vect X)= \nabla_\theta W_2^2(g_\theta^\vartheta\# P_{\vect{X}}, h_\theta^\vartheta \# P_{\vect{Z}})$, and we can conclude 

$$ \Delta_2\Phi_\vartheta   \leq  4M\frac{3L_1+L_2}{n}$$
provided that:
\begin{enumerate}
        \item[(i)] $|g_\theta^\vartheta(x)| =|\vartheta^T g_\theta(x)|\leq M$, $|h_\theta^\vartheta(x)|=|\vartheta^T h_\theta(z)|\leq M$ .
        \item[(ii)] $\|\nabla_\theta g_\theta^\vartheta(x)\|_2 =  \|\vartheta^T \mathcal J_\theta g_\theta(x)\|_2  \leq L_1$,  $\|\nabla_\theta h_\theta^\vartheta(z)\|_2=\|\vartheta^T \mathcal J_\theta h_\theta(z)\|_2 \leq L_2$.
\end{enumerate} 
In particular, both inequalities are verified uniformly in $\vartheta$ if we impose the following, more natural conditions: 
\begin{enumerate}
\item[(i)] $\|g_\theta^\vartheta(x)\|_2\leq M$, $\|h_\theta(z)\|_2\leq M$
 \item[(ii)] $\| \mathcal J_\theta g_\theta(x)\|_2= \underset{\|\eta\|_2=1}{\sup} \| \mathcal J_\theta g_\theta(x) \eta \|_2  \leq L_1$, $\| \mathcal J_\theta h_\theta(z)\|_2= \underset{\|\eta\|_2=1}{\sup} \| J_\theta h_\theta(z) \eta \|_2  \leq L_2$.
\end{enumerate} 
 By the properties of the spectral norm, $\|\mathcal J_\theta g_\theta(x)\|_2 = \|\mathcal J_\theta g_\theta(x)^T \|_2$. Therefore, for every $\vartheta\in\mathbb S^{d-1}$ and $x$, 
 \begin{equation*}
     \|\nabla_\theta g_\theta^\vartheta(x)\|_2 =  \|\vartheta^T \mathcal J_\theta g_\theta(x)\|_2 = \| \mathcal J_\theta g_\theta(x)^T \vartheta\|_2  \leq L_1 \ ,
 \end{equation*}
and similarly for $h_\theta$. As in the one dimensional setting, assumption (ii) is verified if $g_\theta$ and $h_\theta$ are $L_1$-Lipschitz and  $L_2$-Lipschitz with respect to $\theta$. To see this, note that if $\|\eta\|_2=1$, by the Lipschitz condition,
\begin{align*}
     \|\mathcal J_\theta g_\theta(x) \eta \|_2 &= \Big\| \lim_{t\rightarrow0} \frac{g_{\theta+t\eta}(x)-g_\theta(x)}{t} \Big\| \leq L_1\|\eta\|_2 = L_1
\end{align*}
Therefore, Theorem \ref{theorem:gradient_sensitivity} can be extended to the multidimensional setting with the sliced Wasserstein distance as follows:
\begin{theorem}\label{theorem:gradient_sensitivity_sliced}
    With all the previous notation, assume that there exists constants $M,L_1,L_2\geq 0$ such that for each $\theta\in\Theta$, $x\in\mathcal X$ and $z\in\mathcal Z$,
    \begin{enumerate}
\item[(i)] $\|g_\theta^\vartheta(x)\|_2\leq M$, $\|h_\theta(z)\|_2\leq M$
 \item[(ii)] $\| \mathcal J_\theta g_\theta(x)\|_2= \underset{\|\eta\|_2=1}{\sup} \| \mathcal J_\theta g_\theta(x) \eta \|_2  \leq L_1$, $\| \mathcal J_\theta h_\theta(z)\|_2= \underset{\|\eta\|_2=1}{\sup} \| J_\theta h_\theta(z) \eta \|_2  \leq L_2$.
\end{enumerate} 
    Then, 
 \begin{itemize}
\item[(a)]  Under neighboring relation $\sim_1$ in $\mathcal D = \mathcal X^n$, if we define $\Phi_\theta(\vect X)$ as $\nabla_{\theta} SW_{2}^2(g_\theta\# P_{\vect X}, h_\theta\# P_{\vect Z})$ or its Monte Carlo approximation $\nabla_{\theta} SW_{2,k}^2(g_\theta\# P_{\vect X}, h_\theta\# P_{\vect Z})$  then
$$ \Delta_2\Phi_\theta   \leq  4M\frac{3L_1+L_2}{n} \ .$$
\item[(b)] Under neighboring relation $\sim_2$ in $\mathcal D = \mathcal X^n\times \mathcal Z^m$, if we define $\Psi_\theta(\vect X, \vect Z)$ as $\nabla_{\theta} SW_{2}^2(g_\theta\# P_{\vect X}, h_\theta\# P_{\vect Z})$ or its Monte Carlo approximation $\nabla_{\theta} SW_{2,k}^2(g_\theta\# P_{\vect X}, h_\theta\# P_{\vect Z})$, then
$$ \Delta_2\Psi_\theta   \leq  4M\max\Bigl\{ \frac{3L_1+L_2}{n}  , \frac{L_1+3L_2}{m}  \Bigr\}\ .$$
    \end{itemize}
\end{theorem}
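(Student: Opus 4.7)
The plan is to reduce the sliced statement to the one‑dimensional bound already proved in \cref{theorem:gradient_sensitivity}, by exploiting the fact that the (Monte‑Carlo) sliced Wasserstein gradient is a convex combination of one‑dimensional Wasserstein gradients along projected pushforwards. For each direction $\vartheta \in \mathbb{S}^{d-1}$, I would introduce the projected scalar‑valued functions $g_\theta^\vartheta(x) \eqdef \vartheta^T g_\theta(x)$ and $h_\theta^\vartheta(z) \eqdef \vartheta^T h_\theta(z)$, which take values in $\mathbb{R}$. By linearity of the pushforward, $\operatorname{Pr}_\vartheta \# (g_\theta \# P_{\vect X}) = g_\theta^\vartheta \# P_{\vect X}$ (and similarly for $h$), so the per‑slice object $\Phi_\vartheta(\vect X) \eqdef \nabla_\theta W_2^2(g_\theta^\vartheta \# P_{\vect X}, h_\theta^\vartheta \# P_{\vect Z})$ is exactly the one‑dimensional gradient considered in \cref{theorem:gradient_sensitivity}.

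Next I would establish the integral/sum decomposition. Following the chain rule as in \cref{corollary:FormulaGradientsGeneral} at generic points where the order statistics of $g_\theta(x_i)$ (resp.\ $h_\theta(z_j)$) are strict after projection, one obtains
\begin{equation*}
\Phi(\vect X) = \int_{\mathbb{S}^{d-1}} \Phi_\vartheta(\vect X)\, d\mu(\vartheta)
\quad \text{or} \quad
\Phi(\vect X) = \frac{1}{k}\sum_{l=1}^k \Phi_{\vartheta_l}(\vect X),
\end{equation*}
for the exact and Monte‑Carlo versions respectively. Outside the measure‑zero set of non‑generic points one simply \emph{defines} $\Phi(\vect X)$ by the same formula, as in \cref{sec:WassersteinGradients}, noting that this does not affect the sensitivity analysis. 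Since $\| \cdot \|_2$ is convex, Jensen's inequality applied to the integral (resp.\ triangle inequality for the finite sum) yields
\begin{equation*}
\| \Phi(\vect X) - \Phi(\vect{\Tilde X}) \|_2 \leq \sup_{\vartheta \in \mathbb{S}^{d-1}} \| \Phi_\vartheta(\vect X) - \Phi_\vartheta(\vect{\Tilde X}) \|_2,
\end{equation*}
whence $\Delta \Phi \leq \sup_\vartheta \Delta \Phi_\vartheta$, and identically for $\Psi$ under $\sim_2$.

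It then remains to propagate the boundedness hypotheses from the vector‑valued setting down to the scalar assumptions required by \cref{theorem:gradient_sensitivity}, uniformly in $\vartheta$. Cauchy–Schwarz gives $|g_\theta^\vartheta(x)| \leq \|\vartheta\|_2 \|g_\theta(x)\|_2 \leq M$ and likewise $|h_\theta^\vartheta(z)| \leq M$. For the gradient with respect to $\theta$, the chain rule gives $\nabla_\theta g_\theta^\vartheta(x) = \mathcal{J}_\theta g_\theta(x)^T \vartheta$, so the definition of the spectral norm yields $\| \nabla_\theta g_\theta^\vartheta(x) \|_2 \leq \| \mathcal{J}_\theta g_\theta(x) \|_2 \|\vartheta\|_2 \leq L_1$, and similarly $\|\nabla_\theta h_\theta^\vartheta(z)\|_2 \leq L_2$. \cref{theorem:gradient_sensitivity} applied to $g_\theta^\vartheta, h_\theta^\vartheta$ with the constants $M,L_1,L_2$ therefore gives $\Delta \Phi_\vartheta \leq 4M(3L_1 + L_2)/n$ under $\sim_1$, and the corresponding max bound under $\sim_2$; both bounds are independent of $\vartheta$, so taking the supremum over the sphere preserves them and delivers (a) and (b).

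The main obstacle is essentially bookkeeping rather than a genuine mathematical difficulty: the subtle step is justifying that the integral/average decomposition remains a meaningful definition of the "gradient" outside the differentiability set, but this is resolved exactly as in the one‑dimensional case by using the formula as a definition and observing that the upper bound on $\Delta \Phi_\vartheta$ from \cref{theorem:gradient_sensitivity} is established for all points (not merely differentiability points). Once that convention is fixed, everything else is a direct application of Cauchy–Schwarz, the spectral norm inequality, and convexity of the norm.
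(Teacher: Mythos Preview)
Your proposal is correct and follows essentially the same route as the paper: reduce to the one-dimensional result by writing the sliced (or Monte-Carlo) gradient as an average of per-direction gradients $\Phi_\vartheta$, bound $\Delta\Phi \le \sup_\vartheta \Delta\Phi_\vartheta$ via convexity of the norm, and then verify that $g_\theta^\vartheta=\vartheta^T g_\theta$ and $h_\theta^\vartheta=\vartheta^T h_\theta$ satisfy the scalar hypotheses of \cref{theorem:gradient_sensitivity} uniformly in $\vartheta$ using Cauchy--Schwarz and the spectral-norm inequality. The only cosmetic difference is that the paper writes out the spectral-norm step via an explicit dual vector, whereas you use $\nabla_\theta g_\theta^\vartheta=\mathcal J_\theta g_\theta^T\vartheta$ directly; both arguments are equivalent.
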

\vspace{0.5cm}

\begin{remark}\label{remark:degraded_bound}
From a computational point of view, if we want to define a clipped approximation  $\mathcal J^{L_1}_\theta g_\theta(x)$  of $\mathcal J_\theta g_\theta(x)$ that verifies Assumption (ii) in Theorem \ref{theorem:gradient_sensitivity_sliced}, this might be done by clipping the eigenvalues of the singular value decomposition of $\mathcal J_\theta g_\theta(x)$. This should be done at each step, for each $x_i$ in the batch. To simplify the computation and enable easy parallelization, we have adopted a suboptimal, naive alternative approach. Given $g_\theta = (g_\theta^1,\ldots, g_\theta^d)$, we define 
\begin{equation*}
    \mathcal J^{L_1}_\theta g_\theta(x) = \left( \begin{array}{c}
         \operatorname{clip}_{\frac{L_1}{\sqrt{d}}} (\nabla_\theta g_\theta^1(x) ) \\
          \vdots \\
            \operatorname{clip}_{\frac{L_1}{\sqrt{d}}} (\nabla_\theta g_\theta^d(x))
    \end{array}\right) \ .
\end{equation*}
Then, it follows that for every $\eta$ verifying  $\|\eta\|_2 = 1$,  
$$\| \mathcal J^{L_1}_\theta g_\theta(x)\eta \|_2= \Bigl( \sum_{i=1}^d \langle \operatorname{clip}_{\frac{L_1}{\sqrt{d}}} (\nabla_\theta g_\theta^i(x)), \eta \rangle^2 \Bigl) ^{1/2}  \leq \Bigl( \sum_{i=1}^d \frac{L_1^2}{{d}}\Bigr)^{1/2} =  L_1$$
\end{remark}
which implies the desired bound on the spectral norm, $\| \mathcal J^{L_1}_\theta g_\theta(x)\|_2 \leq L_1.$

\section{Additional details on the private sliced Wasserstein autoencoder}\label{appendix:SWAE}

\begin{table}[H]
\centering
\begin{tabular}{p{6.5cm}p{6.5cm}
}
\hline
\textbf{Encoder} & \textbf{Decoder} \\
\hline
Input: $1 \times 28 \times 28$ image & Input: latent vector ($\mathbb{R}^{\text{latent\_dim}}$) \\
Conv2D (8 filters, $3 \times 3$), LeakyReLU (0.2) & FC: latent\_dim $\rightarrow$ 64, ReLU \\
AvgPool ($2 \times 2$) $\rightarrow 8 \times 14 \times 14$ & FC: 64 $\rightarrow$ 128, ReLU \\
Conv2D (16 filters, $3 \times 3$), LeakyReLU (0.2) & FC: 128 $\rightarrow$ 784, ReLU \\
AvgPool ($2 \times 2$) $\rightarrow 16 \times 7 \times 7$ & Reshape to $16 \times 7 \times 7$ \\
Conv2D (16 filters, $3 \times 3$), LeakyReLU (0.2) & Conv2D (16 filters, $3 \times 3$), LeakyReLU (0.2) \\
Flatten $\rightarrow \mathbb{R}^{784}$ & Upsample $\rightarrow 16 \times 14 \times 14$ \\
FC: 784 $\rightarrow$ 128, ReLU & Conv2D (8 filters, $3 \times 3$), LeakyReLU (0.2) \\
FC: 128 $\rightarrow$ 64, ReLU & Upsample $\rightarrow 8 \times 28 \times 28$ \\
FC: 64 $\rightarrow$ latent\_dim & Conv2D (1 filter, $3 \times 3$), Sigmoid \\
\hline
\end{tabular}
\vspace{0.2cm}
\caption{Architecture of the convolutional autoencoder.}
\label{tab:autoencoder_architecture}
\end{table}

\begin{figure}[H]
    \centering
\includegraphics[width=1\linewidth]{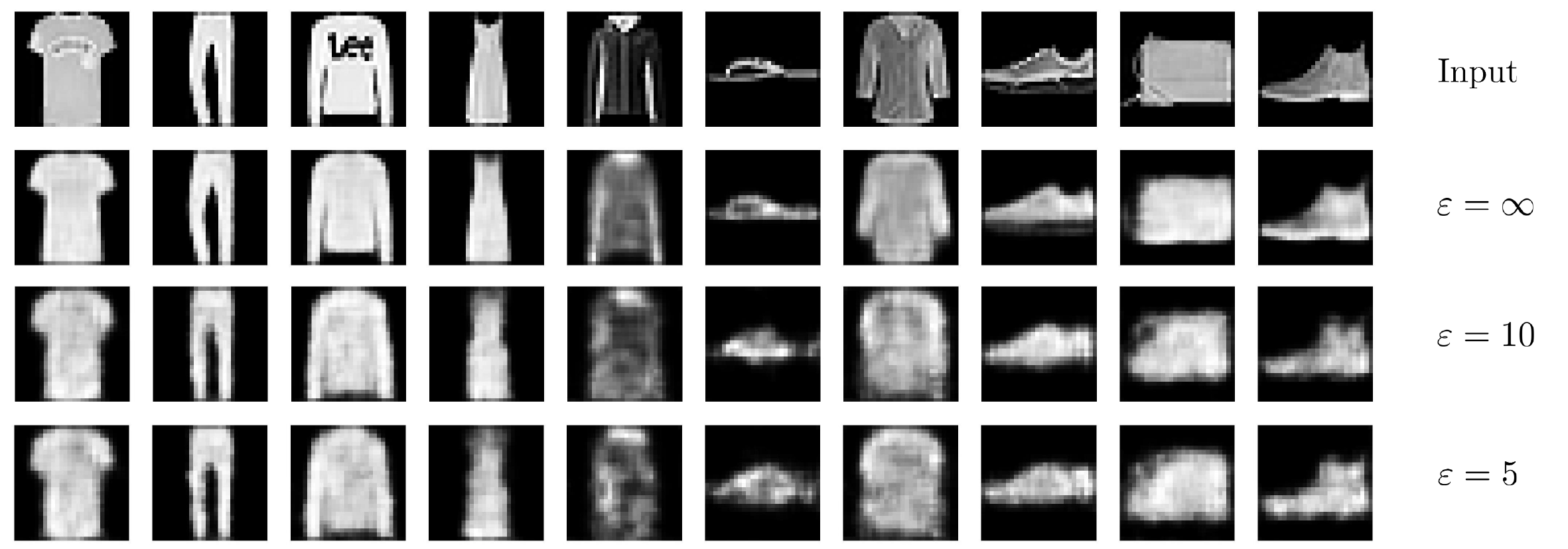}
    \caption{Reconstructed FASHION digits from the test dataset. The first row shows the earliest sample of each digit (0–9) in the test set. Subsequent rows display the corresponding reconstructions produced by the trained autoencoder under $(\varepsilon,\delta)$-DP, with $\delta = 10^{-5}$ and varying 
    values of $\varepsilon$.}
    \label{fig:fashion_reconstruction}
\end{figure}

\begin{figure}[H]
    \centering
\includegraphics[width=1\linewidth]{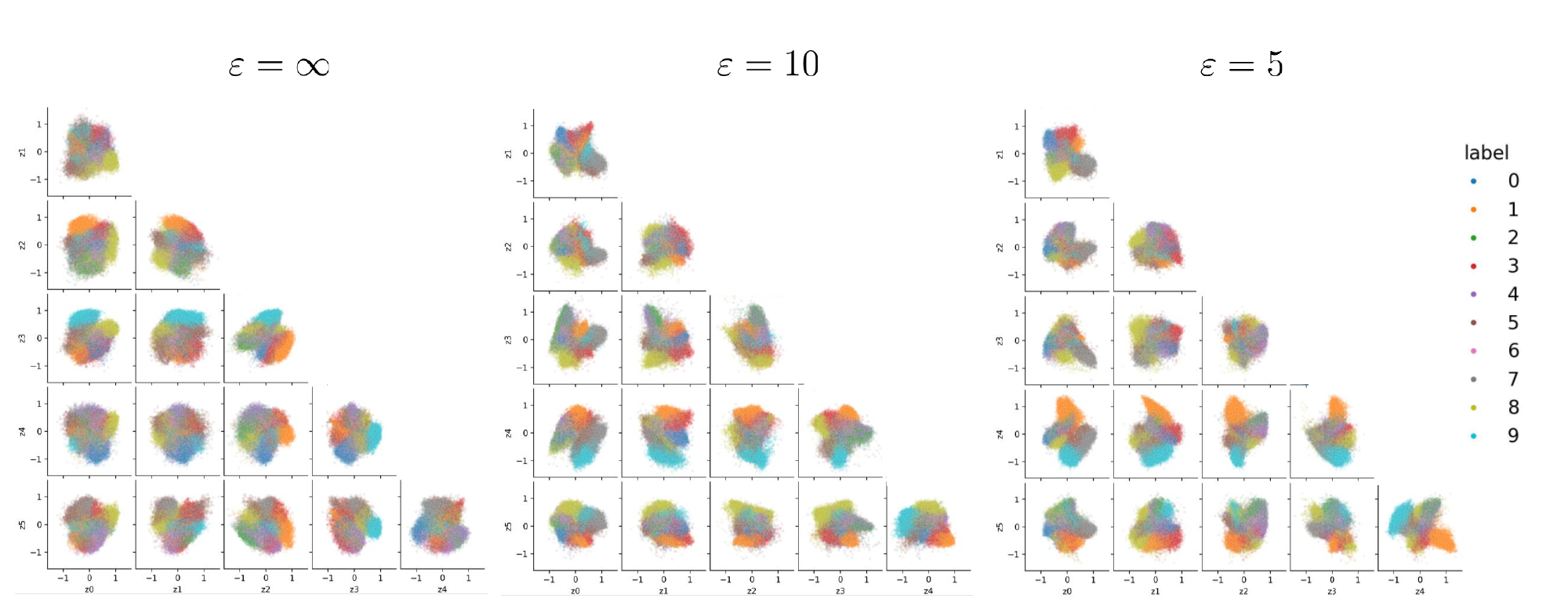}
    \caption{Encoded latent space FASHION samples for the autoencoder,  trained under $(\varepsilon,\delta)$-DP, with $\delta = 10^{-5}$ and varying 
    values of $\varepsilon$.}
    \label{fig:fashion_encoding}
\end{figure}

\begin{figure}[H]
    \centering
\includegraphics[width=1\linewidth]{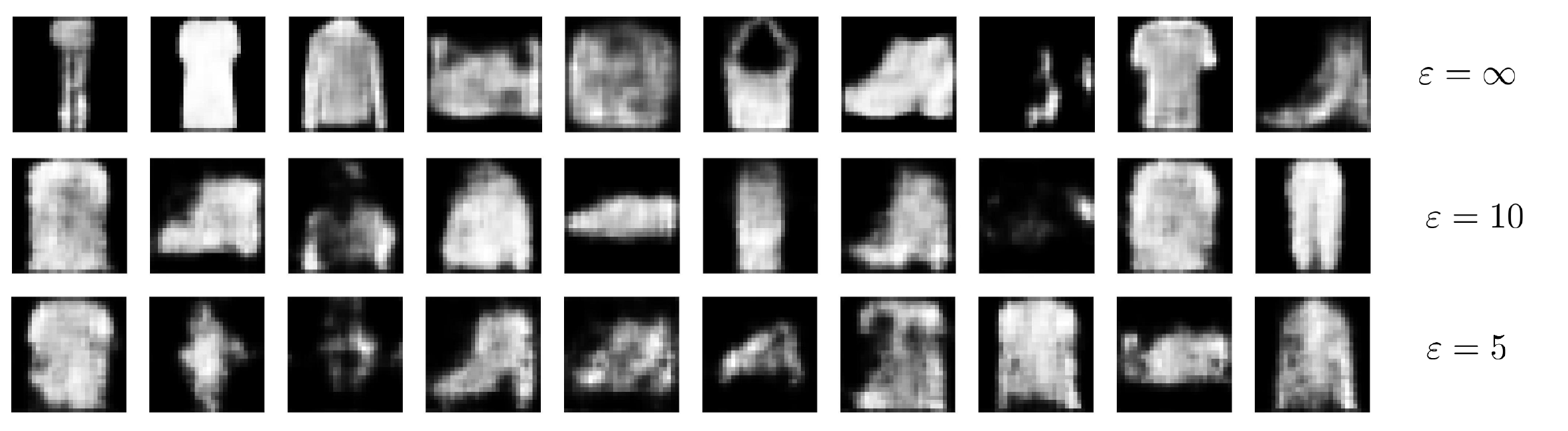}
    \caption{Generated samples from the FASHION autoencoder trained under $(\varepsilon,\delta)$-DP, with $\delta = 10^{-5}$ and varying 
    values of $\varepsilon$.}
    \label{fig:generated_fashion}
\end{figure}

\section{Fairness via private in-processing}\label{appendix:fairness}

\subsection{General method}

Let us begin by recalling the general framework introduced in Section~\ref{section:fairness}. Let $ \vect D$ denote a dataset with $n$ samples $(x_i,a_i, y_i)$ or $(x_i,a_i)$, where $x_i$ are the non-sensitive attributes, $a_i\in \{0,1\}$ is the sensitive attribute and $y_i$ the response variable, only available in supervised problem. 
We consider the general empirical risk minimization problem:
\begin{equation}\label{loss:ERM}
    \min_{\theta} \mathscr L(\theta) \eqdef \min_{\theta} \frac{1}{n} \sum_{i=1}^n \ell(g_\theta(x_i)) \ ,
\end{equation}
where $\ell(g_\theta(x_i))$ is a shorthand for $\ell(g_\theta(x_i), y_i)$ in supervised problems and  $\ell(g_\theta(x_i), x_i)$ in unsupervised problems. The goal is to privately optimize a regularized version of this problem, incorporating a fairness penalty based on the sliced Wasserstein distance. As explained in Section \ref{section:swae}, the  finite-sum structure of the loss function translates to the gradient, and as long as $\|\nabla_\theta \ell(g_\theta(x))\|_2 \leq C$ for all $x\in \mathcal X$, the sensitivity of $\nabla_\theta \mathscr L(\theta)$ is bounded by $2C/n$, not only for the substitution relation $\sim_1$, but  for any k-end neighboring relation $\sim_k$. To choose an adequate fairness penalty, we have considered in this work two different fairness notions: \textit{Statistical parity} and \textit{Equality of Odds}. The first was already discussed in Section \ref{section:fairness}, we include it again for completeness.
We present this section for the general case of the sliced Wasserstein distance, note that the case $d=1$ coincides with the one-dimensional Wasserstein distance. As in Section~\ref{section:fairness}, let $\varphi_\theta$ denote the representation to which the fairness penalty is applied. That is, $\varphi_\theta = g_\theta$ if the penalty is applied at the output, or $g_\theta = \psi_\theta \circ \varphi_\theta$ if it is applied to an intermediate representation (e.g., an intermediate layer of a neural network). In either case, for any distributions $P,Q$, the equality  $\mathcal L( \varphi_\theta \# P ) = \mathcal L( \varphi_\theta \# Q )$ implies $\mathcal L( g_\theta \# P ) = \mathcal L( g_\theta \# Q )$, which justifies penalizing any intermediate representation. 

\paragraph{Statistical Parity (SP):} Statistical parity corresponds to the situation where  the algorithmic decision does not depend on the sensitive variable.  Statistical parity is thus satisfied if  $ \mathcal L( g_\theta(X) | A= 0 ) = \mathcal L ( g_\theta(X) | A=1 )
    $. Given our data, if we define $\vect X_j = (x_i: a_i=j)$, $n_j = \textnormal{length}(\vect X_j)$ for $j=0,1$, statistical parity can be favored by minimizing   
    \begin{equation}\label{eq:loss_SP}
    \begin{gathered}
    \mathscr L^{SP}_\alpha(\theta) = (1-\alpha)\mathscr L(\theta)  + \alpha {SW}_2^2\Bigl(  \varphi_\theta \# P_{\vect X_0}, \varphi_\theta \# P_{\vect X_1} \Bigr)  
    \end{gathered}
    \end{equation}
    where $\alpha \in [0,1]$ measures the weight of each part in the optimization. In order to establish privacy guarantees, we need to assume that $n_0$ and $n_1$ are fixed.
    
\paragraph{Equality of Odds (EO):} Beyond guaranteeing the same decision for all, which is not suitable in some cases where the sensitive variable impacts the decision, bias mitigation may require that the model performs with the same accuracy for all groups, often referred to as equality of odds. We focus only in the supervised case, where $y_i$ is available and takes values in $\{0,\ldots,R-1\}$. In this case, equality of odds is verified if 
$ \mathcal L( g_\theta(X) | A= 0, Y = k ) = \mathcal L ( g_\theta(X) | A=1 , Y = k)$ for all $k \in \{0,\ldots,R-1\}$. With the same ideas as before, if we define $\vect X_{j,k} = (x_i: a_i=j,y_i=k)$, $n_{j,k} = \textnormal{length}(\vect X_{j,k})$ for $j \in \{0,1\}, k\in\{0,\ldots,R-1\}$,
equality of odds bias mitigation can be enforced by training with loss
\begin{equation}\label{eq:loss_EOO}
  \mathscr L^{EO}_\alpha(\theta)  = (1-\alpha)\mathscr L(\theta) + \frac{\alpha}{R} \sum_{k=1}^R {SW}_2^2\Bigl(  \varphi_\theta \# P_{\vect X_{0,k}}, \varphi_\theta \# P_{\vect X_{1,k}} \Bigr) 
\end{equation}
To obtain privacy guarantees, now we need to impose that the values $n_{j,k}$ are fixed. \\

\begin{corollary} \label{th:fairandprivate}
        If $\varphi_\theta$ verifies {$\|\varphi_\theta(x)\|_2 \leq M$, $\|\mathcal J_\theta \varphi_\theta(x)\|_2 \leq L$} and $\|\nabla_\theta \ell(g_\theta(x))\|_2 \leq C$, then 

    $\bullet$ For SP, under $\sim_2$, the sensitivity of  $\nabla_\theta \mathscr L^{SP}_\alpha(\theta)$ {or its MC approximation} is bounded by     
    \begin{equation}\label{eq:sensitivity_SP}
     (1-\alpha) \frac{2C}{n} + \alpha \frac{16 M L }{\min \{n_0,n_1\}} \ .
\end{equation}        
    $\bullet $ For EO, under $\sim_{2R}$, the sensitivity of $\nabla_\theta \mathscr L^{EO}_\alpha(\theta)$  {or its MC approximation} is bounded by     
    \begin{equation}\label{eq:sensitivity_EO}
     (1-\alpha) \frac{2C}{n} + \frac{\alpha}{R} \frac{16 M L }{\min_{j,k} \{n_{j,k}\}} \ .
    \end{equation} 
    \end{corollary}
    
\begin{remark}
Our privacy guarantees in the fairness framework are built upon the knowledge of class sizes. The importance of controlling these sizes has been previously recognized. For example, \cite{lowi2022SDPfairLearning} imposes a restriction on the minimum proportion of elements in each class, while \cite{Ghoukasian2024DPFairBinaryClassif} and \cite{xian2024DPfairRegression} derive privacy guarantees that degrade with smaller class sizes. Conceptually, our framework for establishing privacy guarantees is very sound. Even though an attacker might learn  some information about the number of individuals in each class used during training, they cannot distinguish between the outputs of two datasets $\vect D$ and $\vect{\Tilde D}$ differing only in one individual from the same class.
\end{remark}

\subsection{Applications: Private bias mitigation in diverse learning tasks}

In order to demonstrate the versatility of our methodology for imposing fairness in different scenarios, we use an illustrative synthetic model to simulate bias in algorithmic decision-making.
Note that we do not provide comparisons with other application-specific methodologies, as our approach is highly general and does not include any of the statistical, convergence, or fairness guarantees described by other methods, see \cite{xu2019achievingDPandFairness,jagielski2019differentially,ding2020differentially,lowi2022SDPfairLearning,yaghini2023learning,Ghoukasian2024DPFairBinaryClassif} for the fair and private classification problem, or \cite{xian2024DPfairRegression} for fair and private one-dimensional regression.  Yet we provide, to the best of our knowledge, the first method to handle  novel problems such as multidimensional fair and private regression, or fair and private representation learning.

We consider $\vect D =\{(x_i,a_i,y_i^C,y_i)\}_{i=1}^n$ i.i.d. samples with the same distribution as $(X,A,Y^C,Y)$, where $X$ denotes the features, $A$ is the sensitive variable, $Y^C=(Y^C_1,Y^C_2)$ is a continuous response variable and $Y$ is a discrete version of $Y^C$, related by

\begin{enumerate}
    \item[(i)] $Y^C \sim U([0,1]\times [0,1])$
    \item[(ii)] $Y = I\Bigl(Y^C_2 >1 - Y^C_1 \Bigr)$ 
    \item[(iii)] $A = BY + (1-B)(1-Y)$, where $B\sim \text{Bernoulli}(p)$ independent of $Y$.
    \item[(iv)] $X_{core} = [ \underbrace{Y^C,\ldots,Y^C}_{d_{core}/2 \  times}] + N\bigl(0,\sigma^2_{core} I_{d_{core}}\bigr)$,   $\quad X_{sp} = [ \underbrace{A,\ldots,A}_{d_{sp} \  times}] +  N\bigl(0,\sigma^2_{sp}\ I_{d_{sp}}\bigr)$.
    \item[(v)] $X = [X_{core},X_{sp}]$
\end{enumerate}

Therefore, this generated data consists in a response variable $Y_C$, which is correlated with the sensitive attribute $A$. The features $X$ are divided into two parts: a first part $X_{core}$ which is a noisy transformation of $Y_C$, and a second spurious part $X_{sp}$ which is a noisy version of $A$. If $p$ is close to 1, most of the cases verify $A=Y$ and therefore, the decision of the algorithm relies highly  on the sensitive variable $A$. Bias in the algorithmic decision is created when the sensitive variable $A$ is not aligned with the decision. When $Y \neq A$, the learning task is more complicated since while $X_{core}$ is correlated with $Y$, the spurious part pushes towards the bad decision. This setting reproduces the characteristics of some of the main biases present in many data sets, for instance,  \cite{adult_2} or \cite{statlog_(german_credit_data)_144}  in supervised learning. We explore this problem across different scenarios, demonstrating how penalized models using our Wasserstein--based losses can help mitigate unfairness, according to various fairness notions, while maintaining differential privacy guarantees. All experiments are conducted on the same synthetic dataset, generated with the previous mechanism and values
 $n=30000$, $p=0.7$, $d_{core}=d_{sp}=8$, $\sigma^2_{core} = 1/5$, $\sigma^2_{sp} = 2/5$. \\

All  models are trained with DP-SGD as explained in \cref{sec:DeepLearningFramework}, with clipping constant $C>0$ for the individual gradients in \eqref{loss:ERM}, as usual in DP-SGD, and inner clipping constants $M,L>0$ for the Wasserstein gradient approximation \eqref{eq:gradient_approx} or its sliced version. In the latter case, all the experiments use the naive clipping procedure explained in \cref{remark:degraded_bound}. \cref{th:fairandprivate} and the procedure described in Section~\ref{sec:DeepLearningFramework}, enable us to compute the privacy budget obtained after $T$ iterations of DP-SGD. In particular, in all the experiments, we fix the number of iterations $T$ and the value of $\delta$, and compute the required noise to obtain $(\varepsilon,\delta)$-DP after $T$ iterations, for different values of the privacy budget $\varepsilon$ and the weight $\alpha\in[0,1]$ in the penalized loss functions \eqref{eq:loss_SP} and \eqref{eq:loss_EOO}.

Following the above notation,  $\vect X_j = (x_i: a_i=j)$, $n_j = \textnormal{length}(\vect X_j)$ for $j=0,1$, and $\vect X_{j,k} = (x_i: a_i=j,y_i=k)$, $n_{j,k} = \textnormal{length}(\vect X_{j,k})$ for $j,k  \in \{0,1\}$. Given our data generation procedure, we know that $\mathbb E(n_j)=n/2$, $\mathbb E(n_{j,j}) = pn/2$ and $\mathbb E(n_{j,1-j}) = (1-p)n/2$ for $j\in\{0,1\}$. In all the subsequent experiments, the batch sizes considered are $n'_j \approx n_j/5$ when minimizing \eqref{eq:loss_SP}, and $n'_{j,k} \approx n_{j,k}/5$ when minimizing \eqref{eq:loss_EOO}, where the approximation is related to internal parallelization of the gradient computations in the code detailed in Section \ref{sec:computational_details}. A summary of the sample sizes in the data and the batch sizes considered is presented in Table \ref{tab:sample_sizes}.  \\

\begin{table}[h]
\centering
\begin{tabular}{c|cc|c}
\toprule
 & \(Y=0\) & \(Y=1\) & Total \\
\midrule
\quad A=0 & 10608 / 1050 & 4564 / 450 & 15172 / 1500 \\
\quad A=1 & 4446 / 400 & 10382 / 1000 & 14828 / 1450 \\
\bottomrule
\end{tabular}
\vspace{0.3cm}
\caption{Sample sizes / batch sizes for different groups and subgroups. The center part displays the pairs  $n_{j,k}/n'_{j,k}$, while the right side displays the pairs $n_{j}/n'_{j}$. }
\label{tab:sample_sizes}
\end{table}

\subsection{Classification} 

First, we consider the problem of predicting the label $Y$ as a function of $X$. Our decision rule is based on logistic regression, where the function $g_\theta$ maps each $x_i$ to the predicted probability $g_\theta(x_i)\in (0,1)$. The classification rule is given by $G_\theta(x) = I(g_\theta(x)>1/2)$. $g_\theta$ is defined as a neural network with one layer and a sigmoid activation function, and the classification loss is defined as the binary cross-entropy. In or experiments, we have imposed fairness using the two notions presented above. Considering $\varphi_\theta = g_\theta$, we have trained our model with simple iterations of DP-SGD to minimize \eqref{eq:loss_SP} to enforce SP, and \eqref{eq:loss_EOO} to enforce EOO. In both cases, we have trained the model for different values of the weight $\alpha$ and the privacy budget $\varepsilon$, when we fix $\delta = 0.1/n$,  number of iterations $T=500$, clipping constants $C=5$, $M=L=1$  learning rate $=0.05$ and number of projections in the Monte Carlo approximation $= 50$. For each fairness notion, we present a brief overview of the methodology, followed by a detailed exposition of the results. We include statistical measures to assess the trade-off between fairness, privacy, and utility for each model.

\paragraph{Statistical parity:} Statistical parity in classification, it is usually measured by the Disparate Impact, defined as \begin{equation}\label{eq:disparate:impact}
        DI(G_\theta) = \frac{\mathbb P (G_\theta(X) = 1 | A = 0)}{\mathbb P (G_\theta(X) = 1 | A = 1)} \ .
    \end{equation}
    However, enforcing statistical parity by enforcing independence between $G_\theta(X)$ and $A$ often produces unstable solutions as discussed in \cite{krco2023mitigating} or \cite{barrainkua2024uncertainty}, hence many authors propose to mitigate not only the mean but the whole distribution of the predicted probabilities $g_\theta(X) \in (0,1)$ as in \cite{risser2022tacklingAlgorithm}, \cite{gouic2020projection} or \cite{chzhen2020fair}, the same approach we have adopted in this work.
    
    Figure \ref{fig:di} shows the results of training the model minimizing \eqref{eq:loss_SP} for different values of the weight $\alpha$ and the privacy budget $\varepsilon$.
    Two main conclusions can be drawn. First, it confirms that the Wasserstein penalization approach mitigates the unfair biases present in the data set. We can see that, for increasing values of $\alpha$, the histograms of the scores conditioned on the value of the sensitive variable get closer, leading to a progressive reduction of biases, as seen in the decreasing values of the disparate impact, albeit at the expense of accuracy, as expected. The second important conclusion is that adding privacy does not significantly alter the results of the optimization. For different privacy budgets $\varepsilon$, both the histogram and the computed measures do not change much across the rows. Moreover, Figure \ref{fig:di_losses} shows the training loss curve of the optimization for each value of $\alpha$ considered when $\varepsilon$ varies. Low values of $\varepsilon$ result in noisier versions of the loss curve, but they remain very close to the non-private counterpart.

\begin{figure}[h!]
    \centering
\includegraphics[width=1\linewidth]{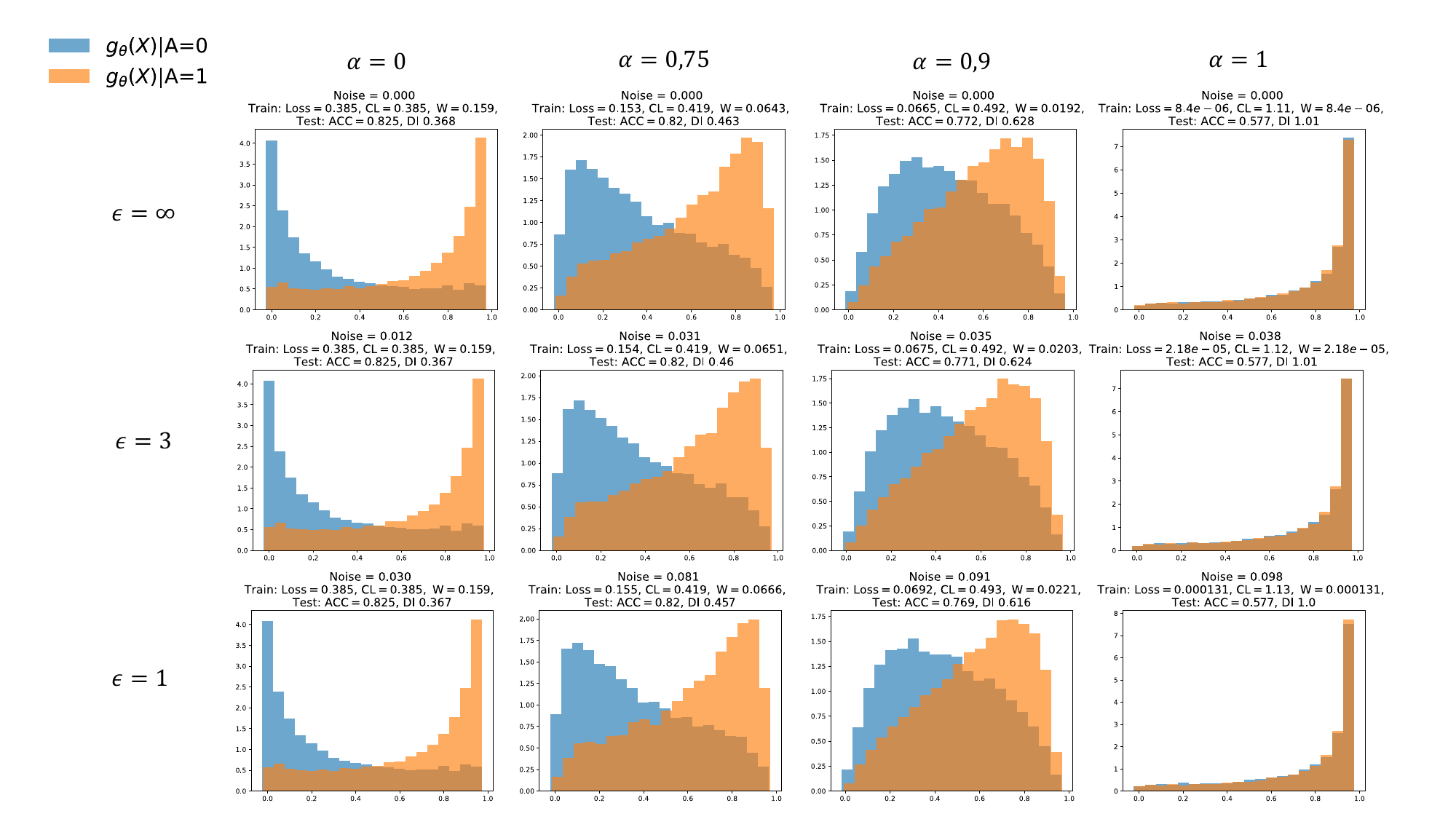}
    \caption{
    Histogram of the predicted probabilities $g_\theta(X)$ conditioned on the values of $A$ in the private classification model with SP regularization, for the different values of $\alpha$ and $\varepsilon$, and fixed $\delta = 0.1/n$. Above each graph we indicate the noise added at each step of DP-SGD to obtain the desired privacy level, the value of the loss \eqref{eq:loss_SP} in the training procedure, together with the individual value of the classification loss (CL) and the distributional Wasserstein loss (W). Last line includes accuracy (ACC) and disparate impact (DI) of the classification rule $G_\theta$ computed with independent test data.}
    \label{fig:di}
\end{figure}

\begin{figure}[h!]
    \centering
    \includegraphics[width=\linewidth]{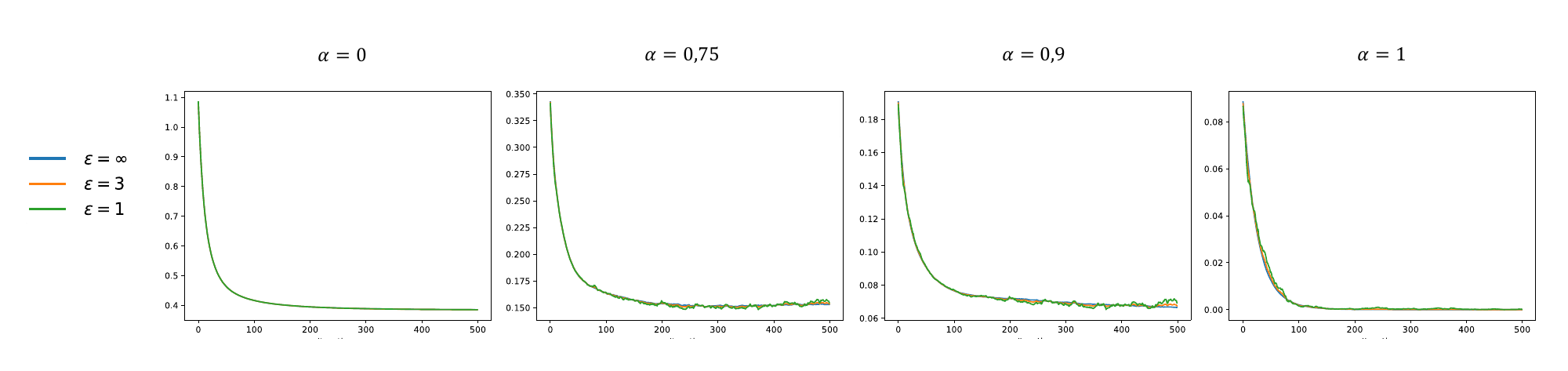}
    \caption{Training loss curve for the experiment of Figure \ref{fig:di}. Each graph represents the training loss \eqref{eq:loss_SP}  for a fixed value of $\alpha$ along the iterations of DP-SGD, for the different values of $\varepsilon$. }
    \label{fig:di_losses}
\end{figure}
  
\paragraph{Equality of odds:} %
To measure the unfairness of our a classification rule $G_\theta$, we consider the two following indexes:
    \begin{align}\label{eq:equality_odds_index}
         EO_1(G_\theta)& = \frac{\mathbb P (G_\theta(X) = 1 | A = 0, Y=1)}{\mathbb P (G_\theta(X) = 1 | A = 1, Y=1)} \ . \\
         EO_0(G_\theta)& = \frac{\mathbb P (G_\theta(X) = 1 | A = 0, Y=0)}{\mathbb P (G_\theta(X) = 1 | A = 1, Y=0)} \ . 
    \end{align}

Figure \ref{fig:eoo} shows the results of optimizing \eqref{eq:loss_EOO} for different values of $\alpha$ and  $\varepsilon$. 
The histogram of the distribution of the predicted probabilities, $g_\theta(X)|A=1,Y=j$ versus $g_\theta(X)|A=0,Y=j$, illustrates the model's capability to minimize discrepancies between the distributions as $\alpha$ increases, as shown by the values of $EO_0,EO_1$. From Figure \ref{fig:eoo}, we can also observe that private training has minimal impact on the model's fit across all values of $\alpha$. Similarly, it does not significantly affect the learning loss curve during optimization, as shown in Figure \ref{fig:eoo_losses}.

\begin{figure}[ht]
    \centering
\includegraphics[width=1\linewidth]{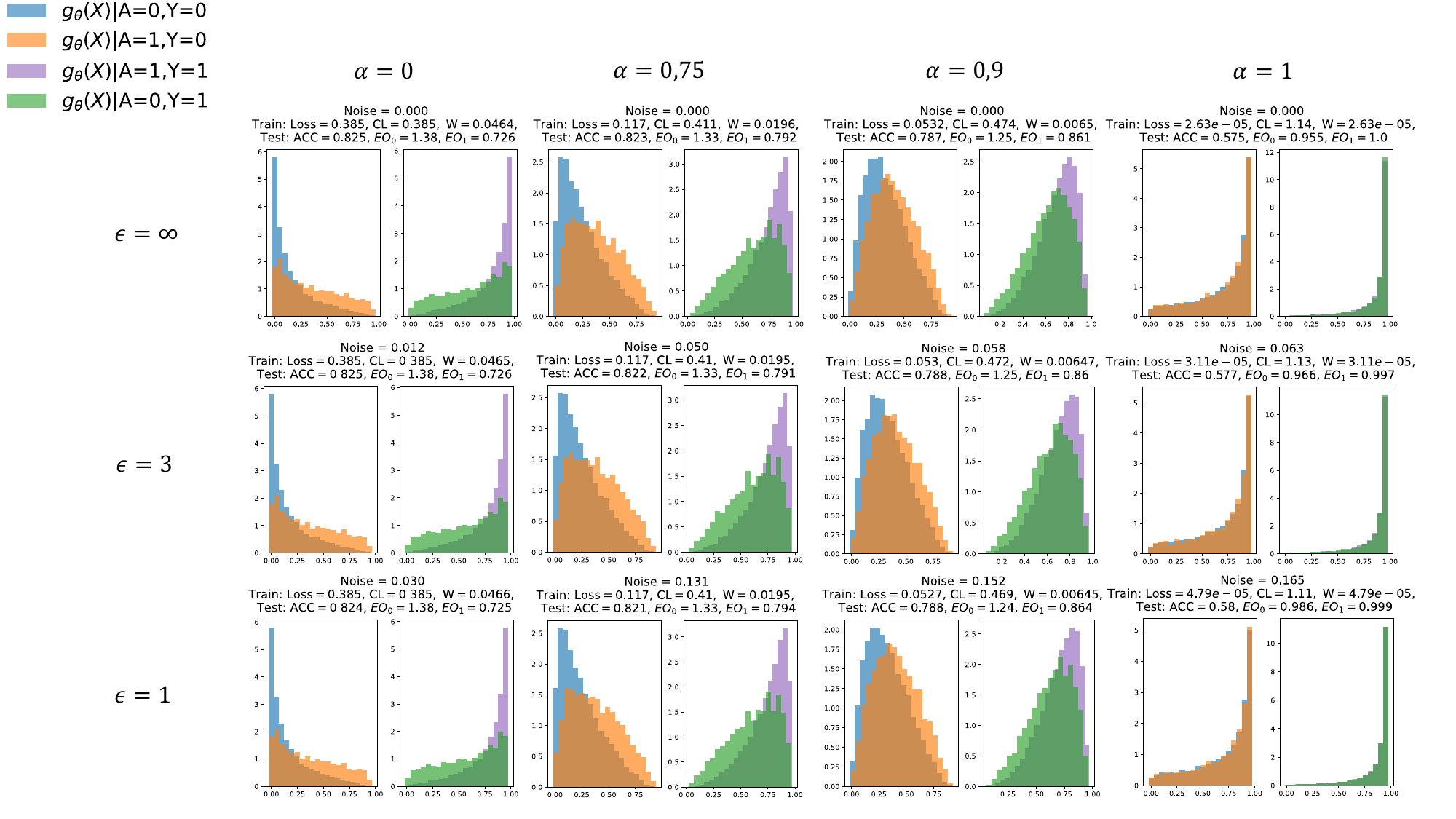}
    \caption{Histogram of the predicted probabilities $g_\theta(X)$ conditioned on the values of $A$ in the private classification model with EOO regularization, for the different values of $\alpha$ and $\varepsilon$, and fixed $\delta = 0.1/n$. Above each graph we indicate the noise added at each step of DP-SGD to obtain the desired privacy level, the value of the loss \eqref{eq:loss_EOO} in the training procedure, together with the individual value of the classification loss (L) and the  Wasserstein loss (W). Last line includes accuracy, $EO_0$ and $EO_1$ indexes computed with independent test data.}
    \label{fig:eoo}
\end{figure}

\begin{figure}[h!]
    \centering
    \includegraphics[width=0.88\linewidth]{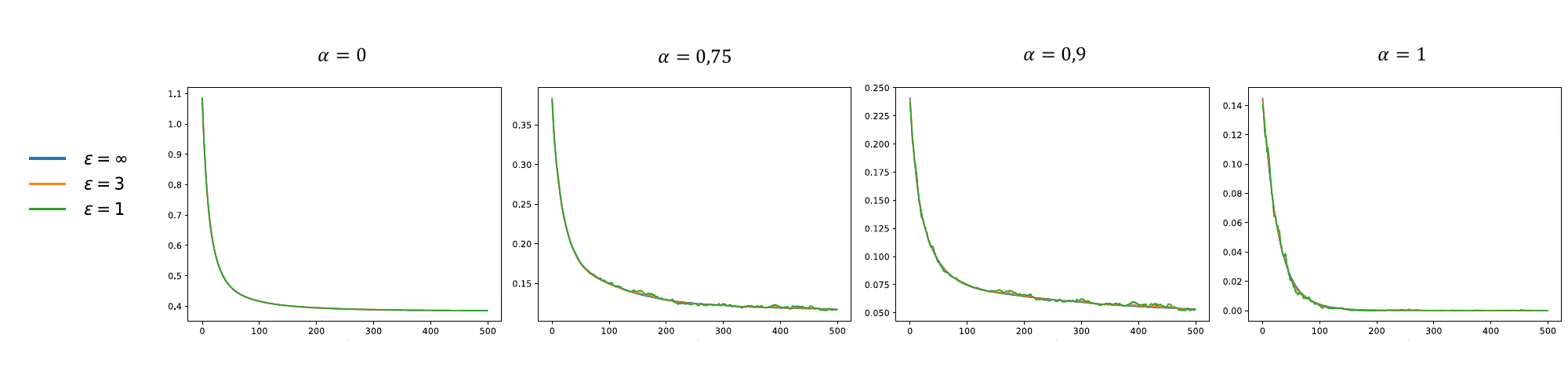}
    \caption{Training loss curve for the experiment of Figure \ref{fig:eoo}. Each graph represents the training loss \eqref{eq:loss_EOO} for a fixed value of $\alpha$ along the iterations of DP-SGD, for the different values of $\varepsilon$. }
    \label{fig:eoo_losses}
\end{figure}

\subsection{Regression}
 In our generating mechanism, the label $Y\in \{0,1\}$ is defined as a  set indicator function of a continuous response $Y^C \in [0,1]\times[0,1]$. From the data-generating process, it is easy to derive the distribution of $Y_C$ conditioned on the sensitive attribute. If $T_0$ denotes the triangle with vertices $(0,0),(0,1),(1,0)$ and $T_1$ the triangles with vertices $(0,1),(1,1),(1,0)$, then we know that $Y^C|A=j$ follows a mixture of the uniform distributions on $T_0$ and $T_1$, with weight $p$ in $T_0$ and $(1-p)$ in $T_1$ if $A=0$, and vice versa if $A=1$. The aim of this experiment is to perform  private and bi-dimensional fair regression over $Y_C$, which has never been considered before in the literature. To simplify our clipping bounds, we have centered our data to obtain a distribution in $[-1/2,1/2]\times[-1/2,1/2]$, and we have trained a two-layer neural network with hidden dimension 64, sigmoid activation function in the last layer,  with the output centered by subtracting  $(1/2,1/2)$, and minimizing the loss \eqref{eq:loss_SP}, with mean square loss $\ell(g_\theta(x),y^C) = \|g_\theta(x)- y^C\|_2^2$ and the sliced Wasserstein penalization applying to the output layer  $\varphi_\theta = g_\theta$. 

Figure \ref{fig:regression} shows the results of this experiment for different values of $\alpha$ and the privacy budget $\varepsilon$, with fixed $\delta = 0.1/n$, number of iterations $T=1000$, clipping constants $C=10$, $M=1/\sqrt{2}$, $L=\sqrt{2}$, learning rate $=0.05$ and number of projections in the Monte Carlo approximation $= 50$. From the visual inspection of the plots, we can appreciate that our statistical parity penalization helps to reduce the differences between the distributions of the predicted values. To aid visual inspection, we provide the values of the number of points over the diagonal for each class $A=0$ and $A=1$. If $g_\theta(x) = (g^1_\theta(x),g^2_\theta(x))$

$$ OD_0 = \frac{ \# \{X: g^2_\theta(X)>-g^1_\theta(X), A = 0\}}{n_0}$$
$$ OD_1 = \frac{ \# \{X: g^2_\theta(X)>-g^1_\theta(X), A = 1\}}{n_1}$$
 
Finally, \cref{fig:regression_losses} shows the convergence of the loss curve for the different values of $\alpha$ and $\varepsilon$. As in the previous examples, the private loss curves are simply noisy versions of the non-private ones.

\begin{figure}[h!]
    \centering
\includegraphics[width=1\linewidth]{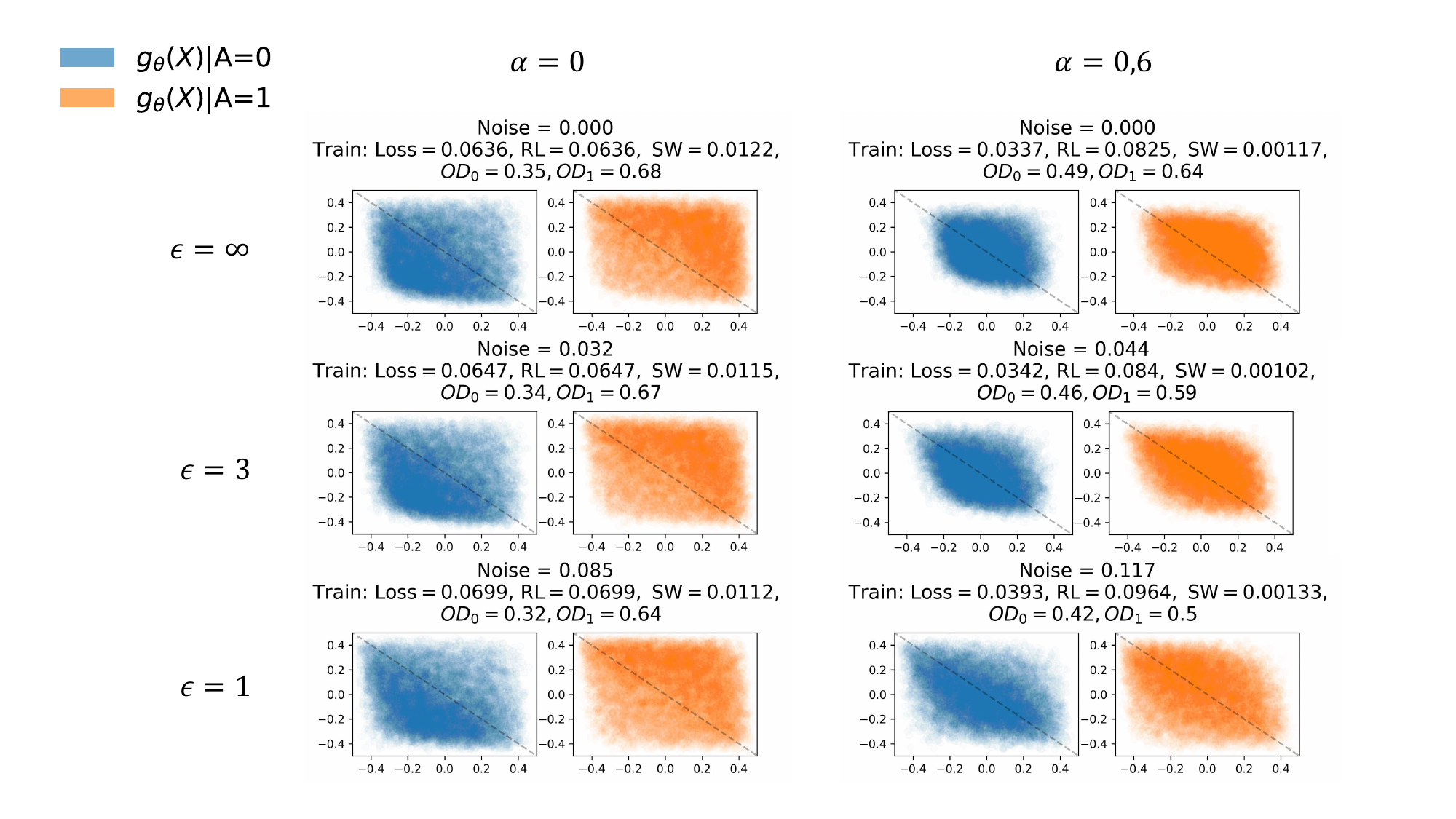}
    \caption{
    Predicted values $g_\theta(X)$ conditioned on the values of $A$ in the private regression model with SP regularization, for the different values of $\alpha$ and $\varepsilon$, and fixed $\delta = 0.1/n$. 
    Above each graph we indicate the noise added at each step of DP-SGD to obtain the desired privacy level, the value of the loss \eqref{eq:loss_SP} in the training procedure, together with the individual value of the regression loss (RL) and the  sliced Wasserstein loss (SW). Last line includes the indexes $OD_0$ and $OD_1$.}
    \label{fig:regression}
\end{figure}

\begin{figure}[h!]
    \centering
    \includegraphics[width=0.6\linewidth]{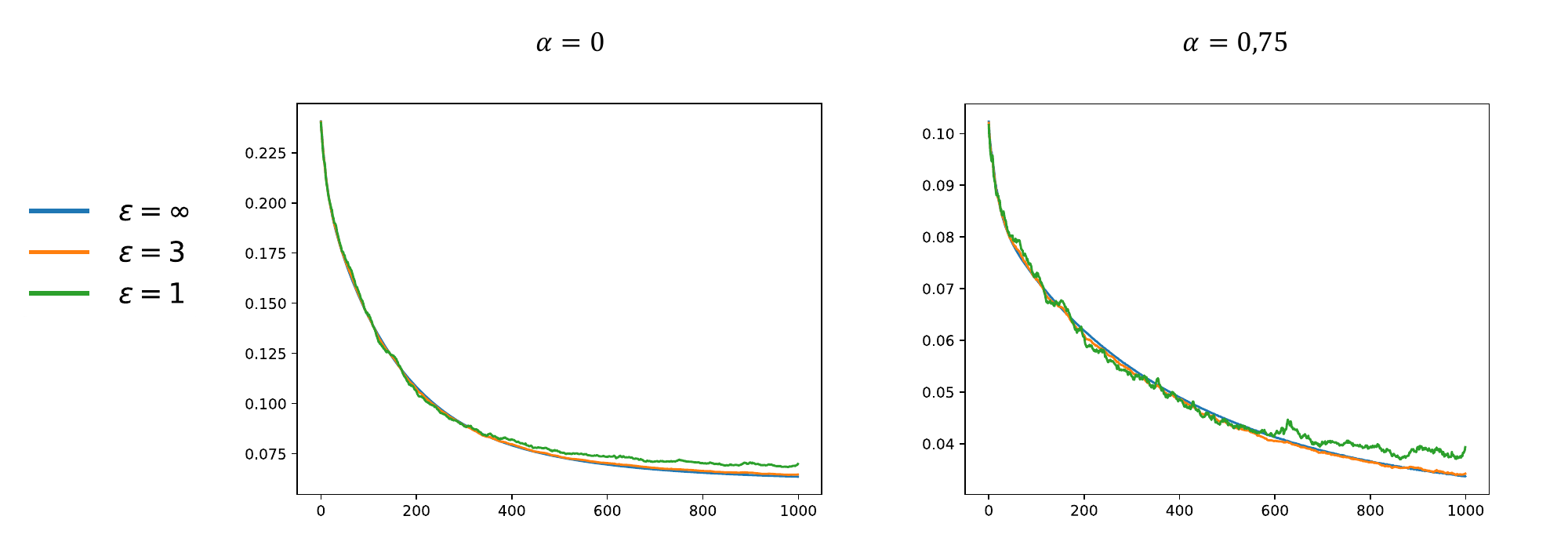}
    \caption{Training loss curve for the experiment of Figure \ref{fig:regression}. Each graph represents the training loss for a fixed value of $\alpha$ in \eqref{eq:loss_SP} along the iterations of DP-SGD, for the different values of $\varepsilon$. }
    \label{fig:regression_losses}
\end{figure}

\subsection{Representation learning.} Finally, we present another completely novel application of our procedure: fair representation learning. Using the same data as before, the objective is to privately learn an encoder $\varphi_{\theta_a}$ and a decoder $\psi_{\theta_b}$ minimizing the reconstruction mean squared error of the reconstructed values, penalized with the sliced Wasserstein distance to alleviate statistical parity unfairness present in the data. As in Section \ref{section:swae}, we denote $\theta=(\theta_a,\theta_b)$, $\varphi_{\theta}= \varphi_{\theta_a}$ and $\psi_\theta = \psi_{\theta_b}$. Now, the encoder and decoder are defined as fully connected neural networks with two layers, hidden dimension 62 and bi-dimensional latent space. Defining $g_\theta = \psi_\theta \circ \varphi_\theta$, we consider the reconstruction error $\ell(g_\theta(x),x) = \|g_\theta(x)- x\|_2^2$ and the fairness penalty is now imposed over the encoded representations given by $\varphi_\theta$.

As usual, \cref{fig:autoencoder} presents the result of training this model for different values of $\alpha$ and $\varepsilon$, with fixed $\delta=0.1/n$, number of iterations $T=500$ iterations, clipping values $C=10,M=2,L=\sqrt{2}$, learning rate $=0.01$ and number of projections in the Monte Carlo approximation $= 50$.
To assess the quality of the models, we have computed different comparative measures on an independent test sample. First, $RL_c$ denotes the reconstruction loss in the core part $X_{core}$, i.e. the first eight variables of $X$. The rest of the variables $X_{sp}$ are just a noisy version of $A$. Thus, $RL_c$ provides a measure of the error in the reconstruction loss for the relevant part of the data, and Figure \ref{fig:autoencoder} shows that for increasing values of $\alpha$, even though the reconstruction loss increases significantly, the reconstruction associated with the core part is not affected much. The other measures computed on the test data are the accuracy and disparate impact of a simple logistic regression model trained on the encoded representation of a portion (60\%) of the test data and evaluated on the remaining (40\%). We observe that increasing values of $\alpha$ lead to values of the disparate impact index closer to 1, at the expense of a decrease in accuracy.  Finally, we can infer from Figures \ref{fig:autoencoder} and \ref{fig:autoencoder_losses} that privacy doesn't affect much to the results of the optimization.

\begin{figure}[h!]
    \centering
\includegraphics[width=1\linewidth]{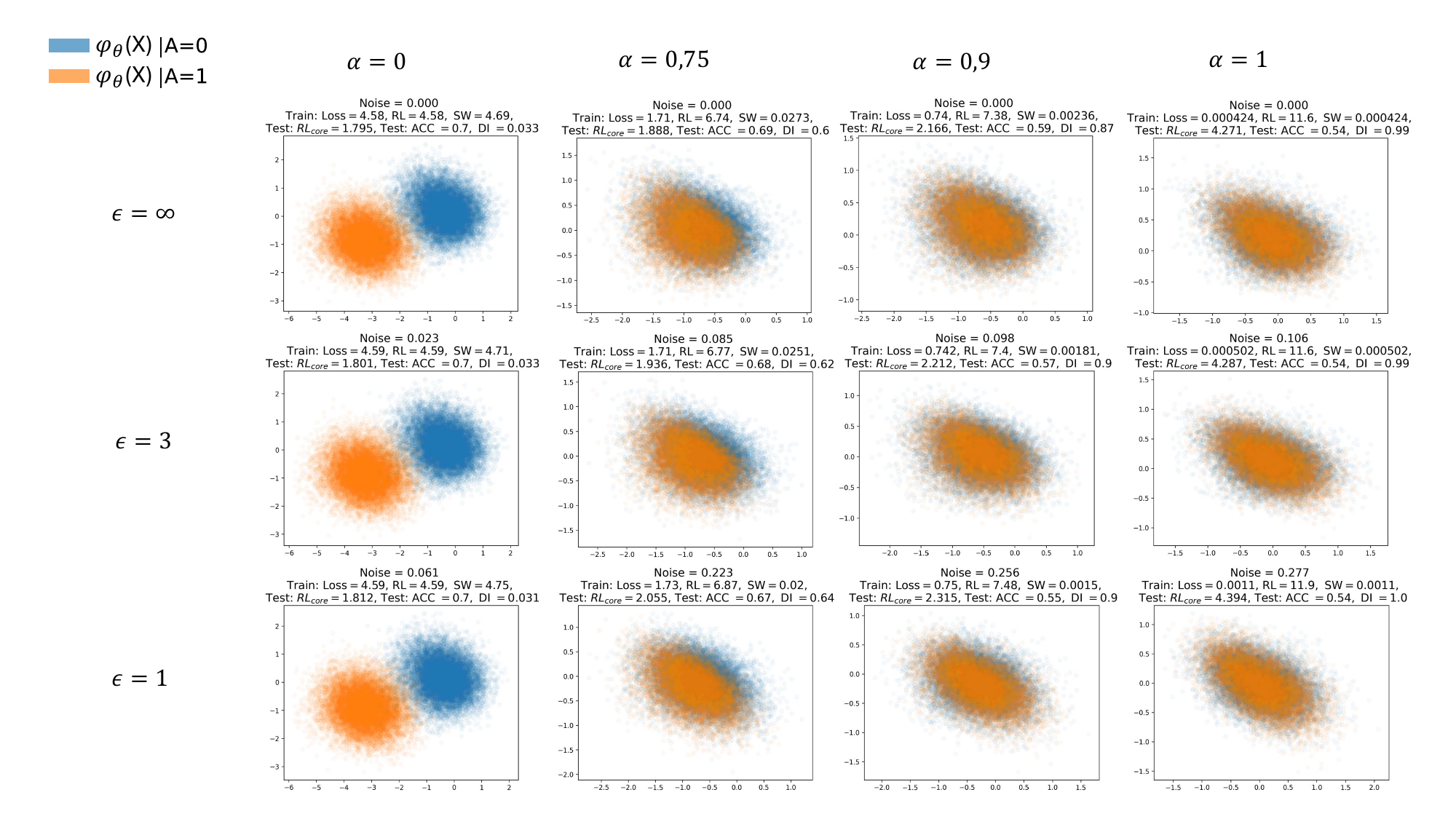}
    \caption{Plot of the latent space representations $\varphi_\theta(X)$ conditioned on the values sensitive attribute $A$ in the private representation learning model with SP regularization, for the different values of $\alpha$ and $\varepsilon$, and fixed $\delta = 0.1/n$. Above each plot we indicate the noise added at each step of DP-SGD to obtain the desired privacy level, the value of the loss \eqref{eq:loss_SP} in the training procedure, together with the individual value of the reconstruction loss (RL) and the sliced Wasserstein loss (SW). Last line includes the reconstruction loss on the core variables ($RL_1$), accuracy and disparate impact on test data.}
    \label{fig:autoencoder}
\end{figure}

\begin{figure}[h!]
    \centering
    \includegraphics[width=0.88\linewidth]{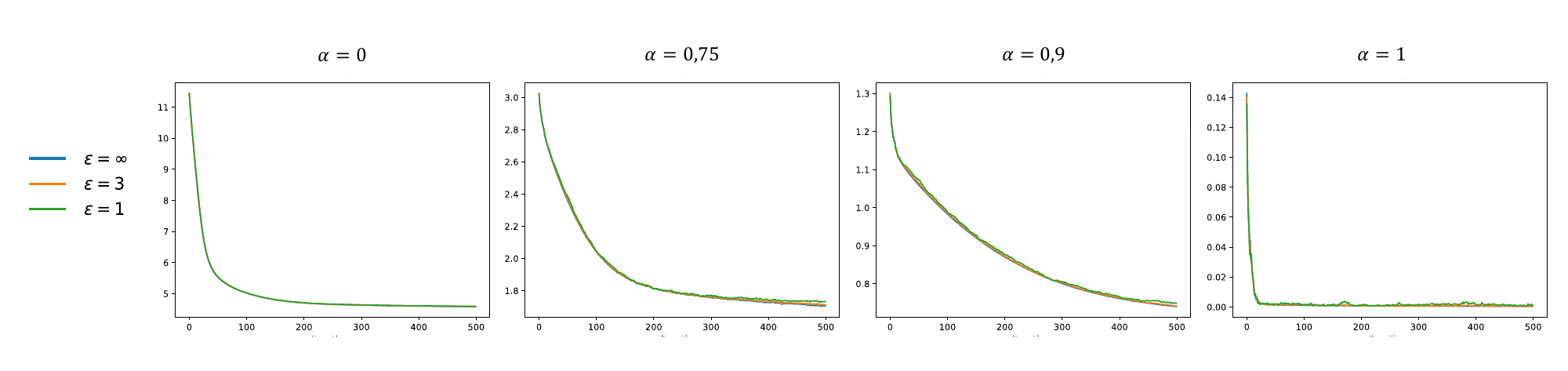}
    \caption{Training loss curve for the experiment of Figure \ref{fig:autoencoder}. Each graph represents the training loss \eqref{eq:loss_SP} for a fixed value of $\alpha$  along the iterations of DP-SGD, for the different values of $\varepsilon$. }
    \label{fig:autoencoder_losses}
\end{figure}

\subsection{Randomness strategy}

To provide a deeper understanding of the previous results, it is important to detail the role of randomness across the experiments. For each of the four bias mitigation experiments, the training process was repeated for different values of the weight $\alpha$ and privacy budgets $\varepsilon \in \{\infty, 3, 1\}$. To ensure a fair comparison across $(\alpha, \varepsilon)$ pairs, we used the same random seed for all configurations. As a result, neural networks share the same weight initialization, the sampled batches are identical at each iteration, and the standard Gaussian noise used in private updates is also the same. With this setup, our experiments isolate the influence of $\alpha$ and $\varepsilon$ on the training process. The only differences across configurations stem from:

\begin{itemize}
\item \textbf{Clipping:} Applied only when $\varepsilon < \infty$.
\item \textbf{Gradient composition:} The gradient is a linear combination of the classification gradient and the sliced Wasserstein gradient, weighted by $\alpha$.
\item \textbf{Noise multiplier:} The standard Gaussian noise is scaled by a factor that depends on both $\alpha$ (via sensitivity) and the required privacy budget $\varepsilon$.
\end{itemize}

To illustrate the effectiveness of this isolation strategy, we repeated the Statistical Parity classification experiment---chosen for its simplicity and clarity---for 10 different random seeds. For each seed, the model was trained across the 12 combinations of $(\alpha, \varepsilon)$ used in Figure~\ref{fig:di}.

The first row of Figure~\ref{fig:seeds} shows the average training loss for each $(\alpha, \varepsilon)$ pair, with $\pm 2\sigma$ confidence bands. The variability shows that if the randomness setup in Figures~\ref{fig:di} and~\ref{fig:di_losses} had not been fixed, significant fluctuations in the training loss across different values of $\varepsilon$ would have appeared, contrary to what is shown in Figure~\ref{fig:di_losses}. The isolation effect of our randomness setup is even more evident in the second row of Figure~\ref{fig:seeds}, which shows the differences between the private ($\varepsilon = 3, 1$) and non-private ($\varepsilon = \infty$) losses  computed using the same random seed, with the same $\pm 2\sigma$ confidence bands computed over the 10 seeds. These differences stay within a narrow range around zero, indicating that, in this classification task, adding privacy mainly introduces a small amount of noise around the non-private trajectory, supporting the conclusion obtained from Figure~\ref{fig:di_losses}.

\begin{figure}[ht]
    \centering
    \includegraphics[width=1\linewidth]{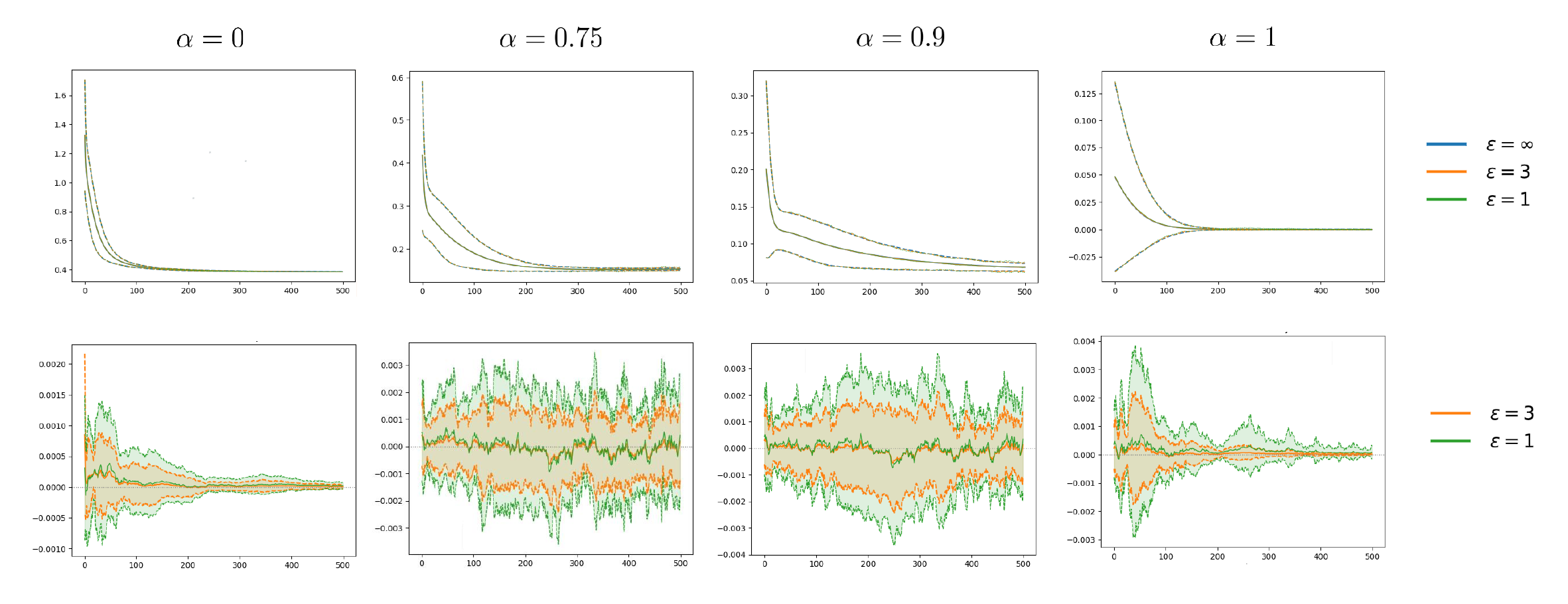}
\caption{\textbf{Top}: The solid line represents the average training loss \eqref{eq:loss_SP} over 10 random seeds for different values of $\varepsilon$ and $\alpha$. Dotted lines indicate the average $\pm 2$ standard deviations. For a given $\alpha$, the curves corresponding to different $\varepsilon$ values nearly overlap. \textbf{Bottom}: Differences between private ($\varepsilon = 3, 1$) and non-private ($\varepsilon = \infty$) losses, computed using the same random seed for each pair, and averaged over 10 different seeds. Shaded bands represent $\pm 2$ standard deviations.}

    \label{fig:seeds}
\end{figure}

\section{Private Distribution Matching and Data Generation} \label{section:distribution_matching}

A straightforward application of the sliced Wasserstein gradient is to learn a map mapping noise to a desired distribution.

\subsection{General method}

Given a private dataset $\vect{Z} \in \mathcal{Z}^n$ of target distribution $Z$, the goal is to learn, in a privacy-preserving manner, a map $g_{\theta}$ that minimizes
\begin{align*}
    \mathscr L(\theta) = SW_2^2\bigl(g_\theta \# P_{\vect X} , P_{\vect Z}\bigr)  \;,
\end{align*}
where $\vect{X} \in \mathcal{X}^n$ is simply a dataset of noise coming from a distribution from which sampling is easy of source distribution $X$.

\paragraph{Baseline of \cite{rakotomamonjy2021DPslicedWasserstein} with the fix of \cite{greenewald2024privacy}.}

In matrix notations, \cite{rakotomamonjy2021DPslicedWasserstein} releases $M(X)=X U+V$, where $X$ is the data matrix, $U$ is a random projection matrix (of which the directions will play the role of the projection directions in the sliced Wasserstein distance), and $V$ is Gaussian noise. The original privacy analysis of this mechanism is contested, as noted in Remark 1 of \cite{greenewald2024privacy}. In our experiments, we therefore use the fix to the privacy analysis provided in \cite{greenewald2024privacy} (Theorem 1). Note that from $M(X)$, it is then possible to compute the sliced Wasserstein distance, as well as its gradient w.r.t. the parameters of the network by post processing. For this baseline, we also incorporate standard privacy amplification by sub-sampling for fixed batch size without replacement for the replacement neighboring relation, as stated in \cite{DBLP:journals/corr/abs-2210-00597}. A dataset is sub-sampled at first, and we then use this sub-sampled dataset until the end of the training procedure. Note that by design, and contrary to our method, this approach is unable to provide privacy guarantees w.r.t. the source data. As described in Algorithm 1 in \cite{greenewald2024privacy}, the same amount of dummy privacy noise is added to the points in $g_\theta \# P_{\vect X}$ in order to learn the de-convolution map and not the map to the convoluted distribution.

\subsection{Results}

We considered $n = 100000$ samples of $Z$ drawn from the uniform distribution on a circle with radius $3/4$, and equal number of samples of $X$ drawn from the standard Gaussian distribution in $\mathbb{R}^2$. The function $g_\theta$ is defined as a fully connected neural network with an input dimension 2, three hidden layers with dimensions $(128, 64, 64)$, and an output dimension 2. Figure \ref{fig:gradient_flow} shows the evolution of the matching problem at different training steps. Thanks to \cref{theorem:gradient_sensitivity_sliced}, our methodology provides privacy guarantees for both the fixed variable $Z$ and the \textit{trained} variable $X$, in the sense that $g_\theta$ is applied to $X$. Above each plot, we can see the iteration number, the value of the loss, and the privacy budget $\varepsilon$ for both $X$ and $Z$, at each training step. The optimization parameters are $\delta = 0.1/n$,  batch size $ = 10,000$, learning rate $= 0.0075$, number of projections in the Monte Carlo approximation $= 50$, clipping values $M = 1$ and $L = 2\sqrt{2}$ (imposed using the suboptimal approach described in \cref{remark:degraded_bound}). To ensure more stable results, once we have privatized the gradient by adding noise, we clip the gradient again to improve the method's stability with the same clipping constants. 
Note that this step preserves privacy due to the post-processing property. 

The results for the method proposed in \cite{rakotomamonjy2021DPslicedWasserstein} are shown in \Cref{fig:baseline_data_generation}. For a fair comparison, we retained the same models and hyperparameters as in our approach, with the exception of reducing the number of projection directions to 15, as this adjustment improved the baseline's performance (trading increased bias for reduced privacy-related variance). Despite this tuning, the baseline performs poorly on this task, as shown by the substantially higher privacy budgets required to achieve acceptable results. Since the original paper \cite{rakotomamonjy2021DPslicedWasserstein} did not include visual experiments of this kind, it remains unclear whether the poor performance is intrinsic to the method, a consequence of the correction proposed by \cite{greenewald2024privacy}, or a limitation of our implementation.

\begin{figure}[h!]
    \centering
\includegraphics[width=1\linewidth]{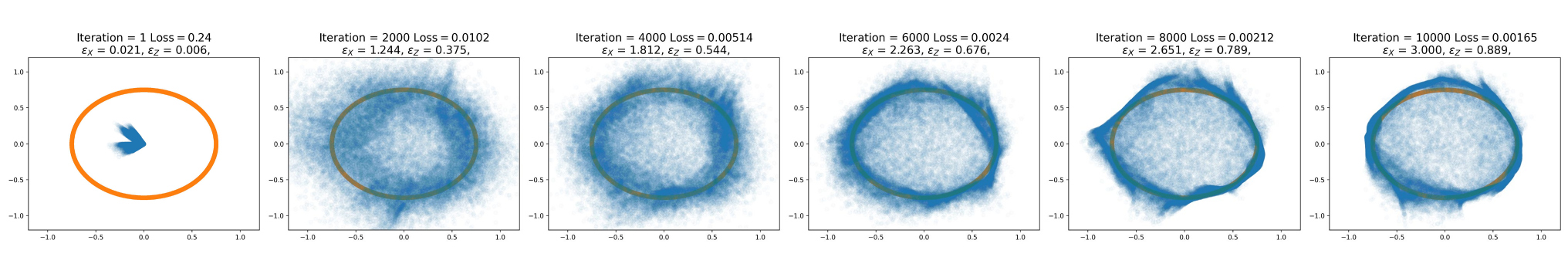} 
\caption{Data generation experiment. Samples from $X$ are represented in blue, samples from $Z$ in orange. Above each graph, we can see the iteration, the value of the loss, and privacy budgets w.r.t. the variables $X$ and $Z$.}
\label{fig:gradient_flow}
\end{figure}

\begin{figure}[h!]
    \centering
\includegraphics[width=0.30\linewidth]{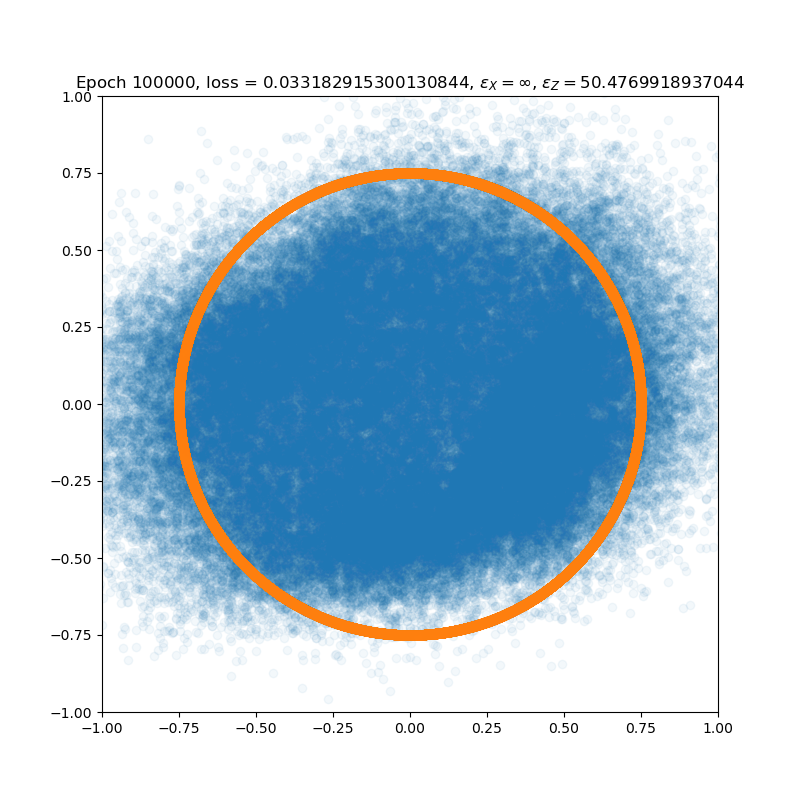} 
\includegraphics[width=0.30\linewidth]{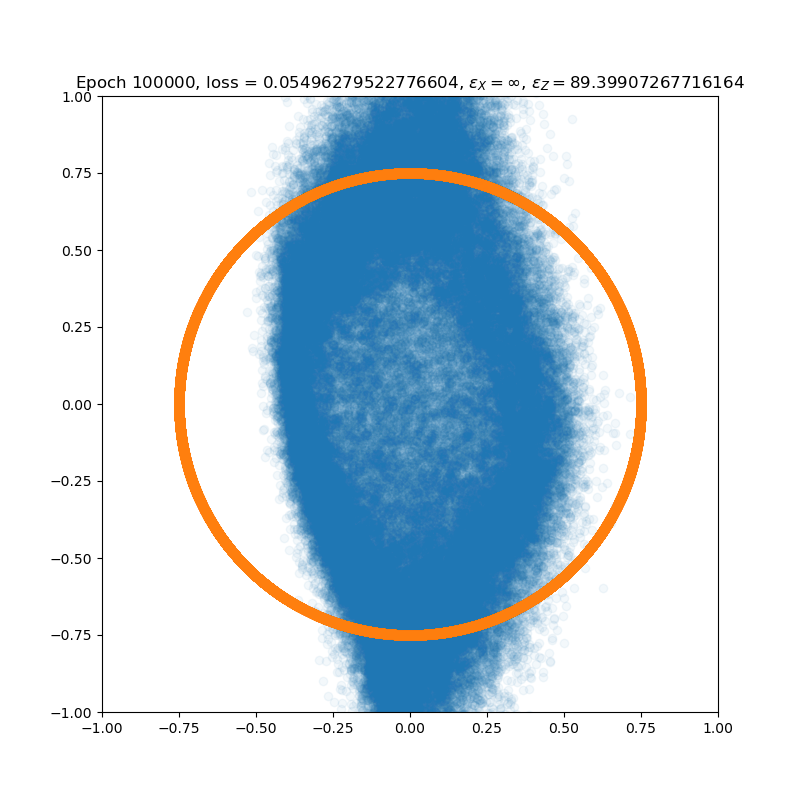} 
\includegraphics[width=0.30\linewidth]{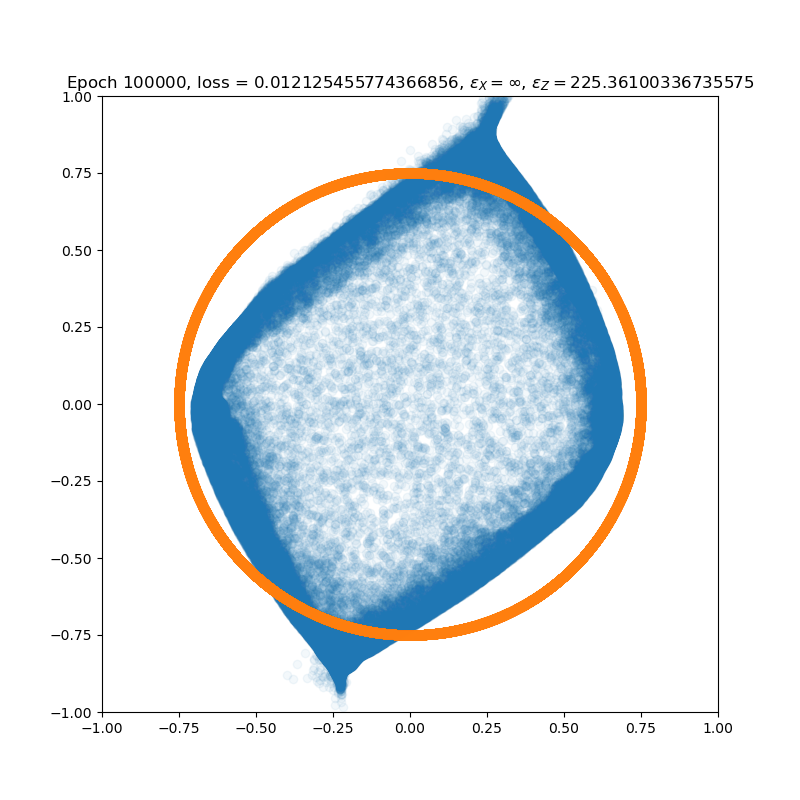} 
\caption{Baseline for the data generation experiment. Samples from $X$ are represented in blue, samples from $Z$ in orange. Above each graph, we can see the iteration, the value of the loss, and privacy budgets w.r.t. the variables $X$ and $Z$.}
\label{fig:baseline_data_generation}
\end{figure}

\section{Computational Details}
\label{sec:computational_details}

All experiments were conducted on a local server equipped with an NVIDIA RTX A6000 GPU with 48\,GB of memory. The MNIST and Fashion-MNIST experiments took 2.51 hours each. The four initial subproblems of fairness experiment required  1.18 hours in total for the 4 subproblems, whereas the randomness strategy required another 1.72 hours. The synthetic circles experiment took 1.64 hours. Thus, the total GPU time for the reported experiments was approximately 9.5 hours. The overall computational time was higher due to additional efforts related to code development, debugging, and the exploration of alternative approaches that were ultimately discarded. In total, we estimate the full computational effort to be approximately 210 hours.

The core of our implementation was developed in Python using the JAX library \cite{jax2018github}. The code represents a preliminary version aimed at illustrating the theoretical capabilities of our approach. We believe that a more refined implementation could lead to significantly faster execution. Notably, the current code does not leverage \texttt{jit}, the main just-in-time compilation tool in JAX, which could substantially reduce training time.

Nevertheless, the methodology presents inherent computational bottlenecks that limit the efficiency of our methodology. These bottlenecks are closely related to standard computational challenges in private learning. For illustration, assume the setting described in Remark~\ref{remark:comparison} with output dimension $ d = 1 $, and let $ r $ denote the number of parameters in $ \theta $. Consider batches $ \vect{X}_b $ and $ \vect{Z}_b $ of size $ m $. Using the same notation as in Remark~\ref{remark:comparison}, we aim to compute a private version of:

\begin{equation}\label{eq:matrix-product}
\underbrace{\nabla_\theta W_2^2(g_\theta(\vect{X}_b), \vect{Z}_b)}_{1 \times r} =
\underbrace{\nabla_{g_\theta} W_2^2(g_\theta(\vect{X}_b), \vect{Z}_b)}_{1 \times m} \times
\underbrace{\mathcal{J}_\theta g_\theta(\vect{X}_b)}_{m \times r}
\end{equation}

If privacy were not a concern, standard automatic differentiation would allow for efficient computation of the left-hand side, leveraging common operations and shared computational graphs. However, to enforce differential privacy, we need to adopt the inner clipping approximation described in Eq.~\eqref{eq:gradient_approx}. The expression \eqref{eq:gradient_approx} can be viewed as a product between a clipped version of $ \nabla_{g_\theta} W_2^2(g_\theta(\vect{X}_b), \vect{Z}_b) $ and a clipped version of $ \mathcal{J}_\theta g_\theta(\vect{X}_b) $. The main computational bottleneck arises from the need to compute the matrix $\mathcal{J}_\theta g_\theta(\vect{X}_b) \in \mathbb{R}^{m \times r}$. A similar challenge is usually encountered in differentially private optimization, where, if $g_\theta(x)$ represents the individual loss,  we need to compute $\frac{1}{m} \sum_{x \in \vect{X}_b} \text{clipp}_C \bigl(\nabla_\theta g_\theta(x)\bigr)$. Two key issues arise:

\begin{itemize}
    \item[(i)] Memory cost: If $ m $ and $ r $ are large, storing the full Jacobian $ \mathcal{J}_\theta g_\theta(\vect{X}_b) $ can become infeasible.  In standard differential privacy settings, this is typically addressed by reducing the batch size or by computing gradients across individual data points (or across smaller subbatches) sequentially and then aggregating them.

    \item[(ii)] Loss of GPU efficiency: These naive strategies undermine the advantages of batch computation on GPUs, leading to a significant increase in training time, as discussed in \cite{goodfellow2015efficient,lee2021scaling}. In particular, within the JAX framework, computing the full Jacobian via \texttt{jax.jacobian} is substantially more efficient than computing all the per-sample gradients with \texttt{jax.grad}.
\end{itemize}

In our setting, using small batch sizes $m$ is not suitable, as this would introduce substantial bias in the estimation of the Wasserstein distance. However, splitting the batch into smaller subbatches remains a viable strategy---if done carefully---as explained in Figure~\ref{fig:matrix-product}. The same discussion extends naturally to the general case of the sliced Wasserstein distance. If we flatten $g_\theta$---i.e., we see $g_\theta(\vect X_b) \in (\mathbb{R}^{d})^m$ as an element of $\mathbb{R}^{dm}$---then the chain rule yields the same decomposition as in Equation~\eqref{eq:matrix-product}, replacing $W_2$ with $SW_2$ (or its Monte Carlo approximation). In this setting, the dimension of the hidden dimension in the vector--matrix product becomes $dm$ instead of than $m$, further increasing the computational burden.

A further study of potential improvements to the naive strategy presented here is %
left for future research.

\begin{figure}[ht]
    \centering
    \includegraphics[width=1\linewidth]{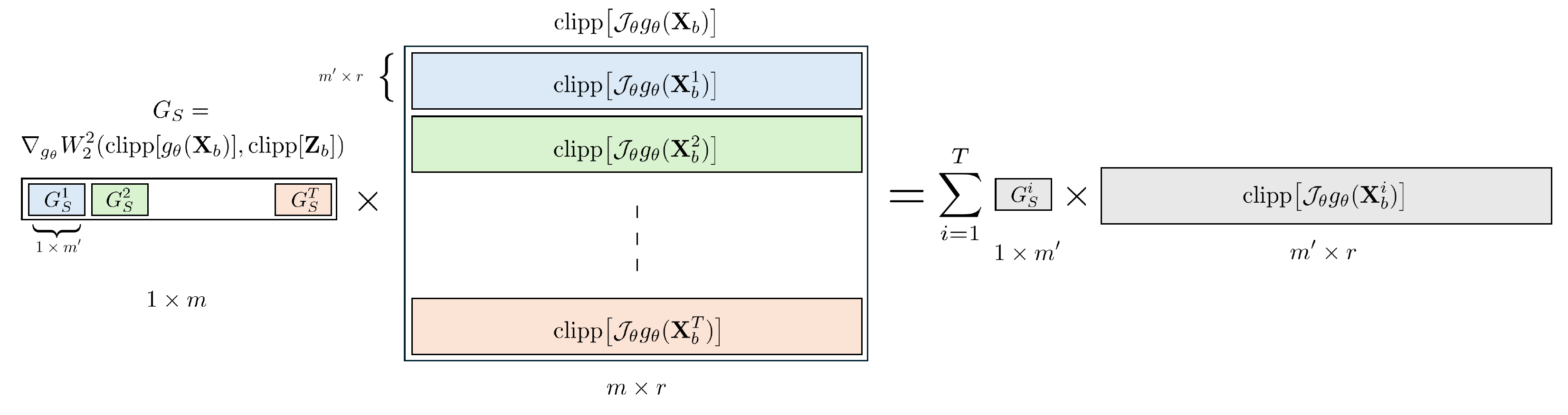}
    \caption{Naive approach used for computing the clipped gradient $ \nabla_\theta^{M,L} W_2^2(g_\theta(\vect{X}_b),\vect{Z}_b) $.  
    To avoid computing the full Jacobian matrix in the batch $ \mathcal{J}_\theta g_\theta(\vect{X}_b) $, we:  
    (i) split the batch $ \vect{X}_b = (\vect{X}_b^1, \ldots, \vect{X}_b^T) $,  
    (ii) Compute $G_S$, the gradient of the Wasserstein distance with clipping applied over the full batches $\vect X_b$ and $\vect Z_b$, as specified in Eq.~\eqref{eq:gradient_approx}.
    (iii) split $ G_S = (G_S^1, \ldots, G_S^T) $, and  
    (iv) iteratively compute $ \mathcal{J}_\theta g_\theta(\vect{X}_b^i) $, premultiply by $ G_S^i $, and aggregate.}
    \label{fig:matrix-product}
\end{figure}

\section{Counterexample for general cost functions} \label{counterexample}

In this section, we demonstrate that we cannot bound in general the sensitivity of the gradient if we use the Wasserstein loss function $W_p$, for general $p\geq 1$. Following the notation of Theorem \ref{theorem:gradient_sensitivity}, we denote by $\vect X = \{x_1,\ldots,x_n\} \subset \mathcal X^n$ the private dataset, $\vect Z=\{z_1,\ldots, z_n\} \subset \mathbb R$ the non-private dataset, and $P_{\vect X}, P_{\vect Z}$ the associated empirical distributions. Given $g_\theta: \mathcal X \rightarrow \mathbb R$ depending on the parameter $\theta$, and considering $h_\theta=I_d$, we study the sensitivity of $\Phi_\theta(\vect X)= \nabla_{\theta} W_p(g_\theta\# P_{\vect X}, P_{\vect Z})$, for the particular values of

\begin{itemize}
    \item $\vect X =\{x_1,\ldots,x_n\}$ with $x_i=\frac{i}{n}, \ i \in [n]$
     \item $\vect{\Tilde X} =\{\Tilde x_1,\ldots,\Tilde x_n\}$ with $\Tilde x_i=\frac{i-1}{n}, \ i \in [n]$
     \item $\vect Z=\{z_1,\ldots, z_n\}$, with $z_i = \frac{2i-1}{2n}, \ i\in [n].$
     \item $g_\theta(x) = x + \theta$
\end{itemize}

For this particular choice, $\vect X\sim_1\vect{\Tilde X}$, and  Assumption \ref{assumption1} and \ref{assumption2} in Theorem \ref{theorem:gradient_sensitivity} are verified for certain constants (once we restrict the domain of $\theta$). From the quantile representation, it is easy to compute 

\begin{align*}
     W_p(g_\theta\# P_{\vect X}, P_{\vect Z}) &= \Bigl(\frac{1}{n} \sum_{i=1}^n | x_i + \theta - z_i|^p \Bigr)^{1/p} =  \Bigl(\frac{1}{n} \sum_{i=1}^n \Big|\frac{1}{2n} + \theta \Big|^p \Bigr)^{1/p} \\ &= \left\{ \begin{array}{cc}
       \frac{1}{2n} + \theta   & \textnormal{if}\  \theta>-\frac{1}{2n} \\
        - \theta -\frac{1}{2n}   &  \textnormal{if}\ \theta\leq -\frac{1}{2n} 
     \end{array}\right. \\
    W_p(g_\theta\# P_{\vect{\Tilde X}}, P_{\vect Z}) &= \Bigl(\frac{1}{n} \sum_{i=1}^n | \Tilde x_i + \theta - z_i|^p \Bigr)^{1/p} =  \Bigl(\frac{1}{n} \sum_{i=1}^n \Big|-\frac{1}{2n} + \theta \Big|^p \Bigr)^{1/p}  \\ &= \left\{ \begin{array}{cc}
       -\frac{1}{2n} + \theta   & \textnormal{if}\  \theta>\frac{1}{2n} \\
         \theta -\frac{1}{2n}   &  \textnormal{if}\ \theta\leq \frac{1}{2n} 
     \end{array}\right.
\end{align*}

Therefore, by setting $\theta = 0$, we observe that the derivatives are $ \Phi_0(\vect X) = 1$ and $ \Phi_0(\vect{\Tilde X}) = -1$. Consequently, $ \Delta_2\Phi_\theta \geq 2$, indicating that the sensitivity does not decrease with the sample size $n$.

\section{Proofs}

\subsection{Proofs of \Cref{sec:WassersteinGradients}}

\begin{proof}[Proof of Proposition \ref{lemma:expression_W2}]

If we denote by $F$ and $G$ the distribution functions of $P_{\vect U}$ and $P_{\vect V}$, we know that 
\begin{align*}  W_2^2(P_{\vect U},P_{\vect V}) & = \int_0^1 (F^{-1}(t)-G^{-1}(t))^2dt \\
& = \int_0^1 \Bigl(\sum_{i=1}^{n} U_{(i)} I\Bigl(\frac{i-1}{n}<t\leq \frac{i}{n}\Bigr) -\sum_{j=1}^{m} V_{(j)} I\Bigl(\frac{j-1}{m}<t\leq \frac{j}{n}\Bigr) \Bigr)^2dt \\
& = \sum_{i=1}^{n} \sum_{j=1}^{m} (U_{(i)}-V_{(j)} )^2 \int_0^1  I\Bigl(\frac{i-1}{n}<t\leq \frac{i}{n},\frac{j-1}{m}<t\leq \frac{j}{n}\Bigr) dt \\
& = \sum_{i=1}^{n} \sum_{j=1}^{m} (U_{(i)}-V_{(j)} )^2   R_{i,j}  \\
&= \sum_{i=1}^{n} \sum_{j=1}^{m} (U_{(\sigma(i))}-V_{(\sigma(j))} )^2   R_{\sigma(i),\sigma(j)}\\
&= \sum_{i=1}^{n} \sum_{j=1}^{m} (U_{i}-V_{j} )^2   R_{\sigma(i),\sigma(j)} \ ,
\end{align*}
where the third equality follows from the fact that exactly one element in each sum is different from 0, and the fifth equality follows from reindexing the sum.\end{proof}
\subsection{Proofs of \Cref{sec:SensitivityPrivacy}}

\begin{proof}[Proof of Theorem \ref{theorem:gradient_sensitivity}]
First of all, note that (b) follows 
immediately from (a) and the definition of the neighboring relation $\sim_2$ in $\mathcal X^n\times \mathcal Z^m$. Consider two neighboring datasets $\vect X\sim \vect{\Tilde X}$ under the substitution  relation. We can assume without loss of generality that the datasets differ on the first observation $\Tilde x_1\neq x_1$. For ease of notation, denote $\vect{\Tilde{X}} = \{ \Tilde x_i\}_{i=1}^n$, even though $\Tilde x_i=x_i$ for $i\neq 1$. Along this proof, we will define $U_i:=g_\theta(x_i)$ and $\Tilde U_i := g_\theta(\Tilde x_i)$ for each  $i\in[n]$, and  $V_j:=h_\theta(z_j)$ for $j\in [m]$. Again, $U_i=\Tilde U_i$ for every $i\neq 1$. Define now the rank permutations $\sigma,\Tilde \sigma$ and $\tau$ such that
\begin{eqnarray*}
    U_i &= U_{(\sigma(i))}\ ,  \quad \ &i \in [n] \ ,\\
     \Tilde U_i &= \Tilde U_{(\Tilde \sigma(i))}\ ,  \quad \ &i \in [n] \ ,\\
      V_j &= V_{(\tau(j))}\ ,  \quad \ &j \in [m]\ .
\end{eqnarray*}
Denote $\vect U=(U_1,\ldots,U_n)$ and $\vect V=(V_1,\ldots,V_m)$. Corollary \ref{corollary:FormulaGradientsGeneral} ensures if we define  $P_{\vect{U}}= g_\theta \# P_{\vect{X}} =\frac{1}{n}\sum_{i=1}^n\delta_{U_i}$ and $P_{\vect V}= h_\theta \# P_{\vect Z}=\frac{1}{m}\sum_{i=1}^m \delta_{V_j}$ , then
\begin{align*}
    \nabla_{U,V} W_2^2(P_{\vect U}, P_{\vect V}) = \Biggl( \Bigl( 2\sum_{j=1}^m R_{\sigma(i),\tau(j)} (U_i-V_j)\Bigr)_{i\in[n]}, \Bigl( 2\sum_{i=1}^n R_{\sigma(i),\tau(j)} (V_j-U_i)\Bigr)_{j\in[m]}\Biggr) \ .
\end{align*}
Applying the chain rule, we obtain that 
\begin{align*}
    \nabla_\theta W_2^2(g_\theta\# P_{\vect{X}}, h_\theta \# P_{\vect{Z}})& = 2\sum_{i=1}^{n} \sum_{j=1}^{m} R_{\sigma(i),\tau(j)} (U_{i}-V_{j} ) \nabla_\theta g_\theta(x_i) \\
    & + 2\sum_{j=1}^{m}\sum_{i=1}^{n}  R_{\sigma(i),\tau(j)} (V_{j}-U_{i} ) \nabla_\theta h_\theta(z_j)
\end{align*}
Similarly, for the dataset $\vect{\Tilde X}$ we get
\begin{align*}    
    \nabla_\theta W_2^2(g_\theta\# P_{\vect{\Tilde X}}, h_\theta \# P_{\vect{Z}})& = 2\sum_{i=1}^{n} \sum_{j=1}^{m} R_{\Tilde \sigma(i),\tau(j)} (\Tilde U_i-V_{j} ) \nabla_\theta g_\theta(\Tilde x_i)\\
    &+ 2\sum_{j=1}^{m}\sum_{i=1}^{n}  R_{\Tilde \sigma(i),\tau(j)} (V_{j}-\Tilde U_i ) \nabla_\theta h_\theta(z_j)
\end{align*}

Therefore, 
\begin{align}
    \|&\nabla_\theta W_2^2(g_\theta\# P_{\vect{X}}, h_\theta \# P_{\vect{Z}})- \nabla_\theta W_2^2(g_\theta\# P_{\vect{\Tilde X}}, h_\theta \# P_{\vect{Z}}) \|_2 \leq \notag \\
    &\leq  2\ \Big\| \sum_{i=1}^{n} \sum_{j=1}^{m} R_{\sigma(i),\tau(j)} (U_{i}-V_{j} ) \nabla_\theta g_\theta(x_i) - \sum_{i=1}^{n} \sum_{j=1}^{m} R_{\Tilde \sigma(i),\tau(j)} (\Tilde U_i-V_{j} ) \nabla_\theta g_\theta(\Tilde x_i) \Big\|_2  \label{eq:term1}\\
    & +2\  \Big \| \sum_{j=1}^{m}\sum_{i=1}^{n}  R_{\sigma(i),\tau(j)} (V_{j}-U_{i} ) \nabla_\theta h_\theta(z_j) - \sum_{j=1}^{m}\sum_{i=1}^{n}  R_{\Tilde \sigma(i),\tau(j)} (V_{j}-\Tilde U_i ) \nabla_\theta h_\theta(z_j) \Big\|_2\label{eq:term2}
\end{align}

The term \eqref{eq:term2} is easier to bound, since the values inside $\nabla_\theta h_\theta(\cdot)$ coincide. First, note that 
\begin{align}\label{eq:propertyR}
    \sum_{j=1}^{m} R_{i,j} = \frac{1}{n} \ ,  \ \forall \ i\in [n] \quad \textnormal{and} \quad 
     \sum_{i=1}^{n} R_{i,j} = \frac{1}{m} \ ,\ \forall\  j\in [m] \ ,
\end{align}
The triangular inequality, the assumption $\|\nabla_\theta h_\theta(z)\|\leq L_2$ for every $z,\theta$ and the previous property allow us to derive the following inequalities
\begin{align}
    (\ref{eq:term2})  &=2\  \Big \| \sum_{j=1}^{m}V_{j} \nabla_\theta h_\theta(z_j)\Bigl( \sum_{i=1}^{n}  R_{\sigma(i),\tau(j)}-\sum_{i=1}^{n}  R_{\Tilde \sigma(i),\tau(j)}\Bigr)-\\
    & \hspace{3cm } - \sum_{j=1}^{m} \nabla_\theta h_\theta(z_j)\Bigl( \sum_{i=1}^{n}  U_iR_{\sigma(i),\tau(j)} -\sum_{i=1}^{n}  \Tilde U_i R_{\Tilde \sigma(i),\tau(j)}\Bigr)\Big\|_2 \notag\\
   &\leq 2 \sum_{j=1}^{m} \Bigl\|\nabla_\theta h_\theta(z_j)\Bigl( \sum_{i=1}^{n}  U_iR_{\sigma(i),\tau(j)} -\sum_{i=1}^{n}  \Tilde U_i R_{\Tilde \sigma(i),\tau(j)}\Bigr)\Big\|_2 \notag \\
   &\leq 2L_2\sum_{j=1}^{m} \Bigl| \sum_{i=1}^{n}  U_iR_{\sigma(i),\tau(j)} -\sum_{i=1}^{n}  \Tilde U_i R_{\Tilde \sigma(i),\tau(j)}\Big|\notag \\
   &=  2 L_2\sum_{j=1}^{m} \Bigl| \sum_{i=1}^{n}  U_{(i)}R_{i,\tau(j)} -\sum_{i=1}^{n}  \Tilde U_{(i)} R_{i,\tau(j)}\Big| \notag\\
   &= 2L_2 \sum_{j=1}^{m} \Bigl| \sum_{i=1}^{n}  R_{i,\tau(j)}(U_{(i)} - \Tilde U_{(i)})\Big|  \label{eq:last_line_ineq}
\end{align}
where the last lines follows from $U_i=U_{(\sigma(i))}$, $\Tilde U_i=\Tilde U_{(\Tilde \sigma(i))}$ and reindexing the sum. Since $U_i=\Tilde U_i$ for every $i\neq 1$, we know that
\begin{itemize}
    \item If $U_1\geq \Tilde U_1$, then $U_{(i)}\geq \Tilde U_{(i)}$ for every $i \in [n]$.
    \item If $U_1< \Tilde U_1$, then $U_{(i)}\leq \Tilde U_{(i)}$ for every $i\in[n]$.
\end{itemize}
This monotonicity property and the fact that $R_{i,j}\geq 0$ for every $i,j$ ensures that
\begin{align*}
    (\ref{eq:last_line_ineq})  &= 2L_2\ \Bigl| \sum_{j=1}^{m}  \sum_{i=1}^{n}  R_{i,\tau(j)}(U_{(i)} - \Tilde U_{(i)})\Big| \\
    &= 2L_2\ \Bigl|  \sum_{i=1}^{n} (U_{(i)} - \Tilde U_{(i)}) \sum_{j=1}^{m} 
 R_{i,\tau(j)}\Big| \\
 &=  \frac{2L_2}{n}\ \Bigl|  \sum_{i=1}^{n} (U_{(i)} - \Tilde U_{(i)}) \Big| \\
 &  = \frac{2L_2}{n} \Bigl|  \sum_{i=1}^{n} (U_{i} - \Tilde U_i) \Big| \\
 &= \frac{2L_2}{n} \bigr|U_{1} - \Tilde U_{1}\big| \\
 &\leq \frac{4L_2M}{n}
\end{align*}
By the triangular inequality, the term \eqref{eq:term1} can be bounded as follows
\begin{align}
   (\ref{eq:term1}) &\leq  2\  \Big \| \sum_{i=1}^{n} \nabla_\theta g_\theta(x_i)U_i\sum_{j=1}^{m} R_{\sigma(i),\tau(j)}   - \sum_{i=1}^{n} \nabla_\theta g_\theta(\Tilde x_i)\Tilde U_i\sum_{j=1}^{m} R_{\Tilde \sigma(i),\tau(j)} \Big\|_2 \label{eq:decomp1}\\
   & \quad +2\  \Big \| \nabla_\theta g_\theta(x_1)\sum_{j=1}^{m} R_{\sigma(1),\tau(j)} V_j  -  \nabla_\theta g_\theta(\Tilde x_1)\sum_{j=1}^{m} R_{\Tilde \sigma(1),\tau(j)}V_j \Big\|_2 \label{eq:decomp2} \\
   & \quad +  2\  \Big \| \sum_{i=2}^{n} \nabla_\theta g_\theta(x_i)\sum_{j=1}^{m} V_j (R_{\sigma(i),\tau(j)}   -  R_{\Tilde \sigma(i),\tau(j)} )\Big\|_2 \label{eq:decomp3}
\end{align}
We can bound independently each term in the decomposition, 
\begin{align}
    (\ref{eq:decomp1}) &= \frac{2}{n} \Big\| \sum_{i=1}^{n} \nabla_\theta g_\theta(x_i)U_i   -  \nabla_\theta g_\theta(\Tilde x_i)\Tilde U_i \Big\|_2 \notag \\
    &=\frac{2}{n}
 \Big\| \nabla_\theta g_\theta(x_1)U_1   - \nabla_\theta g_\theta(\Tilde x_1)\Tilde U_1 \Big\|_2 \notag \\
 &\leq \frac{2}{n}\Bigl(|U_1|\|\nabla_\theta g_\theta(x_1)\|_2 +|\Tilde U_1|\|\nabla_\theta g_\theta(\Tilde x_1)\|_2 \Bigr) \notag \\
 &\leq \frac{4L_1M}{n}\notag \\ 
 (\ref{eq:decomp2}) &\leq 2L_1M\Bigl(\sum_{j=1}^{m} R_{\sigma(1),\tau(j)}+\sum_{j=1}^{m} R_{\Tilde \sigma(1),\tau(j)}\Bigr)\notag \\
 &= \frac{4L_1M}{n}\notag \\
 (\ref{eq:decomp3})& \leq  2L_1\   \sum_{i=2}^{n} \Big |\sum_{j=1}^{m} V_j (R_{\sigma(i),\tau(j)}   -  R_{\Tilde \sigma(i),\tau(j)} )\Big| \notag\\
 &=2L_1\   \sum_{i=2}^{n} \Big |\sum_{j=1}^{m} V_{(j)} (R_{\sigma(i),j)}   -  R_{\Tilde \sigma(i),j} )\Big| \label{eq:last_inequality_2}
 \end{align}
 The last equality is a simple consequence of $V_j=V_{(\tau(j))}$ and reindexing the sum. To bound the last expression, it is useful to see that all the terms $\sum_{j=1}^{m} V_{(j)} (R_{\sigma(i),j)}   -  R_{\Tilde \sigma(i),j} )$ have the same sign, for $i=2,\ldots,n$. This will follow from the relationship between the permutations $\sigma$ and $\Tilde \sigma$. For instance, if $\Tilde \sigma(1)<\sigma(1)$, it follows that
 \begin{enumerate}
     \item[a)] \textit{$\Tilde \sigma(i)\geq \sigma(i)$ for every $i\geq 2$}.    
     Remember that $\sigma(i)$ denotes the position of $U_i$ in the ordered statistic $(U_{(1)},\ldots,U_{(n)})$, and $\Tilde \sigma(i)$ denotes the position of $\Tilde U_i$ in the ordered statistic $(\Tilde U_{(1)},\ldots,\Tilde U_{(n)}')$. Recall also that $\Tilde U_i=U_i$ for every $i\geq 2$. Therefore, $\Tilde \sigma(1)<\sigma(1)$ implies that $\Tilde U_1 < U_1$, and 
     \begin{itemize}
         \item If $\sigma(i)<\Tilde \sigma(1)$, then $ \Tilde \sigma(i)=\sigma(i)$.
         \item If $\sigma(i)=\Tilde \sigma(1)$, then $ \Tilde \sigma(i)=\sigma(i)$ if $U_i<\Tilde U_1$, and $ \Tilde \sigma(i)=\sigma(i)+1$ if $U_i>\Tilde U_1$.
          \item If $\Tilde \sigma(1)<\sigma(i)<\sigma(1)$, then $ \Tilde \sigma(i)=\sigma(i)+1$.
           \item If $\sigma(i)>\sigma(1)$, then $ \Tilde \sigma(i)=\sigma(i)$.
     \end{itemize}
     \item[b)] \textit{$\sum_{j=1}^{m} V_{(j)} (R_{\sigma(i),j)}   -  R_{\Tilde \sigma(i),j} ) \leq 0$ for every $i=2,\ldots,n$}. If we denote by $G$ the empirical distribution function of $V_1,\ldots,V_{m}$, then by definition of $R_{i,j}$,
     \begin{align*}
         \sum_{j=1}^{m} V_{(j)} &(R_{\sigma(i),j)}   -  R_{\Tilde \sigma(i),j} ) = \\
         &=\sum_{j=1}^{m} V_{(j)} \Biggl(\int_{\frac{\sigma(i)-1}{n}}^{\frac{\sigma(i)}{n}} I\Bigl( \frac{j-1}{m}<t\leq\frac{j}{m}\Bigr)dt-\int_{\frac{\Tilde \sigma(i)-1}{n}}^{\frac{\Tilde \sigma(i)}{n}} I\Bigl( \frac{j-1}{m}<t\leq\frac{j}{m}\Bigr)dt\Biggr)\\
         &= \int_{\frac{\sigma(i)-1}{n}}^{\frac{\sigma(i)}{n}}\sum_{j=1}^{m} V_{(j)}  I\Bigl( \frac{j-1}{m}<t\leq\frac{j}{m}\Bigr)dt-\int_{\frac{\Tilde \sigma(i)-1}{n}}^{\frac{\Tilde \sigma(i)}{n}}\sum_{j=1}^{m} V_{(j)}  I\Bigl( \frac{j-1}{m}<t\leq\frac{j}{m}\Bigr)dt\\
         &= \int_{\frac{\sigma(i)-1}{n}}^{\frac{\sigma(i)}{n}} G^{-1}(t)dt-\int_{\frac{\Tilde \sigma(i)-1}{n}}^{\frac{\Tilde \sigma(i)}{n}}G^{-1}(t)dt \\
         &= \int_{\frac{\sigma(i)-1}{n}}^{\frac{\sigma(i)}{n}} G^{-1}(t)-G^{-1}\Bigl(t+\frac{\Tilde \sigma(i)-\sigma(i)}{n}\Bigr)dt \leq 0
     \end{align*}
     for every $i=2,\ldots,n$, where the last bound is consequence of $(a)$ and the monotonicity of $G^{-1}$.
 \end{enumerate}
 Similarly, if $\Tilde \sigma(1)>\sigma(1)$, then $\Tilde \sigma(i)\leq\sigma(i)$ for every $i\geq 2$, which implies  $\sum_{j=1}^{m} V_{(j)} (R_{\sigma(i),j)}   -  R_{\Tilde \sigma(i),j} ) \geq 0$. Finally, the case $\Tilde \sigma(1)=\sigma(1)$ is trivial, since this implies $\Tilde \sigma=\sigma$. Therefore, in any of the cases, the sign property implies that 
 \begin{align*}
     (\ref{eq:last_inequality_2})& =  2L_1\  \Big | \sum_{i=2}^{n} \sum_{j=1}^{m} V_{(j)} (R_{\sigma(i),j)}   -  R_{\Tilde \sigma(i),j} )\Big| \\
     &= 2L_1\  \Big |  \sum_{j=1}^{m} V_{(j)} \sum_{i=2}^{n}(R_{\sigma(i),j)}   -  R_{\Tilde \sigma(i),j} )\Big| \\
     &= 2L_1\  \Big |  \sum_{j=1}^{m} V_{(j)} \sum_{i=1}^{n}(R_{\sigma(i),j)}   -  R_{\Tilde \sigma(i),j} ) - \sum_{j=1}^{m} V_{(j)} (R_{\sigma(1),j)}   -  R_{\Tilde \sigma(1),j} )\Big| \\
     &=  2L_1\  \Big |  \sum_{j=1}^{m} V_{(j)} (R_{\sigma(1),j)}   -  R_{\Tilde \sigma(1),j} )\Big| \\
     &\leq 2L_1M \Bigl( \sum_{j=1}^{m} R_{\sigma(1),\tau(j)}+\sum_{j=1}^{m} R_{\Tilde \sigma(1),\tau(j)}\Bigr) \\
     &= \frac{4L_1M}{n}
 \end{align*}
 Putting everything together, we can conclude that, 
\begin{align*}
    \|\nabla_\theta W_2^2&(g_\theta\# P_{\vect{X}}, h_\theta \# P_{\vect{Z}})- \nabla_\theta W_2^2(g_\theta\# P_{\vect{\Tilde X}}, h_\theta \# P_{\vect{Z}}) \|_2 \leq \frac{12L_1M}{n} + \frac{4L_2 M}{n} \ .
\end{align*}
\end{proof}

\subsection{Other Proofs}
\begin{proof}[Proof of Lemma~\ref{lemma:subsampling}]
    See the proof of Theorem 29 in \cite{DBLP:journals/corr/abs-2210-00597} which gives the result up to a minor adaptation. The term $\max \p{\frac{n_{1}'}{n_1}, \dots,  \frac{n_k'}{n_k}}$ indeed comes from considering the worst case analysis depending on which category the differing point is in.
\end{proof}

\begin{proof}[Proof of Corollary~\ref{th:fairandprivate}] 

Formally, with the notation of \Cref{definition:neighboring}, define for the first part $\mathcal D = \mathcal D_0^{n_0}\times \mathcal D_1^{n_1}$, where $\mathcal D_j = \mathcal X \times \mathcal Y \times \{j\}$ in the supervised case, and  $\mathcal D_j = \mathcal X  \times \{j\}$ in the unsupervised case, for $j=0,1$. Applying \cref{theorem:gradient_sensitivity_sliced} with $g_\theta = h_\theta$, we can bound the sensitivity of $\nabla_\theta \mathscr L^{SP}_\alpha(g_\theta)$ by \eqref{eq:sensitivity_SP}.
    
    For the second part, consider $\mathcal D =\prod_{j,k} \mathcal D_{j,k}^{n_{j,k}}$, where $\mathcal D_{j,k} = \mathcal X \times \{j\} \times \{k\}$, for $j \in \{0,1\}, k\in\{0,\ldots,R-1\}$. Under the relation $\sim_{2R}$, given two neighboring datasets, all the terms except one are the same in the sum in \eqref{eq:loss_SP}, and similarly for the gradient expression. More precisely,  under the assumptions of the theorem, with the notation adopted in \cref{section:fairness},
    \begin{align*}
        &\sup_{\vect D \sim_{2R} 
 \vect{\Tilde D}} \Big\|  \frac{1}{R} \sum_{k=1}^K \nabla_\theta{SW}_2^2\Bigl(  \varphi_\theta \# P_{\vect X_{0,k}}, \varphi_\theta \# P_{\vect X_{1,k}} \Bigr) -  \frac{1}{R} \sum_{k=1}^K \nabla_\theta{SW}_2^2\Bigl(  \varphi_\theta \# P_{\vect{\Tilde X}_{0,k}}, \varphi_\theta \# P_{\vect{\Tilde X}_{1,k}} \Bigr) \Big\|_2 \\
  &\leq \frac{1}{R}   \sup_{\vect D \sim_{2R} 
 \vect{\Tilde D}} \sum_{k=1}^K \Big\|  \nabla_\theta{SW}_2^2\Bigl(  \varphi_\theta \# P_{\vect X_{0,k}}, \varphi_\theta \# P_{\vect X_{1,k}} \Bigr) -   \nabla_\theta{SW}_2^2\Bigl(  \varphi_\theta \# P_{\vect{\Tilde X}_{0,k}}, \varphi_\theta \# P_{\vect{\Tilde X}_{1,k}} \Bigr) \Big\|_2 \\     
   &\leq \frac{1}{R}
   \sup_{\substack{k = 0, \ldots, R - 1 \\ (\vect X_{0,k},\vect X_{1,k}) \sim_{2} 
 (\vect{\Tilde X}_{0,k},\vect{\Tilde X}_{1,k})}} \hspace{-0.6cm} 
 \Big\|  \nabla_\theta{SW}_2^2\Bigl(  \varphi_\theta \# P_{\vect X_{0,k}}, \varphi_\theta \# P_{\vect X_{1,k}} \Bigr) -\nabla_\theta{SW}_2^2\Bigl(  \varphi_\theta \# P_{\vect{\Tilde X}_{0,k}}, \varphi_\theta \# P_{\vect{\Tilde X}_{1,k}} \Bigr) \Big\|_2 \\  
 &\leq \frac{1}{R}\max_{k=0\ldots,R-1} \frac{16ML}{\min \{n_{0,k},n_{1,k}\}} = \frac{1}{R}\frac{16ML}{\min_{j,k} \{n_{j,k}\}} 
    \end{align*}
which implies \eqref{eq:sensitivity_EO}.
    \end{proof}

\end{document}